\RequirePackage{etoolbox}
\patchcmd{\bibliographystyle}{#1}{myplainnat}{}{}
\documentclass{colt2017} 

\usepackage{renewtheorem}
\usepackage[font={small}]{caption}

\usepackage[utf8]{inputenc} 
\usepackage[T1]{fontenc}    

\usepackage{color}
\usepackage{pdfsync}

\newtheorem{fact}{Fact}

\usepackage{etoolbox,xparse,xspace,setspace}
\usepackage{array,multirow,booktabs} 
\usepackage{bigstrut} 
\usepackage{bbm} 
\usepackage{wrapfig}

\graphicspath{{figs/}}

\usepackage{tikz,pgfplots,tikz-3dplot}
\usetikzlibrary{arrows,shapes,positioning,calc,intersections,through}
\tdplotsetmaincoords{55}{135}        
\tikzstyle{cell}=[dashed,thick]
\tikzstyle{simplex}=[thick]
\newcommand{\placefiglabel}[1]{\node at (.9,0,1.4) {#1};}

\usepackage{pifont}
\newcommand{\cmark}{\textcolor{darkgreen}{\ding{51}}}
\newcommand{\xmark}{\textcolor{red}{\ding{55}}}

\newcommand{\Comments}{1}
\newcommand{\FutureComments}{1}
\definecolor{gray}{gray}{0.5}
\definecolor{darkgreen}{rgb}{0,0.5,0}
\definecolor{darkorange}{rgb}{.8,0.4,0}
\newcommand{\mynote}[2]{\ifnum\Comments=1\textcolor{#1}{#2}\fi}

\newcommand{\future}[1]{\ifnum\FutureComments=1\textcolor{gray}{[FUTURE: #1]}\fi}

\newcommand{\Var}{\mathop{\mathrm{Var}}}

\newcommand{\loss}{\ell}

\newcommand{\D}{\mathcal{D}}
\newcommand{\E}{\mathbb{E}}
\newcommand{\F}{\mathcal{F}}

\renewcommand{\O}{\mathcal{O}}
\renewcommand{\P}{\mathcal{P}}

\newcommand{\R}{\mathcal{R}}
\let\oldS\S 
\newcommand{\sect}{\mbox{\oldS\hspace{-.1mm}}}
\renewcommand{\S}{\mathcal{S}}

\newcommand{\Y}{\mathbb{Y}}
\newcommand{\X}{\mathcal{X}}

\renewcommand{\vec}[1]{{\mathbf{#1}}}

\renewcommand{\o}{\mathit{o}}

\newcommand{\x}{\vec{x}}
\newcommand{\y}{\vec{y}}

\newcommand{\conv}{\convhull}

\newcommand{\toto}{\rightrightarrows}

\newcommand{\ones}{\mathbbm{1}}

\def\reals{\mathbb{R}}

\newcommand{\argmin}{\mathop{\mathrm{argmin}}}

\usepackage{mathtools}
\DeclarePairedDelimiter{\ceil}{\lceil}{\rceil}
 \renewcommand{\ones}{\mathbbm{1}} 
\renewcommand{\conv}{\mathrm{conv}} 
\renewcommand{\Y}{\mathcal{Y}} 
\renewcommand{\O}{\Y} 
\renewcommand{\o}{y}

\renewcommand{\Comments}{0}
\renewcommand{\FutureComments}{0}

\title[Multi-Observation Elicitation]{Multi-Observation Elicitation}
\usepackage{times}
\coltauthor{
  \Name{Sebastian {Casalaina-Martin}} \Email{casa@math.colorado.edu}\\
  \addr CU Boulder
  \AND
  \Name{Rafael Frongillo} \Email{raf@colorado.edu}\\
  \addr CU Boulder
  \AND
  \Name{Tom Morgan} \Email{tdmorgan@seas.harvard.edu}\\
  \addr Harvard
  \AND
  \Name{Bo Waggoner} \Email{bwag@seas.upenn.edu}\\
  \addr UPenn
}

\newif\ifHideFoot
\HideFoottrue  
\HideFootfalse  

\ifHideFoot
\usepackage[cmtip,all,matrix,arrow,tips,curve]{xy}
\newcommand{\Yano}[1]{}
 
\else 
\newcommand{\marg}[1]{\normalsize{{
\color{red}\footnote{{\color{blue}#1}}}
{
\marginpar[\vskip -.25cm \hskip -.8 cm {\color{red}\thefootnote$\implies$}]
{\vskip
-.2cm{\color{red}$\impliedby$\thefootnote}}}}}
\newcommand{\Yano}[1]{\marg{(Yano) #1}}

\usepackage{amssymb,amsmath,amscd,mathrsfs,graphicx, color, mathtools, enumerate}
\usepackage[cmtip,all,matrix,arrow,tips,curve]{xy}
\fi

\newif\ifHideDiag
\HideDiagtrue

\begin{document}

\maketitle

\begin{abstract}
  We study loss functions that measure the accuracy of a prediction based on multiple data points simultaneously.
  To our knowledge, such loss functions have not been studied before in the area of property elicitation or in machine learning more broadly.
  As compared to traditional loss functions that take only a single data point, these multi-observation loss functions can in some cases drastically reduce the dimensionality of the hypothesis required.
  In elicitation, this corresponds to requiring many fewer reports; in empirical risk minimization, it corresponds to algorithms on a hypothesis space of much smaller dimension.
  We explore some examples of the tradeoff between dimensionality and number of observations, give some geometric characterizations and intuition for relating loss functions and the properties that they elicit, and discuss some implications for both elicitation and machine-learning contexts.
\end{abstract}

\begin{keywords}
Property elicitation, loss functions, empirical risk minimization.
\end{keywords}

\section{Introduction}
\label{sec:introduction}

In machine learning and statistics, empirical risk minimization (ERM) is a dominant inference technique, wherein a model is chosen which minimizes some loss function over a data set.  As the choice of loss function used in ERM may have a large impact on the model chosen, how should one choose this loss?  A growing body of work in \emph{property elicitation} seeks to answer this question, by viewing a loss function as ``incentivizing'' the prediction of a particular conditional statistic~\citep{lambert2008eliciting,gneiting2011making,steinwart2014elicitation,frongillo2015vector-valued,agarwal2015consistent}; for example, it is well-known that squared loss elicits the mean, and hence least-squares regression finds the best fit to the conditional means of the data.\footnote{There are also contributions from microeconomics, and crowdsourcing in particular, where one wishes to incentivize humans rather than algorithms, but the mathematics is the same.}

A natural question, which is still open in the vector-valued case, is the following: for which conditional statistics do there exist loss functions which elicit them?  Positive examples include the mean, median, other quantiles, moments, and several others.  Perhaps surprisingly, however, there are negative examples as well: it is well-known that the variance is not elicitable, meaning there is no loss function for which minimizing the loss will yield the variance of the data or distribution.

The usual approach to dealing with non-elicitable statistics is called \emph{indirect elicitation}: elicit other conditional statistics from which one can compute the desired statistic.  For example, the variance of a distribution can be written as (2nd moment) - (1st moment)$^2$, and as mentioned above, moments are elicitable.  The question of how many such auxiliary statistics are required gives rise to the concept of \emph{elicitation complexity}; since the variance cannot be elicited with one but can with two, we say it is 2-elicitable~\citep{lambert2008eliciting,frongillo2015elicitation-2}.

In this paper, we explore an alternative approach to dealing with non-elicitable statistics, by allowing the loss function to \emph{depend on multiple data points} simultaneously.  In the language of property elicitation, this corresponds to loss functions such as $\loss(r,y_1,y_2)$ which judge the ``correctness'' of the report $r$ based on two (or more) observations $y_1$ and $y_2$.  Assuming these observations are drawn independently from the same distribution, this intuitively gives the loss function more power, and could potentially render previously non-elicitable statistics elicitable.  In fact, the variance is one such example: if $y_1$ and $y_2$ are both drawn i.i.d.~from $p$, it is easy to see that $\tfrac 1 2 (y_1-y_2)^2$ will be an unbiased estimator for the variance of $p$, hence $\loss(r,y_1,y_2) = (r-\tfrac 1 2 (y_1-y_2)^2)^2$ elicits the variance for the usual reason that squared error elicits expected values.
Examples of settings where such i.i.d.\ observations are readily obtained include: active learning, uncertainty quantification \& robust engineering design~\citep{beyer2007robust}, and replication of scientific experiments.

Beyond the variance, are there other non-elicitable statistics which we can elicit with multiple i.i.d.~observations?  Moreover, what is the tradeoff between the number of observations and the number of reports?  One would expect the elicitation complexity, in the usual number-of-reports sense, to drop as observations are added, but how fast is unclear.  Indeed, we will see several examples where the complexity drops dramatically, such a the $k$-norm of the distribution $p$.  In Section~\ref{sec:lower} we develop new techniques to prove complexity bounds using algebraic geometry, which show for example that the complexity of the $k$-norm drops from the support size of $p$ (minus 1) with 1 observation, to $1$ with $k$ observations.  We call the feasible (\# reports, \# observations) pairs the \emph{elicitation frontier}, for which the given statistic is elicitable, a concept we explore in Section~\ref{sec:elic-front}.

Finally, in Section~\ref{sec:regression} we apply multi-observation elicitation to regression.  Traditional elicitation complexity expresses a conditional statistic $\Gamma$ as a link of other statistics, but as we illustrate, situations can arise where these other statistics have a much more complicated relationship with the covariates than $\Gamma$ does.  We give an example where fitting a model to the conditional variance directly (using nearby data points as proxies for i.i.d.~observations) is much better than fitting separate models to the conditional first and second moments and combining these to obtain the variance.

\subsection{Related work}
Our work is inspired in part by~\cite{frongillo2015elicitation-1} which proposes a way to elicit the confidence (inverse of variance) of an agent's estimate of the bias of a coin by simply flipping it twice.
In our terminology, this follows from the fact that the variance is $(1,2)$-elicitable.
Multi-observation losses have been previously introduced to learn embeddings~\citep{hadsell2006dimensionality,schroff2015facenet:,ustinova2016learning}, though an explicit property/statistic is never discussed.

\section{Preliminaries}
\label{sec:preliminaries}

We are interested in a space $\O$ 
from which \emph{observations} $\o$ are drawn,
which will be a finite set unless otherwise specified.
We will denote by $\P\subseteq\Delta_\O$ a set of probability distributions of interest.
(Generally in this paper, $\P$ is simply the entire simplex.)
We refer to the set $\Delta_{\O^m}$ of all distributions on $m$ outcomes as the \emph{$m$-product space}.
To capture the assumption that we may collect $m\in \{1,2,\ldots\}$ observations which are each i.i.d.~from the same distribution $p\in\Delta_\O$, we will write $p^m \in \Delta_{\O^m}$ to denote their joint distribution, $p^m(\o_1,\ldots,\o_m) = \prod_i p(\o_i)$.
The set of all such distributions is denoted $\P^m = \{p^m : p \in \P\}\subseteq \Delta_{\O^m}$, which we will think of as a manifold in the $m$-product space.

With this notation in hand, we can define the central concepts in elicitation complexity in our context.
Properties include any typical statistic,\footnote{As defined, statistics like the median would not be included unless restrictions were placed on $\P$ for them to be single-valued (distributions in general may have multiple medians); we may instead extend our definition to include set-valued statistics, which would not substantially alter our results, and in fact we do lift this restriction in Section~\ref{sec:finite-properties}.} for instance, the mean when $\O \subseteq \reals$ is the property $\Gamma(p) = \sum_{\o} p(\o) \o$.
\begin{definition}[Property]
  A \emph{property} is a function $\Gamma:\P\to\R$, where $\R \subseteq \reals^k$ for some $k\geq 1$.
\end{definition}

Intuitively, properties represent the information desired about the data or underlying distribution.  $\R$ is sometimes called the \emph{report space}.
The central notion of property elicitation is the relationship between a loss function $\ell$ and the minimizer of its expected loss.  If this minimizer is a particular property $\Gamma$, we say $\ell$ \emph{elicits} $\Gamma$.  We simply extend this usual definition to allow for multiple observations in the expected loss.
\begin{definition}[Loss function, elicits]
  An \emph{$m$-observation loss function} is a function $\loss:\R\times\O^m\to\reals$, where $\loss(r,\o_1,\ldots,\o_m)$ is the loss for prediction $r\in\R$ scored against realized observations $\o_i\in\O$.
  We say $\loss$ \emph{(directly) elicits} a property $\Gamma:\P\to\R$ if for all $p\in\P$ we have $\{\Gamma(p)\} = \argmin_{r\in\R} \E_{(\o_1,\ldots,\o_m)\sim p^m}[\loss(r,\o_1,\ldots,\o_m)]$.
\end{definition}

It is useful to consider a property in terms of its \emph{level sets}, the set of distributions sharing the same particular value of the property.
For example, when the property is the mean of a distribution on $\{1,2,3,4\}$, both $p=\left(\frac{1}{2},0,0,\frac{1}{2}\right)$ and $p=\left(0,\frac{1}{2},\frac{1}{2},0\right)$ lie in the level set $\Gamma_{2.5}$.
\begin{definition}[Level set]
A \emph{level set} $\Gamma_r$ of a property $\Gamma:\P\to\R$ is, for $r \in \R$, the set of distributions with property $r$, i.e.  $\Gamma_r = \{p \in \P \; | \; \Gamma(p) = r\}$.
\end{definition}

An important technical condition on a property, and one which we will need for the notion of indirect elicitability, is that it be \emph{identifiable}, meaning that its level sets can be described by linear equalities.
\begin{definition}[Identifiable]
  A property $\Gamma:\P\to\R$, with $\R\subseteq\reals^k$, is \emph{identifiable with $m$ observations} if there exists some $V: \R \times \O^m\to\reals^k$ such that $\Gamma(p)=r \iff \E_{p^m}[V(r,\vec{\o})] = 0 \in \reals^k$, where $\vec{\o} = (\o_1,\ldots,\o_m)$ is drawn from $p^m$.
  We also say it is \emph{$m$-identifiable}.
\end{definition}

Identifiability is a geometric restriction on properties that is intuitively similar to continuity of the property (cf.~\citet{lambert2008eliciting, steinwart2014elicitation}).
Technically, observe that \emph{differentiable} loss functions generally elicit an identifiable property, as any local optimum should have $\sum_i \frac{\partial}{\partial r_i}\loss\left(r,\vec{\o}\right) = 0$, meaning that the gradient of $\loss$ itself gives an identification function.
Following \citet{frongillo2015vector-valued}, we will often assume that properties are identifiable.

Notice that any property can be ``indirectly'' elicited by using a proper scoring rule, which elicits the entire distribution, and then computing the property from the distribution.
But this requires a report of dimension $|\O|-1$, whereas to indirectly elicit the variance of $\o$, for example, requires just two reports, e.g. $r_1 = \E \o$ and $r_2 = \E \o^2$, along with a ``link function'' $\psi(\vec{r}) = r_2 - r_1^2$.
The question of \emph{elicitation complexity}, studied by~\cite{lambert2008eliciting} and~\cite{frongillo2015elicitation-2}, is how many dimensions $d$ are needed to indirectly elicit the property of interest $\Gamma$ via some elicitable $\hat\Gamma:\P\to\reals^d$; one hopes that $d$ is much smaller than $|\O|$.
Here we augment this question by another degree of freedom: how many dimensions $d$, \emph{and observations $m$}, are needed to indirectly elicit $\Gamma$?

\begin{definition}[$(d,m)$-elicitable]
  A property $\Gamma: \P\to\R$ is \emph{$(d,m)$-elicitable} if there exists a $d$-dimensional and identifiable property $\hat{\Gamma}: \P \to \hat\R$ where $\hat\R\subseteq \reals^d$, an $m$-observation loss function $\loss: \hat\R \times \O^m \to \reals$, and a ``link'' function $\psi: \hat\R \to\R$, such that
  \begin{center}\vspace*{-5pt}
    1. $\loss$ directly elicits $\hat{\Gamma}$, and~~~ 2.
      $\Gamma(p) = \psi \left( \hat{\Gamma}(p) \right)$.
  \end{center}\vspace*{-5pt}
  The \emph{elicitation frontier} of $\Gamma$ is the set of $(d,m)$ such that $\Gamma$ is $(d,m)$-elicitable, but neither $(d-1,m)$- nor $(d,m-1)$-elicitable.
\end{definition}
We may say that a property's ``report complexity'' is $d$ if $(d,1)$ lies on its frontier, and its ``observation complexity'' is $m$ if $(1,m)$ does.

\subsection{Illustrative example}

Recall our observation that the variance is not $(1,1)$-elicitable, and the ``traditional'' fix is to utilize $(2,1)$-elicitability: minimize a loss function over two dimensions (say first and second moments), mapping the result to the variance via a link function.
We observed instead that it is possible to utilize $(1,2)$-elicitability: minimize a loss function that takes two observations over a single scalar, the variance itself.
Can this tradeoff be more extreme?  In particular, are there cases where additional observations drastically decrease the report complexity?
Consider the 2-norm of a distribution: $\Gamma(p) = \|p\|_2 = \sqrt{\sum_{\o} p(\o)^2}$.  We show in Section \ref{sec:norms} that $\|p\|_2$ has report complexity $|\Y|-1$ (where $\Y$ is the outcome set) for 1 observation -- no single-observation loss function can do better than solving for the entire distribution.
However, recall that $\|p\|_2^2 = \sum_y p_y^2 = \Pr[\o_1 = \o_2]$ for two i.i.d. observations $\o_1,\o_2$, or in other words, $\|p\|_2^2 = \E_p \ones\{\o_1 = \o_2\}$.
The two-norm is actually elicitable with two observations and a single dimension using e.g. loss function $\loss(r,y_1,y_2) = (r - \ones\{y_1=y_2\})^2$, then simply computing $\|p\|_2 = \sqrt{r}$.
In other words, the two-norm's elicitation frontier on $\O$ consists of the points $(|\O|-1, 1)$ and $(1,2)$.

The goal for this paper is to investigate the (algebraic-)geometric reasons underpinning why a property might have low or high observation complexity, as well as providing general results and examples based on these ideas.
We next introduce the geometric foundations for this investigation.

\section{Geometric Fundamentals}
The most basic (yet powerful) lower bound in property elicitation says that elicitable properties' level sets must be convex sets \citep{lambert2008eliciting}.
Indeed, this is used to prove the variance is not (1,1)-elicitable; but the variance \emph{is} elicitable with two observations.
The geometry is not ``broken'' here, but merely lives in a higher-dimensional space.
When reasoning about eliciting a property $\Gamma:\P\to\reals$ using $m$ observations, it often useful to instead think of eliciting the property using \emph{a single random draw from a distribution on $m$-tuples of outcomes}.

\begin{remark}\label{R:m-elicit}
  Since $\mathcal P$ is 
  isomorphic to $\mathcal P^m$, a property $\Gamma:\mathcal P\to \R$ is directly elicitable with $m$ observations if and only if the induced property $\Gamma^m:\mathcal P^m\to \R$ is directly elicitable with $1$ observation.  
In particular, a sufficient condition for $(d,m)$-elicitability of $\Gamma$ is that there exists some $(d,1)$-elicitable $\Gamma': \Delta_{\O^m} \to \R$ that coincides with $\Gamma^m$ on $\P^m$.
One can elicit $\Gamma$ using the same loss that elicits $\Gamma'$, treating the $m$-tuple of observations as a single draw from the larger space.
\end{remark}

This gives us one initial way to demonstrate that a property is elicitable with $m$ observations.
For example, the loss function $\loss(r, a,b) = \left(r - \frac{1}{2}(a-b)^2\right)^2$ elicits the variance with two observations $a,b$, but if we consider distributions on all of $\O \times \O$, including non-i.i.d. distributions, it actually is still a valid loss function eliciting a property that coincides with the variance when $a,b$ are i.i.d.
To see this, just note that it still elicits an expectation: $\sum_{a,b} p'(a,b) \frac{1}{2}(a-b)^2$ where $p'$ is a distribution on $\reals^2$.

\ifHideDiag
\begin{wrapfigure}{R}{0.3\textwidth}
  \includegraphics[trim=5 30 109 42, clip, width=0.3\textwidth]{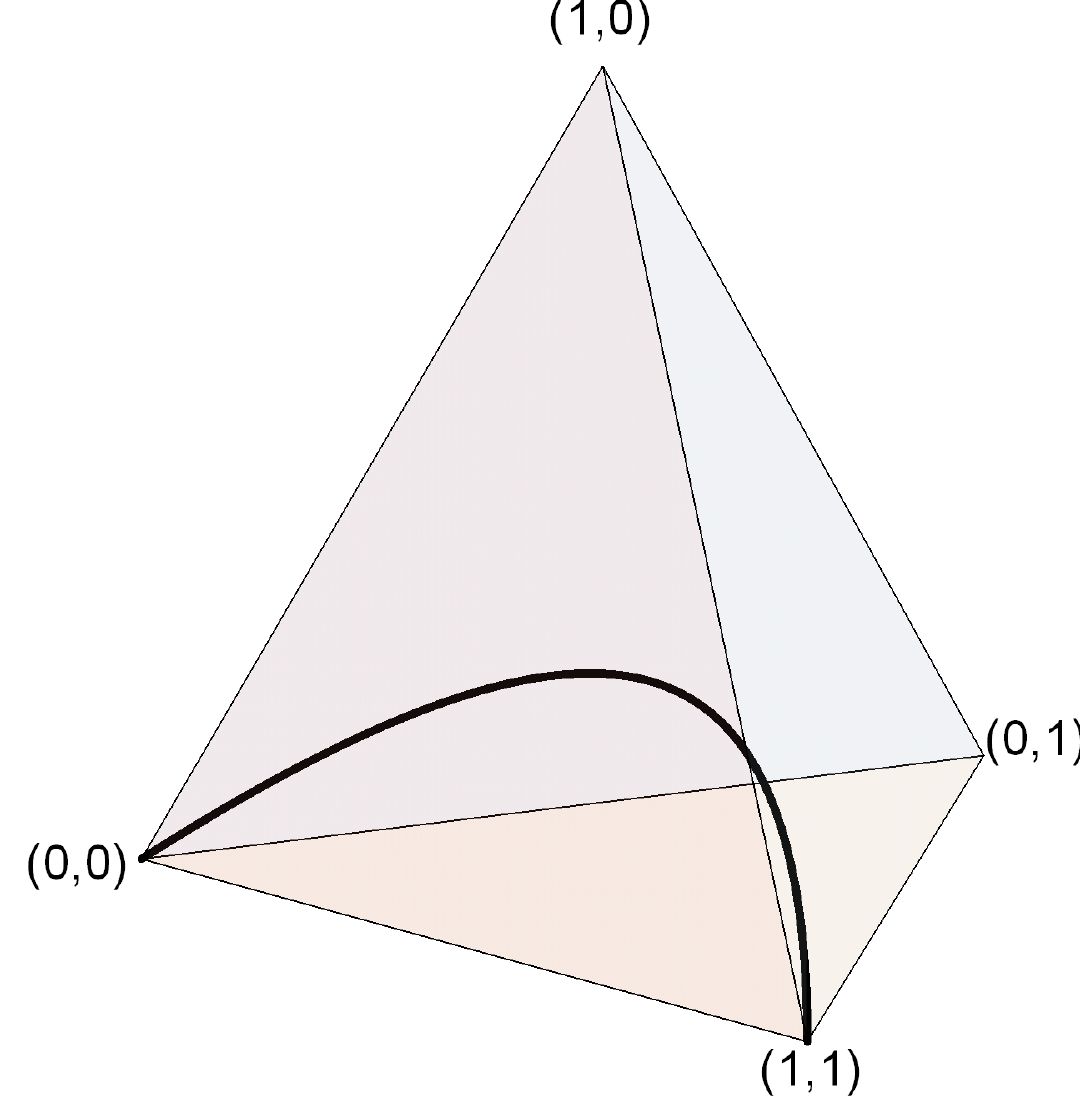}
  \vspace{-1.75em}
  \caption{The two outcome, two observation probability simplex $\Delta_{\O^2}$ where $\O=\{0,1\}$.  The arc is the space of i.i.d. distributions $(\Delta_\O)^2$.}
  \label{fig:2outcome_space}
  \vspace{-1em}
\end{wrapfigure}
\else
\fi

However, considering elicitation on the larger space $\Delta_{\Y^m}$ does not resolve the problem in either the necessary or sufficient directions.  First, $\P^m$ is not a convex set for $m > 1$, so conditions on the convexity of level sets do not naturally extend here.  An example of this is shown in Figure \ref{fig:2outcome_space}.  Second, coming up with an ``extended property'' may be difficult or non-obvious.
For example, it is not so clear whether the above loss function elicits anything \emph{natural} on $\Delta_{\O^2}$ (it is not the covariance, for instance, which is zero for i.i.d. distributions).
More fundamentally, it is not clear whether such extensions should generally exist.
(Proving or constructing a counterexample is an interesting open problem.)
In general, we hope to be able to accomplish much more by restricting to $\P^m$ because it is only a tiny $|\Y|$-dimensional manifold in a $|\Y|^m$-dimensional space.

A tighter sufficient condition is given by \citet{frongillo2014general}, which states that essentially all loss functions eliciting a property on any set, such as $\P^m$, also elicit some ``extension'' of that property on the convex hull of that set.
So while the higher-dimensional approach is helpful, it does not preclude reasoning about the space $\P^m$ as a manifold inside $\Delta_{\Y^m}$.

Most significantly, $\P^m$ is not a convex space, which makes lower bounds on elicitation complexity nontrivial as well.
However, the result of \citet{frongillo2014general} shows that it suffices to provide lower bounds for elicitation on the convex hull of $\P^m$, which we will denote $\conv(\P^m)$.  
Quite naturally then, we explore what leverage we can gain by reasoning about $\conv(\P^m)$.

\begin{theorem} \label{thm:direct_lower}
The property $\Gamma:\P\to\reals$ is not directly elicitable with $m$ observations if there exists $r_1, r_2 \in \Gamma(\P)$, $p_{1,1},\ldots,p_{1,k_1} \in \Gamma_{r_1}$, $p_{2,1},\ldots,p_{2,k_2} \in \Gamma_{r_2}$, $\lambda_{1,1},\ldots,\lambda_{1,k_1} \in [0,1]$ and $\lambda_{2,1},\ldots,\lambda_{2,k_2} \in [0,1]$ such that $r_1 \neq r_2$, $\sum_{i=1}^{k_1} \lambda_{1,i} = 1$, $\sum_{i=1}^{k_2} \lambda_{2,i} = 1$ and $$\sum_{i=1}^{k_1} \lambda_{1,i} p^m_{1,i} = \sum_{i=1}^{k_2} \lambda_{2,i} p^m_{2,i}.$$
\end{theorem}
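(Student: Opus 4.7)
The plan is to argue by contradiction using linearity of the expected loss in the $m$-product space, much in the spirit of the convex-level-set lower bound for single-observation elicitation alluded to at the beginning of Section~3. The key observation is that although $p \mapsto p^m$ is highly nonlinear in $p$, the expected loss $r \mapsto \E_{\vec{\o}\sim p^m}[\loss(r,\vec{\o})]$ is \emph{linear} as a function of the joint distribution $p^m \in \Delta_{\O^m}$. This lets us transplant the standard ``convex level sets'' argument to the product space even though $\P^m$ itself is not convex.

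More concretely, first I would suppose for contradiction that some $\loss:\R\times\O^m\to\reals$ directly elicits $\Gamma$ with $m$ observations, and write $\ell_p(r) \defeq \E_{\vec{\o}\sim p^m}[\loss(r,\vec{\o})] = \sum_{\vec{\o}\in\O^m} p^m(\vec{\o})\,\loss(r,\vec{\o})$, which is linear in $p^m$. Since $p_{1,i}\in\Gamma_{r_1}$ for each $i$, direct elicitation gives that $r_1$ is the unique minimizer of $\ell_{p_{1,i}}$; for any $r'\neq r_1$ we therefore have $\ell_{p_{1,i}}(r') > \ell_{p_{1,i}}(r_1)$ strictly. Taking a convex combination with weights $\lambda_{1,i}\in[0,1]$ summing to $1$ (so at least one weight is strictly positive) preserves the strict inequality, so $r_1$ is the unique minimizer of $\sum_{i=1}^{k_1}\lambda_{1,i}\ell_{p_{1,i}}$. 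The same reasoning shows $r_2$ is the unique minimizer of $\sum_{i=1}^{k_2}\lambda_{2,i}\ell_{p_{2,i}}$.

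Next I would exploit linearity in $p^m$ to rewrite each of these sums as a single expected loss against the respective mixture distribution on $\O^m$:
\[
\sum_{i=1}^{k_j}\lambda_{j,i}\ell_{p_{j,i}}(r) \;=\; \sum_{\vec{\o}\in\O^m}\Bigl(\sum_{i=1}^{k_j}\lambda_{j,i}\,p^m_{j,i}(\vec{\o})\Bigr)\loss(r,\vec{\o}), \qquad j\in\{1,2\}.
\]
The hypothesis $\sum_{i}\lambda_{1,i}p^m_{1,i} = \sum_{i}\lambda_{2,i}p^m_{2,i}$ makes these two functions of $r$ \emph{identically equal}, so they share the same unique minimizer. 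But we have just shown that this minimizer is $r_1$ on one side and $r_2$ on the other, contradicting $r_1\neq r_2$.

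I do not expect any serious obstacle here: the argument is essentially the observation that expected loss is an affine functional of the joint distribution, and the convex-hull extension philosophy flagged right before the theorem is exactly what lets this work on $\conv(\P^m)$ even though $\P^m$ itself is not convex. The only mild care needed is to ensure that uniqueness of minimizers survives convex combination, which is handled by the strictness of the defining inequalities together with the fact that at least one $\lambda_{j,i}$ in each convex combination is strictly positive.
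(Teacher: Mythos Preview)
Your argument is correct and follows the same logical skeleton as the paper's: show that the expected loss against a convex mixture of $m$-product distributions from a single level set is uniquely minimized at that level-set value, then derive a contradiction from the hypothesis that the same mixture arises from two distinct level sets.

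The one noteworthy difference is that the paper routes this through the structural characterization of eliciting losses from \citet{frongillo2014general} (their Theorem~3.5), writing $\loss(r,\o)=\phi(r)(p_r-\o)-G(p_r)$ and then computing $\E_{\o\sim q}\loss(r,\o)=\sum_i\lambda_i\,\E_{\o\sim p_i^m}\loss(r,\o)$ explicitly from that form. But as your proof makes clear, the only property of $\loss$ actually used in that computation is that $q\mapsto\E_{\o\sim q}\loss(r,\o)$ is affine in $q$, which holds for \emph{any} loss function by definition of expectation. So your direct appeal to linearity is strictly more elementary: it avoids importing an external characterization theorem that is not actually needed here. The paper's route does have the side benefit of situating the result within the convex-analytic framework it later uses, but for the purpose of proving this theorem alone, your argument is cleaner.
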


\ifHideDiag
\begin{figure}
	\centering
\includegraphics[width=.5\textwidth]{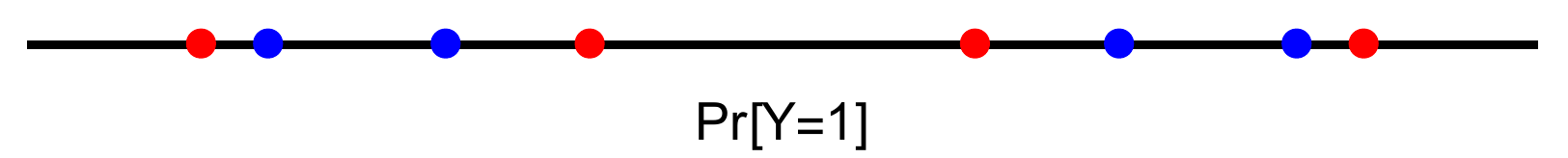}
	\includegraphics[width=.8\textwidth]{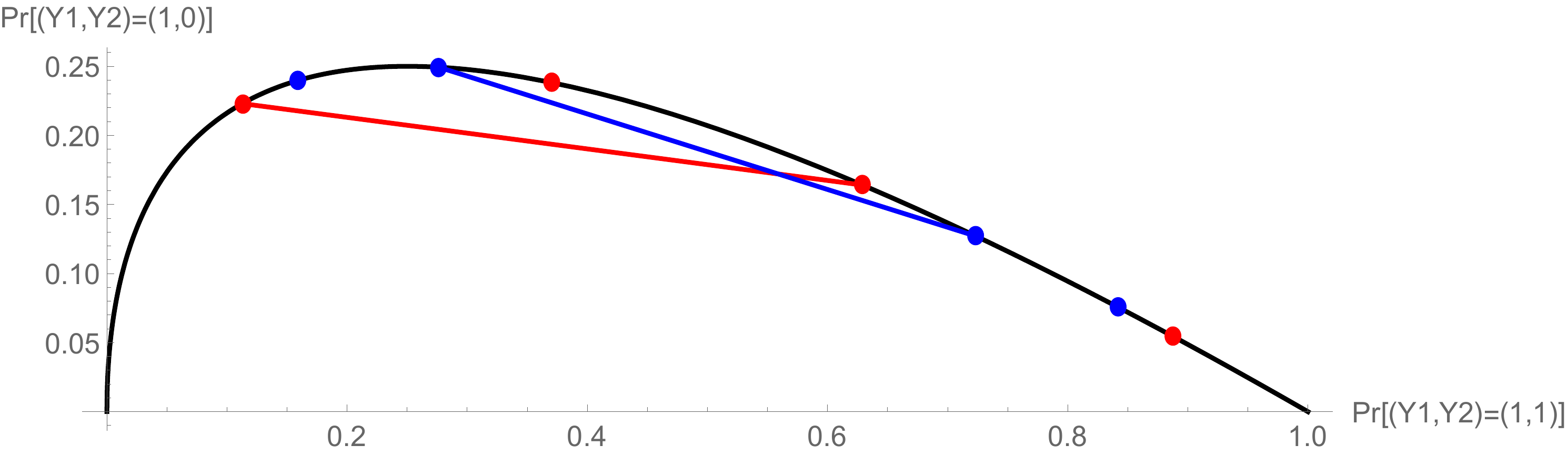}
	\caption{\textbf{Top:} The red dots and blue dots are each a level set of the fourth central moment of a Bernoulli random variable $Y \sim p$. These correspond to the distributions with fourth central moments .07 and .08 respectively. \textbf{Bottom:} The curve is $\Delta_{\O^2}$ projected into $\reals^2$, and the colored dots are the level sets of the example above projected into this space.  The lines demonstrate that there is a point in $\conv(\Delta_{\O^2})$ that can be written as a convex combination of either of the two level sets. }
	\label{fig:4th_moment}
  \vspace{-2em}
\end{figure}
\else
\fi

In other words, a property is not elicitable if there is a convex combination of one of its level sets in the $m$-product space that equals a convex combination of another one of its level sets in the $m$-product space.  

Theorem \ref{thm:direct_lower} allows us to prove for example that the fourth central moment is not directly elicitable with two observations.  Consider a Bernoulli random variable $Y \sim p$, then two of the level sets of the fourth central moment $\Gamma(p)=\E_{Y \sim p}[(Y-E_{Y \sim p}[Y])^4]$ are given in Figure \ref{fig:4th_moment}.  When we project these level sets into the 2-product space we can easily find a pair of points from each level set whose connecting lines intersects in $\conv(\Delta_{\O^m})$.  These lines are convex combinations of points in the same level set, so by Theorem \ref{thm:direct_lower} the lines' intersection implies that $\Gamma$ is not directly elicitable with two observations.

\subsection{Finite Properties}
\label{sec:finite-properties}
\emph{Finite properties} are those where $\R$, the range of $\Gamma$, is a finite set.
This corresponds to a ``multiple-choice question''~\citep{lambert2009eliciting}.
In this section, we must allow $\Gamma: \P \rightrightarrows \R$ to be a set-valued function, possibly assigning multiple possible correct reports to a single distribution; this is necessary for ``boundary'' cases, such as the mode of the uniform distribution on a finite set.
(Similarly, we cannot require identifiability.)
We have a finite set of outcomes $\Y$, the distributions considered are all $\P = \Delta_{\Y}$, and $\Gamma(p)$ must be nonempty.

We are interested in understanding which finite properties can be elicited with $m$ observations.
Previously, this question was studied for the case of one observation by \citet{lambert2011elicitation}, who characterized elicitable properties by the shape of their level sets: they are intersections of \emph{Voronoi diagrams} in $\reals^{|\O|}$ with the simplex $\Delta_{\Y}$.
In our setting, a Voronoi diagram is specified by a finite set of points $\{x_r : r \in \R\} \subseteq \reals^{\O^m}$, with each \emph{cell} $T_r = \{x : \|x - x_r\| \leq \|x - x_{r'}\| \forall r' \in \R\}$ consisting of those points in $\reals^{\O^m}$ closest in Euclidean distance to $x_r$.

Using the geometric constructions above, we can simply apply the main result of \citet{lambert2011elicitation} to finite properties in the $m$-product space; the result is a characterization of elicitable finite properties with $m$ observations.
\begin{corollary}
  A finite property $\Gamma: \Delta_{\O} \rightrightarrows \R$ is directly elicitable with $m$ samples if and only if there exists a Voronoi diagram in $\reals^{\O^m}$ with $\{x_r : r \in \R\}$ satisfying $\Gamma_r^m = T_r \cap \P^m$.
  Here $\Gamma_r^m = \{p^m \in \P^m : p \in \Gamma_r\}$.
\end{corollary}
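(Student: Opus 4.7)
The plan is to reduce the $m$-observation case to the single-observation case using Remark~\ref{R:m-elicit}, and then invoke the Voronoi-diagram characterization of \citet{lambert2011elicitation} on the larger outcome space $\O^m$. Concretely, I would first reinterpret $\Gamma$ as a property on $\P^m$: since the map $p \mapsto p^m$ is a bijection between $\P$ and $\P^m$, we can define $\Gamma^m: \P^m \rightrightarrows \R$ by $\Gamma^m(p^m) = \Gamma(p)$, whose $r$-level set is exactly $\Gamma_r^m = \{p^m : p \in \Gamma_r\}$. By Remark~\ref{R:m-elicit}, $\Gamma$ is directly elicitable with $m$ observations if and only if $\Gamma^m$ is directly elicitable with $1$ observation from the space $\O^m$.

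Next, I would extend $\Gamma^m$ to a property on the full simplex $\Delta_{\O^m}$. This is where I expect the one subtle point to lie: Lambert--Shoham's theorem characterizes elicitable finite properties defined on an entire simplex, not on a submanifold $\P^m \subseteq \Delta_{\O^m}$. For the ``only if'' direction, assume $\loss(r,\vec{\o})$ elicits $\Gamma^m$ on $\P^m$. Extend $\Gamma^m$ to $\bar\Gamma: \Delta_{\O^m} \rightrightarrows \R$ by $\bar\Gamma(q) = \argmin_{r \in \R} \E_{\vec{\o} \sim q}[\loss(r,\vec{\o})]$. Then $\loss$ directly elicits $\bar\Gamma$ on the entire simplex $\Delta_{\O^m}$. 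Apply \citet{lambert2011elicitation} to obtain a Voronoi diagram $\{x_r\}_{r \in \R} \subseteq \reals^{\O^m}$ whose cells $T_r$ satisfy $\bar\Gamma_r = T_r \cap \Delta_{\O^m}$. Intersecting both sides with $\P^m$ and using $\bar\Gamma_r \cap \P^m = \Gamma_r^m$ gives the stated condition $\Gamma_r^m = T_r \cap \P^m$.

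For the ``if'' direction, given such a Voronoi diagram in $\reals^{\O^m}$, the proof of \citet{lambert2011elicitation} actually constructs an explicit loss function (essentially $\loss(r,\vec{\o}) = \|\vec{e}_{\vec{\o}} - x_r\|^2$, where $\vec{e}_{\vec{\o}}$ is the vertex of the simplex corresponding to $\vec{\o}$, so that $\E_{\vec{\o} \sim q}[\loss(r,\vec{\o})]$ is a linear function of $q$ plus terms that pick out the cell containing $q$). This $\loss$ elicits the full Voronoi-induced property $\bar\Gamma$ on $\Delta_{\O^m}$; restricting to $q = p^m \in \P^m$, its minimizer set is $\bar\Gamma(p^m) = \Gamma^m(p^m) = \Gamma(p)$, showing that $\loss$ directly elicits $\Gamma$ with $m$ observations.

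The only real obstacle is the boundary/closure care needed when going between $\Delta_{\O^m}$ and its submanifold $\P^m$: one must verify that the set-valued correspondence $\bar\Gamma$ restricts correctly to $\Gamma^m$ on $\P^m$ and that the ``closed cells'' of the Voronoi diagram (which may overlap on their boundaries, producing the multi-valuedness needed at boundary distributions like the uniform mode) interact well with the intersection $T_r \cap \P^m$. Once this bookkeeping is in place, the corollary is immediate from Remark~\ref{R:m-elicit} combined with the Lambert--Shoham characterization applied in $\reals^{\O^m}$.
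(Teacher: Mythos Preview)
Your proposal is correct and follows essentially the same approach as the paper: the paper offers no detailed proof, stating only that one ``can simply apply the main result of \citet{lambert2011elicitation} to finite properties in the $m$-product space,'' which is precisely the reduction via Remark~\ref{R:m-elicit} you carry out. Your treatment actually supplies the details the paper omits, in particular the extension of $\Gamma^m$ from $\P^m$ to a loss-induced property $\bar\Gamma$ on all of $\Delta_{\O^m}$ so that Lambert's simplex-level characterization applies.
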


\ifHideDiag
\begin{figure}
  \begin{tikzpicture} [scale=2, tdplot_main_coords]
    \coordinate (orig) at (0,0,0);
    \coordinate[label=below:$1\vphantom{i}$] (h) at (1,0,0);
    \coordinate[label=below:$2$] (m) at (0,1,0);
    \coordinate[label=above:$3$] (l) at (0,0,1);

    \draw[simplex,fill=darkgreen] (h) -- (m) -- (l) -- (h);
    \draw[fill=blue] (orig) circle [radius=4mm];
    \draw[fill=red!70!white] (orig) circle [radius=2mm];

    \placefiglabel{(a)}
  \end{tikzpicture}
  \begin{tikzpicture} [scale=2, tdplot_main_coords]
    \coordinate (orig) at (0,0,0);
    \coordinate[label=below:$1\vphantom{i}$] (h) at (1,0,0);
    \coordinate[label=below:$2$] (m) at (0,1,0);
    \coordinate[label=above:$3$] (l) at (0,0,1);

    \draw[simplex,fill=darkgreen] (h) -- (m) -- (l) -- (h);
    \draw[fill=blue] (1/5,0,4/5) .. controls (0,0.85,0) .. (4/5,0,1/5);
    \draw[fill=red!70!white] (1/3,0,2/3) .. controls (0,0.6,0) .. (2/3,0,1/3);
    \draw[simplex] (h) -- (m) -- (l) -- (h);

    \placefiglabel{(b)}
  \end{tikzpicture}
  \begin{tikzpicture}
    \node at (2,2) {\includegraphics[width=0.2\textwidth]{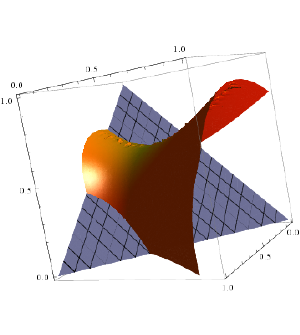}};

    \node at (0,3.5) {(c)};
  \end{tikzpicture}
  \begin{tikzpicture}
    \node at (2,2) {\includegraphics[width=0.2\textwidth]{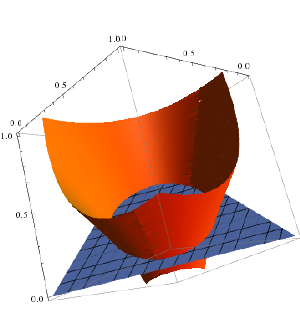}};

    \node at (0,3.5) {(d)};
  \end{tikzpicture}
  \caption{\textbf{Examples of finite properties on $\O = \{1,2,3\}$ elicitable with $2$ samples.}
    Pictured is the simplex on $3$ outcomes and properties $\Gamma: \Delta_{\{1,2,3\}} \to \{\text{red, green, blue}\}$.
    The agent reports a color, then is rewarded according to which outcome occurs.
    (a) The property of ``close'', ``intermediate'', and ``far'' from uniform, as measured by $2$-norm.
    (b) The property of ``high'', ``medium'', and ``low'' variance.
(c,d) The boundary between two cells, i.e.~a hyperplane in the $m$-product space ``projected'' down to $\reals^\O$ (in orange) and intersected with the simplex $\Delta_\O$ (in blue/gray); we show the boundary on all of $\reals^\O$ to visualize the quadratic surfaces which create these sections.}  
  \label{fig:finite-examples}
  \label{fig:finite-surface}
\end{figure}
\else
\fi

Multiple observations afford considerable flexibility in the level sets of such an elicitable $\Gamma$.  In particular, whereas before the cell boundaries between level sets were restricted to hyperplanes, with $m$ observations these boundaries can be defined by nearly arbitrary $m$-degree polynomials.  We illustrate this flexibility and visualize the cell boundaries in Figure~\ref{fig:finite-examples}.  In particular, we show that a classic negative example, where an agent is asked to report whether their belief has low or high variance, is easily elicited with two observations.

\section{Lower Bounds via Geometry} \label{sec:lower}

In this section we discuss  lower bounds on elicitation complexity. 
For technical reasons we will here require $\P$ to be a $C^{\infty}$ submanifold of $\Delta_{\O}$ with corners.
Our lower bounds will also generally require $\Gamma$ to be a $C^{\infty}$ function, in which case we call it a \emph{$C^{\infty}$ property}.

We begin in the first subsection by recalling the structure of the level sets of identifiable properties, and then introduce a  technique for obtaining  from this some lower bounds on elicitation complexity  via differential geometry.  In the next subsection we focus on polynomial properties, and explain some results that use algebraic geometry to obtain sharp bounds.

 \subsection{Preliminaries on identifiable properties}

We start by recalling a  general method, introduced in~\citet{frongillo2015elicitation-2},  for showing lower bounds on elicitation complexity:
 \emph{Given a property $\Gamma$, if one can show that no level set from any $\hat\Gamma$, which is $m$-identifiable and directly elicitable with $m$ observations, can be contained in a particular level set of $\Gamma$, then $\Gamma$ cannot be $(d,m)$-elicitable}.  This follows immediately from the definitions:  if $\Gamma$ is indirectly elicited via $\hat\Gamma$ and link $\psi$, so that $\Gamma = \psi \circ \hat\Gamma$, then we have the following relationship between the level sets of $\Gamma$ and $\hat{\Gamma}$:
\begin{equation}\label{E:Lev-Set-U}
 \Gamma_r = \bigcup_{\hat{r} : \psi(\hat{r}) = r} \hat{\Gamma}_{\hat{r}}.
 \end{equation}
 In other words, the level sets of $\Gamma$ are obtained by combining some of the level sets of $\hat{\Gamma}$.   For instance, if  $\psi$ is a bijection, then the level sets of $\Gamma$ and $\hat \Gamma$  are identical.  
This method was used successfully  in \citet{frongillo2015elicitation-2} to show lower bounds on the report complexity ($d$) of a property, with $m=1$.
In this section, we will use the same method to show lower bounds on observation complexity ($m$), with $d=1$.

Our main tool for obtaining these lower bounds will be that the level sets of any directly $m$-observation-elicitable, identifiable $\hat{\Gamma}$ have a specific structure, namely, such a level set is  the zero set of a polynomial of degree at most $m$:
\begin{fact} \label{fact:ident-poly}
  If a property $\hat{\Gamma}(p)$ is $m$-identifiable, then each  level set of $\hat \Gamma$ is  the set of zeros of a  polynomial in $p$  of degree at most $m$.
\end{fact}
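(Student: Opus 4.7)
The plan is to directly unwrap the definition of $m$-identifiability and recognize that the expectation $\E_{p^m}[V(r, \vec{o})]$ is, coordinate by coordinate, a polynomial of degree $m$ in the entries of the probability vector $p$.

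First, I would recall that $m$-identifiability of $\hat\Gamma$ supplies some $V: \hat\R \times \O^m \to \reals^k$ with
\begin{equation*}
\hat\Gamma_r = \{p \in \P : \E_{p^m}[V(r, \vec{o})] = 0\}.
\end{equation*}
Expanding using the i.i.d.\ factorization $p^m(o_1,\ldots,o_m) = \prod_i p(o_i)$ gives
\begin{equation*}
\E_{p^m}[V(r,\vec{o})] = \sum_{(o_1,\ldots,o_m)\in \O^m} V(r,o_1,\ldots,o_m)\, p(o_1)\cdots p(o_m),
\end{equation*}
so, treating the coordinates $\{p(o)\}_{o\in\O}$ as formal variables, each component of this expression is a polynomial homogeneous of degree exactly $m$ (and hence of degree at most $m$).

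Next, in the scalar case $k=1$ the level set $\hat\Gamma_r$ is literally the zero set (inside $\P$) of this single degree-$\leq m$ polynomial, which is what Fact~\ref{fact:ident-poly} asserts. For general $k$, the same argument applied to each coordinate $V_j$ of $V$ exhibits $\hat\Gamma_r$ as the common zero locus of $k$ polynomials of degree at most $m$, equivalently as the zero set of the single $\reals^k$-valued polynomial map $p\mapsto \bigl(\E_{p^m}[V_j(r,\vec{o})]\bigr)_{j=1}^k$.

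There is no real obstacle here: the content of the fact is just the algebraic structure of the Veronese-type map $p \mapsto p^{\otimes m}$, whose entries are precisely the degree-$m$ monomials in the coordinates of $p$, so that $\E_{p^m}[V(r,\vec{o})]$ is a linear functional in these monomials. The only thing to be careful about is that cancellations among the coefficients $V(r,\vec{o})$ may reduce the degree, which is why the bound is stated as ``at most'' $m$ rather than ``exactly'' $m$.
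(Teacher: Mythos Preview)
Your proof is correct and is essentially the same as the paper's: both simply expand $\E_{p^m}[V(r,\vec{\o})]$ as $\sum_{\vec{\o}} V(r,\vec{\o})\,p(\o_1)\cdots p(\o_m)$ and observe this is a polynomial of degree at most $m$ in the coordinates of $p$. The paper's proof is a one-liner stating exactly this expansion, while you add some (accurate but inessential) commentary on homogeneity, the vector-valued case, and possible cancellations.
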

\begin{proof}
  The condition $\E_p V(r,\vec{\o}) = 0$ is $\sum_{\o_1,\ldots,\o_m} p(\o_1)\cdots p(\o_m) V(r,\o_1,\ldots,\o_m) = 0$ .
\end{proof}
Combined with the equality \eqref{E:Lev-Set-U} above, Fact~\ref{fact:ident-poly} tells us that the level sets of indirectly elicitable $\Gamma$ are unions of zero sets of polynomials.  As we are focusing on the $d=1$ case, however, both $\Gamma$ and $\hat\Gamma$ are real-valued functions, so with enough regularity, their level sets should coincide.
Before making a precise statement, we introduce the following definition:

\begin{definition}[$C^\infty$ $(d,m)$-elicitable]
\label{D:Cinfdm}
  We say that  a $C^\infty$
 property $\Gamma:\mathcal P\to \mathbb R^{d'}$ is $C^\infty$ $(d,m)$-elicitable if  in the definition of $(d,m)$-elicitable,  $\hat \Gamma$   can be taken to be $C^\infty$ and $\psi$ can be taken to be $C^\infty$ in an open neighborhood of the image of $\hat \Gamma$. 
  \end{definition}

\begin{corollary} \label{cor:connected-zeros}
Suppose that a $C^\infty$
 property $\Gamma:\mathcal P\to \mathbb R$ is $C^\infty$ $(1,m)$-elicitable.  
Let $r\in \mathbb R$, let $Z\subseteq \Gamma^{-1}(r)$ be a connected component of the level set, and assume that $Z$ admits a point that is not a critical point of $\Gamma$; i.e., there is a point $p\in Z$ such that the differential of $\Gamma$ at $p$ is nonzero.   
Then $Z$ is a connected component of the set of zeros of a  polynomial of degree at most   $m$.  Moreover, if $\Gamma^{-1}(r)$ is connected, then $\Gamma^{-1}(r)$ is the zero set of a polynomial of degree at most $m$.  
\end{corollary}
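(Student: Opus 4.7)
The plan is to leverage the indirect elicitation factorization $\Gamma = \psi \circ \hat\Gamma$, combine it with the non-critical-point hypothesis to pin down a single level set of $\hat\Gamma$, and then invoke Fact~\ref{fact:ident-poly} to conclude the polynomial structure.

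First I would reduce to a statement about $\hat\Gamma$. By the chain rule, at any $p_0 \in Z$ one has $d\Gamma_{p_0} = \psi'(\hat\Gamma(p_0)) \cdot d\hat\Gamma_{p_0}$. Since $\psi'(\hat\Gamma(p_0))$ is a scalar and $d\hat\Gamma_{p_0}$ a linear form, the hypothesis $d\Gamma_{p_0}\neq 0$ forces both factors to be nonzero; in particular $\psi'(\hat r_0)\neq 0$ where $\hat r_0 := \hat\Gamma(p_0)$. By $C^\infty$-regularity of $\psi$, there is an open neighborhood $U$ of $\hat r_0$ in $\hat{\mathcal R}$ on which $\psi$ is strictly monotonic, hence injective; thus $\psi^{-1}(r)\cap U = \{\hat r_0\}$.

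Next I would show $\hat\Gamma$ is constant on $Z$. Since $Z$ is connected and $\hat\Gamma$ is continuous, $\hat\Gamma(Z)$ is a connected subset of $\mathbb R$ (an interval) containing $\hat r_0$. From $\Gamma = \psi \circ \hat\Gamma$ and $\Gamma\equiv r$ on $Z$, we get $\hat\Gamma(Z)\subseteq \psi^{-1}(r)$. If $\hat\Gamma(Z)$ were a nontrivial interval about $\hat r_0$, it would meet $U$ in a nontrivial subinterval on which $\psi\equiv r$, contradicting $\psi'(\hat r_0)\neq 0$. Therefore $\hat\Gamma(Z)=\{\hat r_0\}$, i.e. $Z\subseteq \hat\Gamma^{-1}(\hat r_0)$.

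I would then apply Fact~\ref{fact:ident-poly} to write $\hat\Gamma^{-1}(\hat r_0)$ as the zero set (inside $\mathcal P$) of a polynomial $P$ of degree at most $m$. To upgrade ``contained in'' to ``is a connected component of'', let $Z'$ be the connected component of $\hat\Gamma^{-1}(\hat r_0)$ containing $Z$. Then $Z'$ is connected and, via \eqref{E:Lev-Set-U}, $Z'\subseteq \hat\Gamma^{-1}(\hat r_0)\subseteq \Gamma^{-1}(r)$; since $Z$ is the connected component of $\Gamma^{-1}(r)$ through $p_0$, we get $Z'\subseteq Z$, whence $Z=Z'$. For the final ``moreover'' clause, if $\Gamma^{-1}(r)$ is itself connected then we apply the preceding argument with $Z=\Gamma^{-1}(r)$, obtaining $\Gamma^{-1}(r)\subseteq \hat\Gamma^{-1}(\hat r_0)\subseteq \Gamma^{-1}(r)$, so $\Gamma^{-1}(r)=\hat\Gamma^{-1}(\hat r_0)=\{P=0\}$.

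The main obstacle is the middle step: ruling out that $\hat\Gamma$ varies along $Z$ between distinct preimages of $r$ under $\psi$. The non-critical-point hypothesis is essential here, since without it $\psi$ could have a critical value at $r$ and $\hat\Gamma(Z)$ could be a genuine interval of preimages; the argument above shows that a single non-critical point of $\Gamma$ in $Z$ is enough to force $\hat\Gamma|_Z$ to be locally, and hence globally by connectedness, constant.
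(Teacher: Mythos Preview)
Your proof is correct and follows essentially the same route as the paper's: both use the factorization $\Gamma=\psi\circ\hat\Gamma$, show $\hat\Gamma(Z)$ is a single point via the non-critical-point hypothesis and the chain rule, invoke Fact~\ref{fact:ident-poly}, and then use the chain of inclusions $Z\subseteq\hat\Gamma^{-1}(\hat r_0)\subseteq\Gamma^{-1}(r)$ to pin down the connected component. The only cosmetic difference is that the paper argues by contradiction (if $\hat\Gamma(Z)$ were a nontrivial interval then $D\psi$ would vanish on it, forcing $D\Gamma\equiv 0$ on all of $Z$), whereas you first deduce $\psi'(\hat r_0)\neq 0$ from the chain rule at $p_0$ and then use local injectivity of $\psi$; these are contrapositives of the same step.
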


\begin{proof}  Let $\widehat \Gamma$ and $\psi$ be as in Definition \ref{D:Cinfdm}.  
We have a commutative diagram:
$$
\xymatrix{
\mathcal P \ar[r]^{\widehat\Gamma} \ar[d]_{ \Gamma}&\mathbb R \ar[ld]^\psi\\
\mathbb R&
}
$$
Since $Z$ is connected, we have that $\widehat {\Gamma}(Z)\subseteq \mathbb R$ is connected, and is therefore an interval (see e.g., \cite{browder96},  Theorem 6.76, 6.77, p.148). 
The claim is that this interval is a point. Indeed, assume the opposite.  Then
 since $\psi$ is by definition constant on the interval  $\widehat \Gamma(Z)$,  we would have that the differential $D\psi $ vanishes at each point of  of $\widehat \Gamma(Z)$.  Then  since $D\Gamma =D\psi \circ D\widehat \Gamma$ we would have that $D\Gamma$ vanishes at every point of $Z$.  But this would  contradict our assumption.  Thus $\widehat\Gamma(Z)$ is a point.  
 
It then follows from Fact \ref{fact:ident-poly} that   $\widehat \Gamma^{-1}( \widehat\Gamma(Z))$ is the zero set of a polynomial of degree at most $m$.
We now use the inclusions 
$$
Z\subseteq  \widehat \Gamma^{-1}( \widehat\Gamma(Z))\subseteq \Gamma^{-1}(r).
$$ 
  By virtue of  the inclusion on the right, every  connected component of $\widehat \Gamma^{-1}( \widehat\Gamma(Z))$ is contained in a connected component of $\Gamma^{-1}(r)$. This proves the first assertion of the lemma.  The last assertion of the lemma also follows from these inclusions, since in that case one is assuming   $Z=\Gamma^{-1}(r)$.
\end{proof}

\begin{remark}
For concreteness, we summarize the  contrapositive of Corollary \ref{cor:connected-zeros} in the way in which we will use it in examples:  Suppose that $\Gamma:\mathcal P\to \mathbb R$ is a $C^\infty$ property, and there exists an $r\in\mathbb R$ such that the level set  $\Gamma^{-1}(r)$ is connected, and contains a  point $P\in \Gamma^{-1}(r)$ that is not a critical point for $\Gamma$.  Then if $\Gamma^{-1}(r)$ is not the zero locus of a degree $m$ polynomial in $p(\o_1),\ldots,p(\o_m)$, then $\Gamma$ is not $C^\infty$ $(1,m)$-elicitable.
\end{remark}

As a consequence of Corollary \ref{cor:connected-zeros}, we can immediately show the existence of $C^\infty$ properties with \emph{infinite} observation complexity; i.e., properties that are not $C^\infty$ $(1,m)$ elicitable for any $m$.  
The proof gives such an example for $|\O|=3$, a surprising result given that all properties have report complexity $|\O|-1=2$; i.e., all of the  $C^\infty$ properties are  $C^\infty$ $(2,1)$-elicitable.  Note  that  if $|\O|=2$, then all $C^\infty$ properties are $C^\infty$  $(1,1)$-elicitable.  

\begin{proposition} \label{prop:sine}
  There are $C^\infty$ properties that are not $C^\infty$ $(1,m)$-elicitable for any finite $m$.
\end{proposition}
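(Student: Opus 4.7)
The plan is to construct an explicit $C^\infty$ property $\Gamma$ on the three-outcome simplex whose level set is a transcendental curve that cannot sit inside the zero set of any polynomial, and then invoke Corollary \ref{cor:connected-zeros} to rule out $(1,m)$-elicitability for every finite $m$ simultaneously.

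Take $|\O|=3$ and $\P=\Delta_\O$, parametrized by $(p_1,p_2)$ with $p_3=1-p_1-p_2$, and define
\[
\Gamma(p_1,p_2,p_3) \;=\; \pi\, p_2 - \sin(\pi p_1).
\]
I would first check the routine facts: $\Gamma$ is $C^\infty$; using $\sin(\pi t)\le \pi(1-t)$ on $[0,1]$, the zero level set
$Z=\{(p_1,\sin(\pi p_1)/\pi,\, 1-p_1-\sin(\pi p_1)/\pi) : p_1\in[0,1]\}$ is contained in $\P$; $Z$ is a connected smooth curve (the continuous image of $[0,1]$); and the differential $d\Gamma=\pi\,dp_2-\pi\cos(\pi p_1)\,dp_1$ never vanishes on $\P$, so every point of $Z$ is a regular point of $\Gamma$. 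In particular, $Z$ meets the hypothesis of the contrapositive of Corollary \ref{cor:connected-zeros}.

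The heart of the argument is to show $Z$ is not contained in the zero set of any polynomial $Q(p_1,p_2,p_3)\in\reals[p_1,p_2,p_3]$ of any degree. Eliminating $p_3$ via $p_3=1-p_1-p_2$ and then substituting $p_2 = \sin(\pi p_1)/\pi$ would yield a two-variable polynomial $\tilde Q(x,y)\in\reals[x,y]$ satisfying $\tilde Q(p_1,\sin(\pi p_1)) \equiv 0$ on $[0,1]$. Since both sides are real-analytic, this identity extends by analytic continuation to all $p_1\in\reals$. For each fixed $y_0\in[-1,1]$, the equation $\sin(\pi p_1)=y_0$ has infinitely many real solutions, so the single-variable polynomial $\tilde Q(\cdot,y_0)$ has infinitely many roots and must vanish identically. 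Writing $\tilde Q(x,y)=\sum_k c_k(y)\,x^k$, each $c_k(y)$ now vanishes on the infinite set $[-1,1]$, hence $c_k\equiv 0$ for every $k$, hence $Q\equiv 0$, which is a contradiction because $Z\subsetneq\P$. By the contrapositive of Corollary \ref{cor:connected-zeros}, $\Gamma$ cannot be $C^\infty$ $(1,m)$-elicitable for any finite $m$.

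The main obstacle is conceptual rather than technical: one has to notice that the contrapositive of Corollary \ref{cor:connected-zeros} asks for a single level set that fails to lie inside a polynomial zero locus of \emph{any} degree, and to package this requirement as the transcendence of $\sin(\pi x)$ over $\reals(x)$. The construction is flexible---$\sin(\pi p_1)$ could be replaced by any scaled transcendental function whose graph stays in the simplex, such as a suitably small multiple of $e^{p_1}$---but $\sin(\pi p_1)$ is convenient because its repeated preimages in $[-1,1]$ make the algebraic-independence step immediate.
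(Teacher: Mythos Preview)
Your proof is correct and follows the same strategy as the paper: build a $C^\infty$ property whose chosen level set is a transcendental curve and invoke the contrapositive of Corollary~\ref{cor:connected-zeros}. The paper's concrete example is $\Gamma(p)=p_1-\tfrac12\sin(1/p_2)$ on the open simplex $\Delta_\Y^\circ$, where the infinitely many oscillations as $p_2\to 0$ already produce infinitely many intersections with the line $p_1=0$ inside the domain, so the Fundamental Theorem of Algebra applies directly without your analytic-continuation step.
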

\begin{proof}
  Take $\Y = \{1,2,3\}$, $\P=\Delta_\Y^\circ=\{p\in\Delta_\Y: p(y) > 0 \; \forall y\in\Y\}$, and $\Gamma(p) = p_1 - (1/2)\sin (1/p_2)$.
  It is immediate that $\Gamma$ has no critical points.  
  Here the level sets $\Gamma_r$ satisfy $r = p_1 - (1/2)\sin (1/p_2)$, in other words, satisfy the equation $p_1 = (1/2)\sin (1/p_2) + r$.
  For $p_2$ sufficiently small, the level set $\Gamma_0 = \{ p \in \Delta_3 : p_1 = (1/2)\sin (1/p_2) \}$ is simply the graph of $(1/2)\sin (1/x)$, which intersects the line $p_1=0$ infinitely many times, and hence by the Fundamental Theorem of Algebra is not the zero set of \emph{any} polynomial.
  Corollary~\ref{cor:connected-zeros} now implies that $\Gamma$ is not $(1,m)$-elicitable for any $m$.
\end{proof}

\subsection{Polynomial properties and lower bounds using algebraic geometry}
We now describe some lower bounds for elicitation complexity of polynomial properties.
The motivation for these lower bounds  is the  intuition that, in general, a polynomial property $\Gamma:\mathcal P\to \mathbb R$ of degree $k$ should not be $C^\infty$ $(1,m)$-identifiable for any $m<k$, since the zero set of a degree $k$ polynomial should not be the zero set of a degree $m$ polynomial when $m<k$.
This statement can of course fail in special cases (e.g., Example \ref{E:Poly-Counter} below).   Indeed,  there are some subtleties  regarding  zero sets of polynomials in Euclidean open sets, considered in Appendix \ref{sec:algebra-overview}, that  must be addressed to draw such a conclusion.  Nevertheless, for a general polynomial property  this expectation holds  (see Remark \ref{remark:most-polynomials} for a precise definition of generality), and in the appendix  we provide some elementary techniques for confirming this expectation in particular  examples (see Corollary \ref{C:f_c}).    For instance, we show (Example \ref{E:f_c}):

\begin{corollary} \label{cor:k-norm-obs} If $|\O|\ge 3$, then for any natural number $k$, the $k$-norm of a distribution, $\Gamma(p) = (\sum_{\o} p(\o)^k)^{1/k}$, is not $C^\infty$ $(1,k-1)$-elicitable.
\end{corollary}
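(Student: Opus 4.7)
The plan is to apply the contrapositive of Corollary~\ref{cor:connected-zeros} with $d=1$ and $m=k-1$, using the appendix's Corollary~\ref{C:f_c} as the workhorse for the ``not the zero set of a low-degree polynomial'' clause. As a preliminary reduction, I would replace $\Gamma(p)=\|p\|_k$ by $\widetilde\Gamma(p)=\sum_{\o\in\O} p(\o)^k$. Since $x\mapsto x^{1/k}$ is a $C^\infty$ bijection on $(0,\infty)$ and $\widetilde\Gamma$ takes values in $(0,\infty)$ on the relative interior of $\P$, any $C^\infty$ indirect elicitation of $\Gamma$ via $(\hat\Gamma,\psi)$ converts to one of $\widetilde\Gamma$ via $(\hat\Gamma,\psi^k)$, and conversely (after shrinking the neighborhood so that $\psi'>0$) via $(\hat\Gamma,(\psi')^{1/k})$. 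Thus $\Gamma$ and $\widetilde\Gamma$ share the same $C^\infty$ $(1,m)$-elicitability status, and it suffices to prove the result for $\widetilde\Gamma$.

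Next I would verify the hypotheses of Corollary~\ref{cor:connected-zeros}. The differential of $\widetilde\Gamma$ at $p$ is, up to a positive scalar, the vector $(p(\o)^{k-1})_{\o\in\O}$, whose restriction to the tangent space $\{v:\sum_{\o} v(\o)=0\}$ of $\P$ vanishes only when all coordinates $p(\o)^{k-1}$ coincide, i.e.\ only at the uniform distribution. Pick any non-uniform $p^*$ in the relative interior of $\P$ and set $r=\widetilde\Gamma(p^*)$. Then $p^*$ is a non-critical point of $\widetilde\Gamma$ lying inside the level set $\widetilde\Gamma^{-1}(r)=\{p\in\P:\sum_{\o}p(\o)^k=r\}$, which is exactly the zero set in $\P$ of the Fermat-type polynomial $f_r(p)=\sum_{\o}p(\o)^k-r$ of degree $k$.

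The main step is to invoke Corollary~\ref{C:f_c} from the appendix to conclude that, under the assumption $|\O|\ge 3$, the connected component of this zero set through $p^*$ is not a connected component of the zero set of \emph{any} polynomial of degree at most $k-1$. Granting this, the contrapositive of Corollary~\ref{cor:connected-zeros}, as spelled out in the Remark following it, forces $\widetilde\Gamma$, and hence $\Gamma$, to fail $C^\infty$ $(1,k-1)$-elicitability.

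The principal obstacle is exactly this last step. Although $f_r$ obviously has degree $k$, nothing in that observation alone prevents its real zero set inside an open subset of the simplex from coinciding, near a smooth point, with the zero set of a polynomial of strictly smaller degree; this is the subtlety the paper explicitly flags when passing between zero sets in Euclidean open sets and their Zariski closures. Corollary~\ref{C:f_c} carries out this bridging step by exploiting irreducibility of the Fermat hypersurface $\sum_{\o} p(\o)^k=r$ for $|\O|\ge 3$ (after restricting to the affine hyperplane $\sum_{\o} p(\o)=1$) and controlling the transition between real and complex zero sets, which together force any polynomial vanishing on an open subset of this hypersurface to have degree at least $k$.
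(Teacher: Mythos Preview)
Your approach is essentially the paper's own: reduce to the power sum $\widetilde\Gamma(p)=\sum_\o p(\o)^k$, apply the contrapositive of Corollary~\ref{cor:connected-zeros} at a non-critical level set, and invoke Corollary~\ref{C:f_c} for the degree obstruction. The one step you leave as a black box---verifying the homogenization criterion in Corollary~\ref{C:f_c} for $x_1^k+\cdots+x_{n-1}^k+(1-x_1-\cdots-x_{n-1})^k+c$---is precisely the computation the paper carries out in Example~\ref{E:f_c}.
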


\begin{example}\label{E:Poly-Counter}
In contrast to the case considered in Corollary \ref{cor:k-norm-obs}, we emphasize that there are polynomial properties $\Gamma:\mathcal P\to \mathbb R$ of degree $k$ that are $C^\infty$ $(1,m)$-elicitable for some $m<k$.  For instance, take $\hat \Gamma:\mathcal P\to \mathbb R$ to be any polynomial property of degree $m>0$,  let $\psi:\mathbb R\to \mathbb R$ be any polynomial function of degree  $m'>1$, and set $\Gamma =\psi \circ \hat \Gamma$.  Then $\Gamma$ is of degree $k=mm'>m$, but $\Gamma$ is $C^\infty$ $(1,m)$-elicitable, by Lemma \ref{L:Poly-Elic}.
\end{example}

\section{Examples and Elicitation Frontiers} 
\label{sec:elic-front}

We now combine our complexity lower bounds with upper bounds to make progress toward determining the elicitation frontiers of some potential properties of interest.  
See Figure~\ref{fig:frontiers} for a depiction of some of the elicitation frontiers described.
We begin with some general, straightforward, but versatile upper bounds.

\begin{lemma}
\label{lem:sum-prod}
For all $1 \leq i \leq n,\, 1 \leq j \leq m$, let $f_{ij}:\Y\to\reals$ be an arbitrary function such that $\E_p[f_{ij}(Y)]$ exists for all $p\in\P$.  Then
$\Gamma(p) = \sum_{i=1}^n \prod_{j=1}^m \E_p[f_{ij}(Y)]$ is $(1,m)$-elicitable.
\end{lemma}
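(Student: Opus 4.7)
The plan is to reduce the statement to the classical fact that squared-error loss elicits the mean, by rewriting $\Gamma(p)$ as a single expectation under the $m$-product distribution $p^m$. The main observation is that, because the $m$ observations are i.i.d.~from $p$, a product of single-variable expectations equals the expectation of a product of independent copies:
$$\prod_{j=1}^m \E_p[f_{ij}(Y)] \;=\; \E_{(\o_1,\ldots,\o_m)\sim p^m}\!\left[\,\prod_{j=1}^m f_{ij}(\o_j)\,\right].$$
Summing over $i$ and using linearity of expectation then gives
$$\Gamma(p) \;=\; \E_{p^m}[\,g(\o_1,\ldots,\o_m)\,], \qquad g(\o_1,\ldots,\o_m) \;:=\; \sum_{i=1}^n \prod_{j=1}^m f_{ij}(\o_j).$$
The existence of each $\E_p[f_{ij}(Y)]$ for $p \in \P$ ensures that $\E_{p^m}[g]$ is well-defined (in particular, in our default setting $\Y$ is finite, so there is nothing to check).

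Next, I would define the $m$-observation loss function
$$\loss(r,\o_1,\ldots,\o_m) \;:=\; \bigl(r - g(\o_1,\ldots,\o_m)\bigr)^2,$$
with report space $\R = \reals$. For each fixed $p \in \P$, expanding the square and using the bias--variance decomposition yields
$$\E_{p^m}[\loss(r,\vec{\o})] \;=\; \bigl(r - \E_{p^m}[g(\vec{\o})]\bigr)^2 + \Var_{p^m}[g(\vec{\o})],$$
whose unique minimizer over $r \in \reals$ is $r = \E_{p^m}[g(\vec{\o})] = \Gamma(p)$. Hence $\loss$ directly elicits $\Gamma$ with $m$ observations and a one-dimensional report, establishing $(1,m)$-elicitability.

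There is essentially no obstacle here; the content of the argument is entirely in the factorization step, which is exactly the i.i.d.~assumption encoded in the definition of $p^m$. Once that rewriting is made, squared-error loss against the random variable $g(\vec{\o})$ handles the rest. (If one preferred, any other proper scoring rule for the mean on $\reals$, e.g.~a Bregman divergence generated by a strictly convex function with finite expectation under $p^m$, would work equally well and make the same point.)
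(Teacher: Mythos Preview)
Your proof is correct and essentially identical to the paper's: both rewrite $\Gamma(p)$ as $\E_{p^m}\bigl[\sum_i \prod_j f_{ij}(\o_j)\bigr]$ via independence and linearity, then elicit this mean with squared loss (the paper also remarks that any loss for the mean would do). The only cosmetic difference is that you name the inner random variable $g$ and spell out the bias--variance decomposition, whereas the paper leaves the unique-minimizer step implicit.
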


\begin{proof}
Using $Y_1,\ldots,Y_m$ which are i.i.d. from $p$, then $\{f_{i1}(Y_1), \ldots f_{im}(Y_m) \}$ will be independent for all $i$.  Using properties of expectations (linearity and independence), we have
\begin{equation} 
  \sum_{i=1}^n \prod_{j=1}^m \E[f_{ij}(Y)]
  = \sum_{i=1}^n \prod_{j=1}^m \E[f_{ij}(Y_j)]
  = \sum_{i=1}^n \E\left[\prod_{j=1}^mf_{ij}(Y_j)\right]
  = \E\left[\sum_{i=1}^n \prod_{j=1}^mf_{ij}(Y_j)\right] 
\end{equation}
Now we see that using squared loss (or any loss for the mean) one can leverage these $m$ samples to elicit the desired sum of products, e.g.
$\loss(r,y_1,\ldots,y_m) = \left(r-\sum_{i=1}^n \prod_{j=1}^mf_{ij}(y_j)\right)^2$.
\end{proof}
The proof of Lemma~\ref{lem:sum-prod} simply constructs an unbiased estimator of the property of interest and elicits the mean of the estimator via squared error.  By a very natural extension, this technique also applies to ratios of expectations, as they are elicitable~\citep{gneiting2011making}: construct \emph{two} unbiased estimators, and elicit the ratio of their means.  We will give two instances of such ratios in the next subsection.

The following result establishes an upper bound that by now may seem natural: Under some conditions, a property that is itself an $m$-degree polynomial in $p$ is $(1,m)$-elicitable.

\begin{lemma}\label{L:Poly-Elic}
Suppose that $\Gamma:\mathcal P\to \mathbb R$ is a property such that $\Gamma=\psi\circ \Gamma'$ where $\Gamma':\mathcal P\to \mathbb R$ is  polynomial of degree $m$, and $\psi:\mathbb R\to \mathbb R$ is a  function that is $C^\infty$ on an open neighborhood of the  image of $\Gamma'$.    Then $\Gamma'$ is directly   $(1,m)$-elicitable, and $\Gamma$ is $C^\infty$ $(1,m)$-elicitable.  
\end{lemma}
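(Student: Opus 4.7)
The plan is to reduce the first claim to Lemma~\ref{lem:sum-prod} by writing $\Gamma'$ as an appropriate sum of products of expectations, and then to deduce the second claim from the first by taking $\hat\Gamma=\Gamma'$ and using $\psi$ as the link function. The main content is really the first claim; the second follows essentially by unwinding definitions.

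First I would expand $\Gamma'(p)$ in the monomial basis: since $\Gamma'$ is a polynomial in the coordinates $p(y)$ (for $y\in\Y$) of total degree at most $m$, we can write
\[
\Gamma'(p)=\sum_{\alpha} c_\alpha \prod_{y\in\Y} p(y)^{\alpha_y},
\]
where the sum runs over multi-indices $\alpha=(\alpha_y)_{y\in\Y}$ with $|\alpha|=\sum_y \alpha_y \le m$. Each monomial of degree exactly $d\le m$ can be written as $p(y_{i_1})\cdots p(y_{i_d})$ for some sequence $y_{i_1},\dots,y_{i_d}\in\Y$ (with repetition allowed), and since $p(y_{i_j})=\E_p[\ones\{Y=y_{i_j}\}]$, we have a product of $d$ expectations. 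To bring every monomial into a uniform product of exactly $m$ expectations, I would pad with $m-d$ copies of the constant function $f\equiv 1$, noting $\E_p[1]=1$. After this rewriting, $\Gamma'$ has exactly the form $\sum_{i=1}^n \prod_{j=1}^m \E_p[f_{ij}(Y)]$ required by Lemma~\ref{lem:sum-prod}, so that lemma directly yields a $1$-dimensional, $m$-observation loss (the squared-error loss against the explicit unbiased estimator $T(y_1,\dots,y_m)=\sum_i \prod_j f_{ij}(y_j)$) that elicits $\Gamma'$ directly.

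Next I would check identifiability of $\Gamma'$, which is required by the definition of $(d,m)$-elicitable. The squared loss above is differentiable in $r$, and its derivative gives the identification function $V(r,\vec{y})=2\bigl(r-T(\vec{y})\bigr)$, for which $\E_{p^m}[V(r,\vec{y})]=0\iff r=\Gamma'(p)$. Thus $\Gamma'$ is $m$-identifiable, completing the proof that $\Gamma'$ is directly $(1,m)$-elicitable.

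For the second claim, I would set $\hat\Gamma=\Gamma'$ (which is $C^\infty$, being a polynomial) and use the hypothesized $\psi$, which is $C^\infty$ on an open neighborhood of the image of $\Gamma'=\hat\Gamma$. Since $\Gamma=\psi\circ\hat\Gamma$ by assumption, and $\hat\Gamma$ is directly $(1,m)$-elicitable and identifiable as shown above, the triple $(\hat\Gamma,\loss,\psi)$ witnesses $C^\infty$ $(1,m)$-elicitability of $\Gamma$ per Definition~\ref{D:Cinfdm}. There is no serious obstacle here; the only thing one must be careful about is ensuring that padding by constants is legitimate (which it trivially is) and that the identifiability condition is not forgotten when invoking the definition of $(d,m)$-elicitability.
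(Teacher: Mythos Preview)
Your proof is correct and follows essentially the same approach as the paper: both reduce to Lemma~\ref{lem:sum-prod} by writing the coordinates $p(y)$ as expectations of indicator functions $\ones_{\{Y=y\}}$, and then take $\hat\Gamma=\Gamma'$ with the given $\psi$ as link. Your version is in fact more careful than the paper's, which is quite terse: you explicitly handle monomials of degree $d<m$ by padding with the constant function $1$, and you verify the $m$-identifiability of $\Gamma'$ (required by the definition of $(d,m)$-elicitable), neither of which the paper's proof spells out.
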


\begin{proof}  It is enough to show $\Gamma'$ is directly  $(1,m)$-elicitable.  
This follows immediately from Lemma \ref{lem:sum-prod}.  Indeed, it is clear from the lemma that it is enough to show the result for monic monomials.  For this one takes the $f_{ij}$ in Lemma  \ref{lem:sum-prod} to  be characteristic functions $\ones_{\o}$ for $\o\in \O$.    
\end{proof}

\subsection{Ratios of expectations: index of dispersion and Sharpe ratio}
The \emph{index of dispersion} of a random variable $Y$ with positive mean is defined to be $\Var(Y)/\E[Y]$ \citep{cox1966statistical}.  The \emph{Sharpe ratio} of a random variable $Y$, which is a commonly-used measure of the risk-adjusted return of an investment, is defined similarly as $\E[Y]/\sqrt{\Var(Y)}$ \citep{sharpe1966mutual}.  Both the index of dispersion and the \emph{square} of the Sharpe ratio are $(1,2)$-elicitable by the above discussion: $\Var(Y) = \E_p[\tfrac 1 2 (Y_1-Y_2)^2]$, $\E_p[Y] = \E_p[Y_1]$, and $\E_p[Y]^2 = \E_p[Y_1Y_2]$, so any ratios of these terms is $(1,2)$-elicitable.  (The link function for the Sharpe ratio is thus the square root.)  For example, the index of dispersion is elicited by the loss $\loss(r,y_1,y_2) = r(y_1 - y_2)^2 - r^2 y_1$.

To finish describing the elicitation frontiers for these properties, we note that neither is $(1,1)$-elicitable as the level sets are not convex, but both are $(2,1)$-elicitable as we now show. For the index of dispersion, we can take $r_1 = E[Y]$ and $r_2 = E[Y^2]$, both elicitable as means, and then compute the property by $(r_2-r_1^2)/r_1$.  Similarly, for the same $r_1,r_2$, the Sharpe ratio can be written as $r_1/\sqrt{r_2-r_1^2}$.

\subsection{Norms of distributions}
\label{sec:norms}
As we have previously discussed, the $2$-norm is $(1, 2)$ elicitable.  For general $k$, the $k$-norm is $(1, k)$ elicitable with the following loss function
$\loss(r,y_1,\ldots,y_k) = (r - \ones\{y_1=\ldots=y_k\})^2$.  (This case also follows from Lemma~\ref{L:Poly-Elic}.)
This is a tight bound on the observation complexity, as we proved in Corollary \ref{cor:k-norm-obs} that the $k$-norm is not $(1,k-1)$ elicitable.  
As it turns out, the report complexity of the $k$-norm is $|\Y|-1$, meaning it is as hard to elicit with one observation as the entire distribution.  This follows from Theorem 2 of~\cite{frongillo2015elicitation-2}, specifically Section 4.2, as $\|p\|_k$ is a convex function of $p$.  An interesting open question, and one that will require additional algebraic tools, is the $k$-norm's elicitation frontier when we allow multiple dimensions and multiple observations.

\begin{corollary} \label{cor:k-norm-frontier}
  For $|\O| \geq 3$, the elicitation frontier of the $k$-norm contains $(|\O|-1,1)$ and $(1,k)$.
\end{corollary}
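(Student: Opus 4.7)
The plan is to verify each of the two frontier points by the four conditions built into the definition of the elicitation frontier: for a point $(d,m)$ to lie on the frontier, the property must be $(d,m)$-elicitable while being neither $(d-1,m)$- nor $(d,m-1)$-elicitable. Since the excerpt has already established essentially all of the matching upper and lower bounds in passing, the proof is really an assembly, and I expect no serious obstacle beyond checking that the cited ingredients line up with the required conditions.

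For the point $(1,k)$, the upper bound is immediate: observe that $\|p\|_k^k = \sum_{\o \in \O} p(\o)^k = \Pr_{(y_1,\ldots,y_k) \sim p^k}[y_1 = \cdots = y_k]$, which is a polynomial of degree $k$ in $p$. Applying Lemma \ref{L:Poly-Elic} with the $C^\infty$ link $\psi(r) = r^{1/k}$ on a neighborhood of $(0,1]$ yields $C^\infty$ $(1,k)$-elicitability, and the explicit loss $\loss(r,y_1,\ldots,y_k) = \left(r - \ones\{y_1 = \cdots = y_k\}\right)^2$ realizes it. For the matching lower bound, Corollary \ref{cor:k-norm-obs} (which requires $|\O|\ge 3$, matching our hypothesis) says the $k$-norm is not $C^\infty$ $(1,k-1)$-elicitable. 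Finally, $(0,k)$-elicitability is ruled out trivially because the report space would be a single point, forcing the $k$-norm to be constant on $\P$, which it is not for $|\O|\ge 2$.

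For the point $(|\O|-1,1)$, the upper bound follows from the standard fact that the entire distribution is $(|\O|-1)$-elicitable and identifiable via any strictly proper scoring rule (e.g., log loss paired with the identification function $V_i(r,y) = \ones\{y=i\} - r_i$); composing this with the link $p \mapsto \|p\|_k$ gives $(|\O|-1,1)$-elicitability of the $k$-norm. The matching lower bound is the one already flagged in the paragraph preceding the corollary: since $\|\cdot\|_k$ is a convex (indeed strictly convex for $k>1$) function of $p$ that is non-affine, Theorem 2 of \citet{frongillo2015elicitation-2} (Section 4.2 there) forces its one-observation report complexity to be the full $|\O|-1$, so it is not $(|\O|-2,1)$-elicitable. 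Non-elicitability with zero observations is again trivial since the expected loss cannot depend on $p$.

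The main thing to be careful about is just bookkeeping: Corollary \ref{cor:k-norm-obs} is stated for $C^\infty$ elicitability, so one should note that the frontier claim is meant in this same differentiable category (or, alternatively, one should remark that Lemma \ref{L:Poly-Elic} produces a polynomial intermediate property, which is $C^\infty$, so the upper and lower bounds live in the same category). With that clarification the two frontier points follow by stringing the four ingredients together for each case.
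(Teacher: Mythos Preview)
Your proposal is correct and matches the paper's approach exactly: the corollary in the paper has no separate formal proof, and is simply a summary of the preceding paragraph, which assembles precisely the same four ingredients you identify (the explicit $(1,k)$ loss / Lemma~\ref{L:Poly-Elic}, Corollary~\ref{cor:k-norm-obs}, the proper-scoring-rule upper bound, and Theorem~2 of \citet{frongillo2015elicitation-2} for the convexity-based report-complexity lower bound). Your explicit handling of the degenerate $(0,k)$ and $(|\O|-1,0)$ cases and your remark about the $C^\infty$ caveat are both sensible additions that the paper leaves implicit (cf.\ the caption of Figure~\ref{fig:frontiers}(b)).
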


\subsection{Central Moments}

The \(n^{th}\) central moment $\mu_n$ of a random variable $Y$ is defined as
\begin{equation}\label{eq:central-moment-def}
   \mu_n =  \E[(Y - \E[Y])^n ] =  \sum_{i=0}^n (-1)^i \dbinom{n}{i} \E[Y]^i  \cdot \E[Y^{n-i}] ~,
\end{equation}
which we see is $(n,1)$-elicitable by simply eliciting $E[Y^i]$ for all $i \in \{1,\ldots,n\}$ and then combining the results.
As we will show, $\mu_n$ is also $(1,n)$-elicitable, and moreover, we can achieve other dimension-observation tradeoffs in between, such as $(\lfloor \sqrt{n} \rfloor + 1,\lceil \sqrt{n} \rceil)$.
The key idea is to partition the binomial sum~\eqref{eq:central-moment-def} into $k$ partial sums and factor out the highest power of \(\E[Y] \) from each, such that the \(j^{th}\) partial sum can be written as
\begin{equation} \label{eq:central-moment-partial-sum}
    \E[Y]^{\frac{j\cdot n}{k}} \sum_{i=0} ^{\frac{n}{k}-1}  (-1)^i \dbinom{{n}}{\frac{j \cdot n}{k} + i} \cdot \E[Y]^i \cdot \E\left[Y^{\frac{(j+1) \cdot n}{k} -1 -i}\right]~.
\end{equation} 
Doing so gives the following result.

\begin{theorem}
The  \(n^{th}\) central moment is \((k+1, \ceil[\big]{n/k}) \)- elicitable; \(0 < k \leq n \)
\end{theorem}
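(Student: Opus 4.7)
The plan is to partition the binomial expansion of $\mu_n$ from \eqref{eq:central-moment-def} into $k$ consecutive blocks and elicit each block as a single scalar using $m:=\lceil n/k\rceil$ observations, together with the mean $\E[Y]$ as the $(k+1)$-st coordinate. Since $km \geq n$, we can write the index set $\{0,1,\ldots,n\}$ as a disjoint union of blocks $B_j = \{jm,\, jm+1,\ldots, \min((j+1)m-1,\,n)\}$ for $j=0,\ldots,k-1$. For each $j$, factor the minimum power $\E_p[Y]^{jm}$ out of the $j$-th partial sum and define
\[
Q_j(p) \;=\; \sum_{i\in B_j} (-1)^{i}\binom{n}{i}\, \E_p[Y]^{\,i-jm}\,\E_p[Y^{n-i}],
\]
so that $\mu_n(p) = \sum_{j=0}^{k-1} \E_p[Y]^{jm}\, Q_j(p)$.

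Next I would verify that each $Q_j$ is $(1,m)$-elicitable. Every summand of $Q_j$ is of the form $\E_p[Y]^{i-jm}\,\E_p[Y^{n-i}]$, which is a product of at most $(i-jm)+1 \leq m$ expectations of functions of $Y$ (namely, $i-jm$ copies of $\E[Y]$ and one copy of $\E[Y^{n-i}]$). Padding with factors of $\E[1]=1$ to bring every product up to exactly $m$ terms puts $Q_j$ into the form required by Lemma~\ref{lem:sum-prod}, which then yields an explicit squared-error loss $\ell_j$ with $m$ observations eliciting $Q_j$. The mean $\E_p[Y]$ is $(1,1)$-elicitable and hence $(1,m)$-elicitable by ignoring extra observations via any standard loss $\ell_0(r_0,y_1)=(r_0-y_1)^2$.

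Set $\hat\Gamma(p) = \bigl(\E_p[Y],\, Q_0(p),\, Q_1(p),\ldots,\,Q_{k-1}(p)\bigr) \in \reals^{k+1}$, and take the joint $m$-observation loss $\loss = \ell_0 + \sum_{j=1}^{k}\ell_j$. Because each $\ell_j$ depends on only its own report coordinate, the joint minimizer is componentwise, so $\loss$ directly elicits $\hat\Gamma$. Identifiability follows by stacking the partial-derivative identification functions $V_j(r_j,\vec y) = r_j - \sum_i\prod_l f_{il}(y_l)$ coming from each squared loss. Define the polynomial link $\psi:\reals^{k+1}\to\reals$ by
\[
\psi(r_0,r_1,\ldots,r_k) \;=\; \sum_{j=0}^{k-1} r_0^{\,jm}\,r_{j+1},
\]
so that $\psi\circ\hat\Gamma = \mu_n$ as required by the definition of $(k+1,m)$-elicitable.

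The proof is mostly combinatorial bookkeeping rather than difficult analysis; the only real subtlety is making sure the partition works when $n/k \notin \mathbb{Z}$. Here the last block $B_{k-1}$ may have fewer than $m$ indices, but this only shortens some products in $Q_{k-1}$, which is still handled by Lemma~\ref{lem:sum-prod} after padding with constant factors. No additional ideas beyond the illustrative $(1,n)$- and $(n,1)$-elicitability constructions already in the text are needed; the theorem is essentially an interpolation between them obtained by grouping consecutive binomial terms.
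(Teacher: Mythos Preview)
Your proof is essentially the paper's: partition the binomial expansion into $k$ consecutive groups, factor the minimal power of $\E[Y]$ from each, apply Lemma~\ref{lem:sum-prod} to elicit each factored sum with $\lceil n/k\rceil$ observations, and carry the mean as the $(k{+}1)$-st coordinate with a polynomial link recombining everything. One tiny bookkeeping slip: when $k\mid n$ your blocks $B_0,\ldots,B_{k-1}$ only reach index $km-1=n-1$, so $i=n$ is not covered; this is harmless, since that term is $(-1)^n\E[Y]^n$ and can be absorbed either into the link as $(-1)^n r_0^{\,n}$ or into $Q_{k-1}$ as a product of exactly $m$ copies of $\E[Y]$.
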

\begin{proof}
Consider the partial sum~\eqref{eq:central-moment-partial-sum} without the $\E[Y]^{j\cdot n/k}$ factor; by Lemma \ref{lem:sum-prod} each such factored sum is $(1,\ceil[\big]{n/k})$-elicitable, as the maximum number of terms in any product is \(\ceil[\big]{n/k}\). Since we have $k$ such factored sums, and need to additionally elicit the mean \(\E[Y]\) to compute their factors, the entire sum can be elicited using \(\ceil[\big]{n/k}\) observations and $k+1$ dimensions.
\end{proof}

When \( k = 0\), we can do much better than $m=\infty$: by Lemma~\ref{lem:sum-prod}, as the maximum number of terms in any product of~\eqref{eq:central-moment-def} is $n$, the term \((\E[Y])^n \), we have than $\mu_n$ is $(1,n)$-elicitable.  For lower bounds, little is known beyond $\mu_n$ not being $(1,1)$-elicitable~\citep{frongillo2015elicitation}.

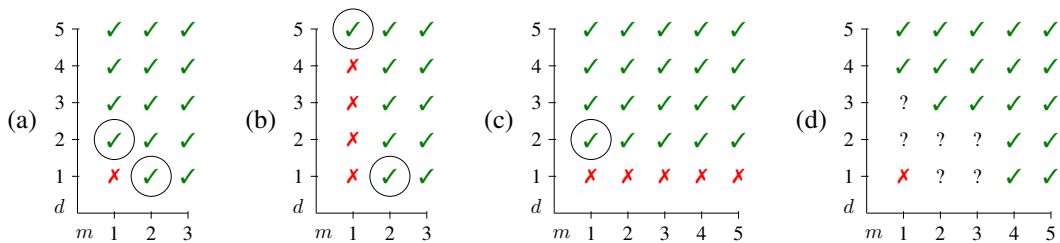
\begin{figure}[!hb]\centering
\scalebox{0.7}{\begin{tikzpicture}[y=.7cm, x=.7cm]
	\draw (0,0) -- coordinate (x axis mid) (3,0);
    	\draw (0,0) -- coordinate (y axis mid) (0,5);
    	\foreach \x in {1,...,3}
     		\draw (\x,1pt) -- (\x,-3pt)
			node[anchor=north] {\x};
        \node[anchor=north] at (0.2,-5pt) {$m$};
    	\foreach \y in {1,...,5}
     		\draw (1pt,\y) -- (-3pt,\y) 
     			node[anchor=east] {\y}; 
        \node[anchor=east] at (-3pt,0.2) {$d$};
        \node at (1,1) {\xmark};
        \node[draw,circle] at (2,1) {\cmark};
        \node at (3,1) {\cmark};
        \node[draw,circle] at (1,2) {\cmark};
        \node at (2,2) {\cmark};
        \node at (3,2) {\cmark};
        \node at (1,3) {\cmark};
        \node at (2,3) {\cmark};
        \node at (3,3) {\cmark};
        \node at (1,4) {\cmark};
        \node at (2,4) {\cmark};
        \node at (3,4) {\cmark};
        \node at (1,5) {\cmark};
        \node at (2,5) {\cmark};
        \node at (3,5) {\cmark};
        \node at (-1.5,2.5) {\Large (a)};
  \end{tikzpicture}}
\quad
\scalebox{0.7}{\begin{tikzpicture}[y=.7cm, x=.7cm]
	\draw (0,0) -- coordinate (x axis mid) (3,0);
    	\draw (0,0) -- coordinate (y axis mid) (0,5);
    	\foreach \x in {1,...,3}
     		\draw (\x,1pt) -- (\x,-3pt)
			node[anchor=north] {\x};
        \node[anchor=north] at (0.2,-5pt) {$m$};
    	\foreach \y in {1,...,5}
     		\draw (1pt,\y) -- (-3pt,\y) 
     			node[anchor=east] {\y}; 
        \node[anchor=east] at (-3pt,0.2) {$d$};
        \node at (1,1) {\xmark};
        \node[draw,circle] at (2,1) {\cmark};
        \node at (3,1) {\cmark};
        \node at (1,2) {\xmark};
        \node at (2,2) {\cmark};
        \node at (3,2) {\cmark};
        \node at (1,3) {\xmark};
        \node at (2,3) {\cmark};
        \node at (3,3) {\cmark};
        \node at (1,4) {\xmark};
        \node at (2,4) {\cmark};
        \node at (3,4) {\cmark};
        \node[draw,circle] at (1,5) {\cmark};
        \node at (2,5) {\cmark};
        \node at (3,5) {\cmark};
        \node at (-1.5,2.5) {\Large (b)};
  \end{tikzpicture}}
\quad
\scalebox{0.7}{\begin{tikzpicture}[y=.7cm, x=.7cm]
	\draw (0,0) -- coordinate (x axis mid) (5,0);
    	\draw (0,0) -- coordinate (y axis mid) (0,5);
    	\foreach \x in {1,...,5}
     		\draw (\x,1pt) -- (\x,-3pt)
			node[anchor=north] {\x};
        \node[anchor=north] at (0.2,-5pt) {$m$};
    	\foreach \y in {1,...,5}
     		\draw (1pt,\y) -- (-3pt,\y) 
     			node[anchor=east] {\y}; 
        \node[anchor=east] at (-3pt,0.2) {$d$};
        \node at (1,1) {\xmark};
        \node at (2,1) {\xmark};
        \node at (3,1) {\xmark};
<<<<<<< Updated upstream
        \node at (4,1) {\xmark};
        \node at (5,1) {\xmark};
        \node[draw,circle] at (1,2) {\cmark};
        \node at (2,2) {\cmark};
        \node at (3,2) {\cmark};
        \node at (4,2) {\cmark};
        \node at (5,2) {\cmark};
        \node at (1,3) {\cmark};
        \node at (2,3) {\cmark};
        \node at (3,3) {\cmark};
        \node at (4,3) {\cmark};
        \node at (5,3) {\cmark};
        \node at (1,4) {\cmark};
        \node at (2,4) {\cmark};
        \node at (3,4) {\cmark};
        \node at (4,4) {\cmark};
        \node at (5,4) {\cmark};
        \node at (1,5) {\cmark};
        \node at (2,5) {\cmark};
        \node at (3,5) {\cmark};
        \node at (4,5) {\cmark};
        \node at (5,5) {\cmark};
        \node at (-1.5,2.5) {\Large (c)};
  \end{tikzpicture}}
\quad
\scalebox{0.7}{\begin{tikzpicture}[y=.7cm, x=.7cm]
	\draw (0,0) -- coordinate (x axis mid) (5,0);
    	\draw (0,0) -- coordinate (y axis mid) (0,5);
    	\foreach \x in {1,...,5}
     		\draw (\x,1pt) -- (\x,-3pt)
			node[anchor=north] {\x};
        \node[anchor=north] at (0.2,-5pt) {$m$};
    	\foreach \y in {1,...,5}
     		\draw (1pt,\y) -- (-3pt,\y) 
     			node[anchor=east] {\y}; 
        \node[anchor=east] at (-3pt,0.2) {$d$};
        \node at (1,1) {\xmark};
        \node at (2,1) {?};
        \node at (3,1) {?};
        \node at (4,1) {\cmark};
        \node at (5,1) {\cmark};
        \node at (1,2) {?};
        \node at (2,2) {?};
        \node at (3,2) {?};
        \node at (4,2) {\cmark};
        \node at (5,2) {\cmark};
        \node at (1,3) {?};
        \node at (2,3) {\cmark};
        \node at (3,3) {\cmark};
        \node at (4,3) {\cmark};
        \node at (5,3) {\cmark};
        \node at (1,4) {\cmark};
        \node at (2,4) {\cmark};
        \node at (3,4) {\cmark};
        \node at (4,4) {\cmark};
        \node at (5,4) {\cmark};
        \node at (1,5) {\cmark};
        \node at (2,5) {\cmark};
        \node at (3,5) {\cmark};
        \node at (4,5) {\cmark};
        \node at (5,5) {\cmark};
        \node at (-1.5,2.5) {\Large (d)};
  \end{tikzpicture}}
  \caption{The elicitation frontiers for various properties: (a) the variance, Sharpe ratio, and index of dispersion; (b) the 2-norm when $|\O|=6$, with respect to $C_\infty$ properties;
   (c) $\Gamma(p) = p_1-(1/2)\sin(p_2\pi)$ from Proposition~\ref{prop:sine}; (d) the 4th central moment, which is not fully known.}
\label{fig:frontiers}
\end{figure}

\section{Multi-Observation Regression}
\label{sec:regression}
One of the earliest problems in modern statistics was the estimation of biodiversity in a geographic region~\citep{fisher1943relation}.
One scalar measure of diversity of a distribution is the (inverse of the) $2$-norm, which we will take here as an example.\footnote{A similar intuition will hold for most if not all elicitable measures of diversity.}
Consider a dataset of species samples: pairs $(x,\o)$ where $x$ gives the features of the geographic region and $\o$ is a categorical giving the species to which this sample belongs.
Suppose we wish to regress the diversity of species against geographic features such as climate.
The single-observation approach would require a surrogate loss function $\loss(\hat{f}(x), \o)$ and a link $f(x) = \psi(\hat{f}(x))$.
We claim that any single-observation loss function $\loss(f(x), \o)$ is poorly suited for this task.
For the $2$-norm, lower bounds on report complexity show that the best possible approach has dimensionality $\hat{f}: x \to \reals^{d-1}$ where $d$ is the number of unique species in the dataset (which may have a very long tail).
So this approach requires, in essence, fitting $\hat{f}$ to the entire distribution over species as a function of geographic region, a task of immense idiosyncrasy and complexity compared to the end goal of e.g. estimating a scalar measure of diversity as a function of rainfall level.

On the other hand, a two-observation loss function $\loss(f(x), \o_1,\o_2)$ can be used to directly learn an $f$ estimating the desired diversity measure, e.g. $2$-norm, as a function of geographic features.
One can then use empirical risk minimization to directly learn relationships between, e.g. rainfall level and this measure of species diversity.

Multi-observation regression does introduce an additional challenge, however: risk in this context is naturally defined as $\E_{x,\vec{\o}} \loss(f(x), \vec{\o})$ where $\vec{\o} = (\o_1,\ldots,\o_m)$ is a set of observations drawn i.i.d. conditioned on $x$, but our data points are of the form $(x,\o)$.
If e.g. $x$ comes from a continuous space, we may not have \emph{any} sets of $m$ samples $\o_1,\ldots,\o_m$ belonging to the same $x$.
One natural setting where this poses no concern is in active learning where we may choose to re-draw the label for a given $x$.
In a more standard regression framework, we propose to leverage the intuition that the distribution of $\o$ conditioned on $x$ generally changes gradually as a function of $x$.\footnote{Phrased differently, at least it seems reasonable to parameterize the rate of change and expect learning bounds to depend on this parameter.}
Pragmatically, with dense enough data points, we can simply group together nearby $x$ values and ``merge'' them into a data point of the form $(\bar{x}, \o_1,\ldots,\o_m)$ where $\bar{x}$ is an average and the $\o_i$ are drawn independently and \emph{approximately} identically from \emph{approximately} the distribution of $\O$ conditioned on $\bar{x}$.
For this paper, we demonstrate the idea in simulations below and give a basic proof-of-concept theoretical result in Appendix \ref{app:regression}, leaving a more thorough investigation to future work.

In general, the cases where the multi-observation approach can be useful are those where the property of interest is believed to follow a simple functional form, but the conditional statistics given by the indirect elicitation approach are expected to follow unknown or complicated trends as a function of features.
For another example, one could imagine learning the noise (e.g. variance) of a medical test, e.g. white blood cell count, as a function of patient features, in order to improve the test.
The indirect elicitation approach suggests first fitting a model for estimating the mean of the test's outcome as a function of patient data, then fitting the expected square of the statistic, and then computing an estimate for the variance by combining them.
In general, these prediction problems may be highly complex and nonlinear even when the \emph{noise} in the test might follow some simple linear relationship with e.g. height or age.
The multi-observation approach allows direct regression of the noise versus features.
Formally, we show a basic extension of classic risk guarantees in Appendix \ref{app:regression}, under the assumption that $x$ is distributed uniformly on $[0,1]$ and a closeness condition on the conditional distribution of $Y$ given $X$.

\subsection{Simulation}

Here we describe some simulations run as a proof of concept of multi-observation regression.  Our data points are of the form $(x, y) \in \reals \times \reals$ where $x$ is drawn uniformly at random from the interval $[0,1]$.  Given $x$, $y = a \sin(4 \pi x) + Z$, where $a$ is a constant and $Z \sim N(0,1)$ is drawn independently for each sample, we wish to learn $\Var(Y | X)$.

Our multi-observation loss function here is $\loss(f(x), y_1, y_2) = (f(x) - \frac{1}{2}(y_1-y_2)^2)^2$.  We approximate $(x, y_1, y_2)$ samples by sorting the $(x_i,y_i)$ pairs by $x_i$, and making samples of the form $(\frac{1}{2}(x_i+x_{i+1}),y_i,y_{i+1})$.  We compare to the single observation approach, in which we estimate $\E[Y | X]$ and $\E[Y^2 | X]$ and then combine them to estimate $\Var(Y | X)$.  

The point of these simulations is to demonstrate that multi-observation regression can greatly outperform single observation regression in the case when the function is in a known concept class, and the statistics needed to indirectly elicit it with a single observation are not in a known concept class.  As such, our multi-observation regression fits a linear function to $\Var(Y | X)$, and our single observation regression fits linear functions to $\E[Y | X]$ and $\E[Y^2 | X]$.  The true $\Var(Y | X) = 1$ is indeed a linear function, while the true moment functions $\E[Y | X\!=\!x] = a \sin(x)$ and $\E[Y^2 | X\!=\!x] = a^2 \sin^2(x) + 1$ are very far from linear.

\ifHideDiag
\begin{figure}
	\centering
		\includegraphics[width=.49\textwidth]{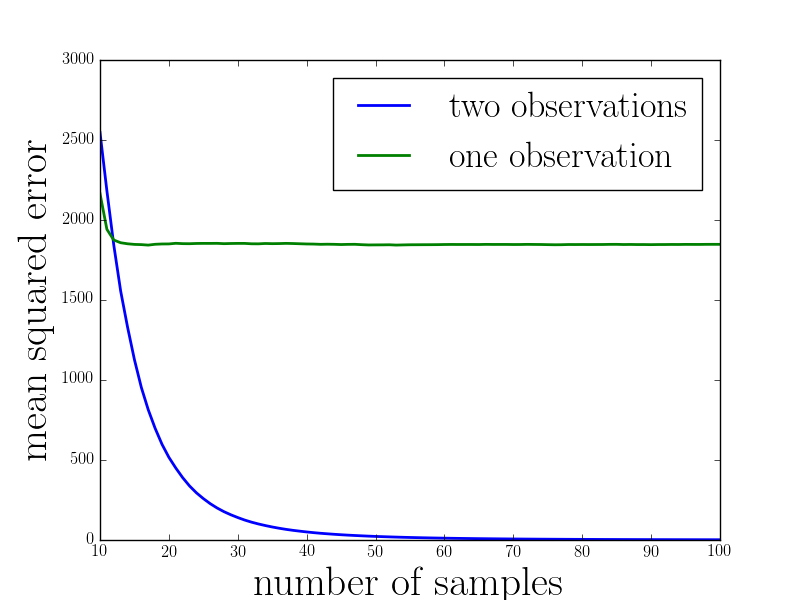}
		\includegraphics[width=.49\textwidth]{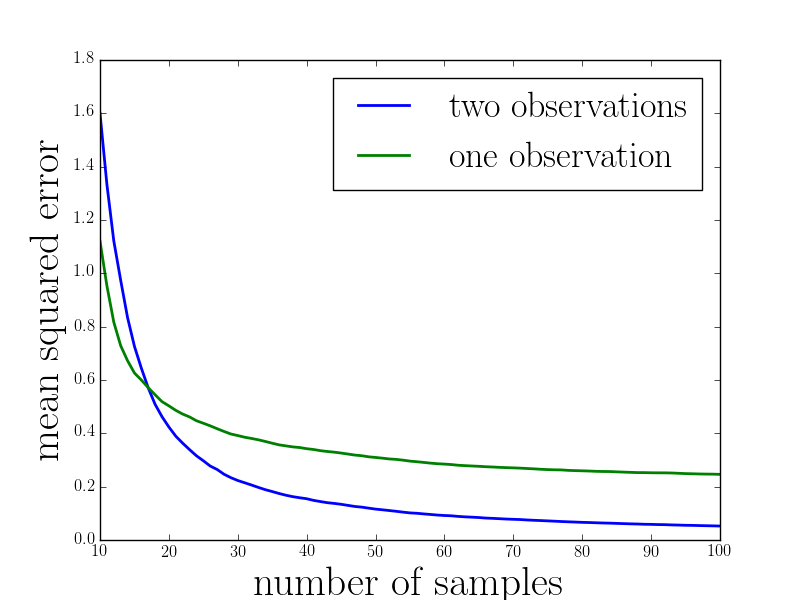}
	\caption{The mean squared error of the two regression strategies for estimating $\Var(y | x)$, where $x \sim \mathrm{Unif(0,1)}$ and $y \sim a \sin(4 \pi x) + N(0,1)$, for $a=1$ (left) and $a=10$ (right).
	  The single-observation loss function approach fails because it tries to fit to the complex underlying model of $y|x$, while the two-observation loss approach is able to directly model the simple relationship between $\Var(y)$ and $x$.}
	\label{fig:variance_reg_linear}
\end{figure}
\else
\fi

Figure \ref{fig:variance_reg_linear} gives the results for $a = 1$ and $a = 10$.  Both plots show the mean squared error of the variance functions reported by the two regression methods (averaged over 4000 simulations) as a function of the number of samples.  In both cases we see that for sufficiently many samples, the two observation regression significantly outperforms the single observation regression.

\section{Conclusion and Future Work}
\label{sec:conclusion}
An immediate host of directions is the proving of upper and lower bounds on elicitation frontiers for various properties.
In particular, our lower bounds here focus on techniques for lower-bounding observation complexity (the $(1,m)$ case), leaving open approaches for lower bounds on $(d,m)$ complexity for $d \geq 2$.
Another direction is to formalize learning guarantees for multi-observation regression under suitable assumptions on slow-changing conditional distributions.

\acks{We thank Karthik Kannan for contributing the upper bound for central moments.  Sebastian Casalaina-Martin was partially supported by NSA grant H98230-16-1-0053.  Tom Morgan was funded in part by NSF grants CCF-1320231 and CNS-1228598. Bo Waggoner is supported by the Warren Center for Network and Data Sciences at the University of Pennsylvania.}

\bibliography{refs}

\newpage

\appendix

\renewtheorem{theorem}{Theorem}[section] 
\renewtheorem{fact}{Fact}[section]
\renewtheorem{claim}{Claim}[section]
\renewtheorem{example}{Example}[section]

\section{Overlapping Level Sets: Proof of Theorem~\ref{thm:direct_lower}}
\label{sec:overlapping}

Theorem~\ref{thm:direct_lower} states that a property is not elicitable if there is a convex combination of one of its level sets in the $m$-product space that equals a convex combination of another one of its level sets in the $m$-product space.  To reason about these level sets we will need the following theorem.

\begin{theorem}[Theorem 3.5, \cite{frongillo2014general}] \label{thm:raf}
The property $\Gamma:\P'\to\reals$ (where $\P' \subseteq \Delta_{\O'}$) is directly elicitable by the loss function $\loss$ if and only if there exists some convex $G : \conv(\P') \rightarrow \bar{\reals}$ with $G(\P') \subseteq \reals$, some $D \subseteq \delta G$, and some bijection $\phi : \Gamma(\P') \rightarrow D$ with $\Gamma(p) = \phi^{-1}(D \cap \delta G_p)$, such that for all $r \in \reals$ and $\o \in \O'$,
$$\ell(r, \o) = \phi(r)(p_r-\o) - G(p_r),$$
where $\{p_r\} \subseteq \P'$ satisfies $\hat{r} = \Gamma(p_{\hat{r}})$ for all $\hat{r}$.
\end{theorem}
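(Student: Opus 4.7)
The plan is to prove the characterization via a Savage--McCarthy-style representation, following the standard template for proper scoring rules adapted to properties. For the reverse direction, suppose $\loss$ admits the stated representation. Viewing $\phi(r) \in \reals^{\O'}$ so that $\phi(r)(p_r-\o) = \phi(r)\cdot p_r - \phi(r)(\o)$, taking expectations gives $\E_p[\loss(r,\o)] = \phi(r)\cdot(p_r - p) - G(p_r)$. Since $\phi(r)\in\partial G(p_r)$ and $G$ is convex, the subgradient inequality $G(p) \ge G(p_r) + \phi(r)\cdot(p-p_r)$ rearranges to $\E_p[\loss(r,\o)] \ge -G(p)$, with equality iff $\phi(r) \in \partial G(p) = \delta G_p$. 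Combined with the hypothesis $\Gamma(p) = \phi^{-1}(D\cap \delta G_p)$ and bijectivity of $\phi$, this forces the unique minimizer to be $r = \Gamma(p)$.

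For the forward direction, suppose $\loss$ elicits $\Gamma$. Define $\phi(r) \in \reals^{\O'}$ coordinatewise by $\phi(r)(\o) := -\loss(r,\o)$, and set $G(p) := \sup_{r\in\R}\phi(r)\cdot p$ for $p \in \conv(\P')$. As a pointwise supremum of linear functions, $G$ is convex. Elicitability implies that for $p \in \P'$ the supremum is uniquely attained at $r=\Gamma(p)$, so $G(p) = \phi(\Gamma(p))\cdot p \in \reals$. For each $r \in \Gamma(\P')$, pick any $p_r \in \Gamma_r$; then $G(p_r) = \phi(r)\cdot p_r$ while $G(q) \ge \phi(r)\cdot q$ for all $q \in \conv(\P')$, so $\phi(r) \in \partial G(p_r)$. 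Set $D := \phi(\Gamma(\P'))$. The representation then reduces to substitution: $\phi(r)(p_r-\o) - G(p_r) = \phi(r)\cdot p_r - \phi(r)(\o) - \phi(r)\cdot p_r = -\phi(r)(\o) = \loss(r,\o)$.

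Bijectivity of $\phi$ on $\Gamma(\P')$ follows from uniqueness of the argmin in the elicitation definition: if $\phi(r_1)=\phi(r_2)$ with $r_1\ne r_2$, then $\loss(r_1,\cdot)\equiv\loss(r_2,\cdot)$, so both reports minimize expected loss at every $p$, contradicting singleness of the argmin at any $p$ with $\Gamma(p)\in\{r_1,r_2\}$. The level-set identity $\Gamma(p) = \phi^{-1}(D\cap\delta G_p)$ holds because $\phi(r')\in D\cap\partial G(p)$ is equivalent to $\phi(r')\cdot p = G(p)$, which by the construction of $G$ and elicitability forces $r' = \Gamma(p)$.

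The delicate points I expect to be the main obstacles are the bookkeeping around extending $G$ to all of $\conv(\P')$, where the supremum need not be attained and $G$ may take the value $+\infty$ (consistent with the codomain $\bar\reals$); and ensuring $D\cap\partial G(p)$ is a \emph{singleton} rather than a larger subset of subgradients, which rests entirely on the strict uniqueness of the argmin demanded by the elicitation definition. The representative $p_r \in \Gamma_r$ is nonunique, but the representation is insensitive to this choice: for any $p_r, p_r' \in \Gamma_r$ we have $\phi(r)\cdot p_r = G(p_r)$ and $\phi(r)\cdot p_r' = G(p_r')$, so the value $\phi(r)(p_r - \o) - G(p_r) = -\phi(r)(\o)$ does not depend on the choice of $p_r$.
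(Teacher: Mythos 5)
The paper never proves this statement---it is imported verbatim as Theorem~3.5 of \citet{frongillo2014general}---and your argument is essentially the same Savage-style construction used in that source: $G(p)=\sup_r\big(-\E_p[\loss(r,\o)]\big)$ is the negative Bayes risk, $\phi(r)$ is the negated loss vector, and the minimizer characterization follows from the subgradient inequality. Your proposal is correct; the only step you assert without justification is that $\phi(r')\in\partial G(p)$ with $r'\in\Gamma(\P')$ forces $\phi(r')\cdot p=G(p)$ (hence $r'=\Gamma(p)$), which follows in one line by evaluating the subgradient inequality at $q=p_{r'}$ and using $G(p_{r'})=\phi(r')\cdot p_{r'}$.
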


Here $\delta G_r$ is the set of subgradients to $G$ at $r$.

\begin{proof}\textbf{of Theorem \ref{thm:direct_lower}}
\footnote{An alternate proof can also be constructed using results of \citet{osband1985providing}.}
Let $\O' = \O^m$ and $\P' = \P^m$.  Let $\loss$ be a loss function that elicits $\Gamma$ of the form given by Theorem \ref{thm:raf}, and let $G,\{p_r\}$ and $\phi$ be the corresponding values defined in Theorem \ref{thm:raf}.  We will let $\Gamma':\P^m\toto\reals$ be the property that is elicited by $\loss$ on $\conv(\P^m)$.

Note that $\Gamma'$ is not necessarily single-valued everywhere on $\conv(\P^m)$.  This is because we cannot guarantee that there is a unique value that minimizes the loss function for distributions in the interior of $\conv(\P^m)$.  However, we can show that whenever $q \in \conv(P^m)$ can be written as a convex combination of points on $\P^m$ that all have property value $r$ then $\E_{\o \sim q} \loss(r, \o)$ is uniquely minimized at $r$, thus $r$ is the unique property value of $\Gamma'$ at $q$.  This implies the theorem, as if $q$ can be written as a convex combination of two separate level sets of $\Gamma$ then there must not be an $\loss$ of the form specified in Theorem \ref{thm:raf} which elicits it.

If $q = \sum_{i=1}^k \lambda_i p^m_i$ for $p \in \Gamma_{r^*}$, $\lambda_1,\ldots,\lambda_k \in [0,1]$ and $\sum_{i=1}^k \lambda_i =1$ then
\begin{align*}
\E_{\o \sim q} \loss(r, \o) &= \phi(r)(q_r-q) - G(q_r)\\
&= \phi(r)\left(q_r-\sum_{i=1}^k \lambda_i p^m_i\right) - G(q_r)\\
&= \sum_{i=1}^k \lambda_i \left(\phi(r)(q_r- p^m_i) - G(q_r)\right)\\
&= \sum_{i=1}^k \lambda_i \E_{\o \sim p^m_i} L(r, \o).
\end{align*}
We know that each term of the final sum is uniquely minimized by $r = r^*$, thus $\E_{\o \sim q} \loss(r, \o)$ is uniquely minimized by $r^*$.
\end{proof}

\section{Regression}
\label{app:regression}
\newcommand{\risk}{\mathrm{Risk}}
\newcommand{\ER}{\mathrm{Risk}_{\mathrm{emp}}}
\newcommand{\ERc}{\mathrm{Risk}_{\mathrm{cl}}}
In this section, we give a proof-of-concept showing that classic risk bounds for ERM can go through with only slight modification with multi-observation loss functions, under a natural assumption.

Regression can be naturally formulated in the multi-observation setting as follows: Given a hypothesis class $\F: \X \to \R$ and loss function $\loss: \R \times \Y^m \to \reals$, given access to an unknown distribution $\D$ on $\X$ and conditional distributions $\{\D_x \in \Delta_{\Y} : x \in \X\}$, approximately minimize
  \[ \risk(f) = \E_{x \sim \D, \vec{\o} \sim \D_x} \loss(f(x), \o_1,\dots,\o_m) . \]
The central challenge that arises, new to the multi-observation setting, is that the data we are given is of the form $(x_1,y_1), \dots, (x_n,y_n)$ where $x_i \sim \D$ and $y_i \sim \D_{x_i}$ i.i.d.
We may only obtain a single $y$ for any given $x$.
In this section, we give an example of how this obstacle can be overcome under natural assumptions.

For simplicity, let us suppose that $\X \subseteq \reals^d$ (in this section, $d$ is not being used for dimensionality of the report space).
The key idea is that, if the distribution $\D_x$ changes slowly as a function of $x$, then with enough samples, then a set of $m$ close neighbors $x_1,\dots,x_m$ can be viewed as approximating a single $x$ with $m$ ``almost i.i.d.'' conditional draws $\o_1,\dots,\o_m$.
We formalize this intuition here using a Lipschitz condition on the total variation distance:
  \[ D_{TV}(\D_x, \D_{x'}) \leq K \|x - x'\|_2 . \]
However, the exact formalization is less important than the general idea, and we expect that future work will be able to prove similar results with a variety of similar assumptions.

Our approach will be to cluster the data into groups of size $m$ having nearby $x$s, then treat each group as a single sample of the form $(x^*, \o_1,\ldots,\o_m)$ with each $\o_i$ \emph{approximately} i.i.d. from $\D_{x^*}$.
We then have $n'$ ``samples'' of this form, where $n'$ is the number of clusters.
Of course, for this approach, it is necessary that that $m$ be small compared to the total number of samples $n \approx n' m$; we are often interested in the $m=2$ case where our theory and simulations already show dramatic differences from the traditional case of $m=1$.

A classic risk bound translated into our setting is the following, where $R_n$ denotes the \emph{Rademacher complexity} of a hypothesis class.
\begin{theorem}[\citet{bartlett2002rademacher}] \label{thm:bartlett}
  Suppose $\loss$ is $L$-Lipschitz in its first argument and bounded by $c$, $\{x_i\}_{i=1}^n$ are drawn i.i.d. from a distribution $\D$, and each $\vec{\o}_i$ is drawn independently from $\D_{x_i}$.
  Then with probability at least $1-\delta$, for all $f \in \F$,
  \[ \risk(f) \leq \ER(f, \{x_i,\vec{\o}_i\}_{i=1}^n) ~ + ~ 2L R_n(\F) ~ + ~ c\sqrt{\frac{\log 1/\delta}{2n}} . \]
  Here the probability is over the randomness in $\{x_i,\vec{\o}_i\}$.
\end{theorem}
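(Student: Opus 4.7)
The plan is to reduce the statement to the standard Rademacher-complexity risk bound by viewing each pair $z_i = (x_i, \vec{\o}_i)$ as a single i.i.d.\ draw from the joint measure $\mu$ on $\X \times \Y^m$ induced by $x \sim \D$ and then $\vec{\o} \sim \D_x^m$ (since the $\o_{ij}$ are conditionally i.i.d.\ given $x_i$, $\vec{\o}_i$ has a well-defined law on $\Y^m$). Writing $g_f(z) = \loss(f(x), \vec{\o})$ for $z = (x,\vec{\o})$, we have $\risk(f) = \E_{z \sim \mu}[g_f(z)]$ and $\ER(f,\{z_i\}) = \tfrac{1}{n}\sum_i g_f(z_i)$, so the problem becomes a uniform deviation bound for the class $\G = \{g_f : f \in \F\}$ of real-valued functions on $\X \times \Y^m$.

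First I would apply McDiarmid's inequality to $\Phi(z_1,\ldots,z_n) = \sup_{f \in \F}(\risk(f) - \ER(f,\{z_i\}))$. Since $\loss$ is bounded by $c$, replacing a single $z_i$ changes $\Phi$ by at most $c/n$, yielding $\Phi \leq \E[\Phi] + c\sqrt{\log(1/\delta)/(2n)}$ with probability $1-\delta$. Next, the standard symmetrization argument gives $\E[\Phi] \leq 2\, \E\bigl[\sup_{f}\tfrac{1}{n}\sum_i \sigma_i g_f(z_i)\bigr]$ where the $\sigma_i$ are i.i.d.\ Rademacher.

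The remaining step is to peel off the loss via Talagrand's contraction lemma. For each fixed $\vec{\o}$, the map $r \mapsto \loss(r,\vec{\o})$ is $L$-Lipschitz by hypothesis, so conditionally on $\{z_i\}$, contraction applied to the one-dimensional real outputs $f(x_i)$ yields
\[
\E_\sigma\!\left[\sup_{f}\tfrac{1}{n}\sum_i \sigma_i \loss(f(x_i),\vec{\o}_i)\right] \leq L\cdot \E_\sigma\!\left[\sup_{f}\tfrac{1}{n}\sum_i \sigma_i f(x_i)\right].
\]
Taking outer expectation over $\{x_i\}$ recovers $L \cdot R_n(\F)$. Combining the three bounds gives the claimed inequality.

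The only potential obstacle is making sure Talagrand's contraction applies in this vector-observation setting; but since the $\vec{\o}_i$ are conditioned on and the Rademacher signs act only on the scalar outputs $f(x_i)$, the fact that $\loss$ takes an $m$-tuple as its second argument is immaterial. Thus no new ideas beyond the classical proof are required, and the multi-observation structure is absorbed entirely into the definition of the base measure $\mu$.
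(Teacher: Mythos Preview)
Your argument is the standard proof of this classical Rademacher bound and is correct: treating $z_i=(x_i,\vec{\o}_i)$ as an i.i.d.\ draw from the joint law, then McDiarmid plus symmetrization plus Talagrand contraction yields exactly the stated inequality. The only quibble is the bounded-difference constant: ``bounded by $c$'' must be read as $\loss$ taking values in an interval of length $c$ (e.g.\ $[0,c]$) to get $c/n$ rather than $2c/n$, but this is the usual convention for this result.

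As for comparison with the paper: there is nothing to compare. The paper does not prove this theorem; it simply quotes it from \citet{bartlett2002rademacher} as a black box and then invokes it inside the proof of the subsequent result (Theorem~\ref{thm:erm-regression}). Your write-up is therefore not an alternative to the paper's proof but a reconstruction of the cited external result, and it matches the standard Bartlett--Mendelson argument.
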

In other words, if we could actually sample a set $\vec{\o}_i = (\o_{i,1},\dots,\o_{i,m})$ from $\D_{x_i}$ i.i.d., we would reduce to the standard setting.
This theorem is leveraged to prove specific ERM risk bounds depending on $\F$.
Here we just show that this bound changes only slightly in the multi-observation case, with an increase in sample complexity.

Our ``cluster-points'' algorithm roughly functions as follows: draw $n$ i.i.d. data points $x_1^*,\dots,x_n^*$ and $n' = \Omega(n(m+\log(n/\delta))/\epsilon)$ ``scatter points'' of the form $(x,\o)$.
Assign to each $x_i^*$ a set $\vec{\o}_i^*$ of size $m$ where for each $\o_{ij}^*$, its corresponding $x$ has $\|x - x_i^*\|_2 \leq \epsilon$.
We first show that this is possible with probability $1-\delta$, in two lemmas.

\begin{lemma}
\label{lem:chernoff}
Given $x \in [0,1]$, $\epsilon < 1$ and $\Omega((m + \log(1/\delta'))/\epsilon)$ i.i.d. from the uniform distribution over $[0,1]$, with probability at least $1-\delta'$, at least $m$ of the samples fall within $\epsilon$ of $x$.
\end{lemma}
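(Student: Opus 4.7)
The plan is to apply a standard multiplicative Chernoff bound, after first noting a simple lower bound on the probability that a single uniform sample lands within $\epsilon$ of $x$. First I would observe that since $x\in[0,1]$ and $\epsilon<1$, the interval $[x-\epsilon,x+\epsilon]\cap[0,1]$ has Lebesgue measure at least $\epsilon$: it equals $2\epsilon$ when $x\in[\epsilon,1-\epsilon]$, and on the boundary it degenerates to $[0,x+\epsilon]$ or $[x-\epsilon,1]$, each of length at least $\epsilon$. Hence each i.i.d.\ uniform sample lies within $\epsilon$ of $x$ independently with probability $p\geq\epsilon$.

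Next, let $N$ denote the number of samples and let $S=\sum_{i=1}^N X_i$, where $X_i$ is the indicator that the $i$-th sample lands within $\epsilon$ of $x$. Then $S$ is a sum of independent Bernoullis with mean $\mu=Np\geq N\epsilon$. I would choose $N=C(m+\log(1/\delta'))/\epsilon$ for a sufficiently large absolute constant $C$ (e.g.\ $C=8$ suffices) so that $\mu\geq 2m$ and $\mu\geq 8\log(1/\delta')$ simultaneously. By the multiplicative Chernoff bound in the form $\Pr[S\leq(1-\eta)\mu]\leq\exp(-\eta^2\mu/2)$ applied with $\eta=1-m/\mu\geq 1/2$, we obtain
\[
\Pr[S<m]\;\leq\;\Pr[S\leq\mu/2]\;\leq\;\exp(-\mu/8)\;\leq\;\exp(-\log(1/\delta'))\;=\;\delta'.
\]
This yields the claimed probability bound, and the number of samples required is $\Omega((m+\log(1/\delta'))/\epsilon)$ as stated.

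There is essentially no obstacle here; the only mild subtlety is handling the case when $x$ sits near the boundary of $[0,1]$, which is resolved by the observation in the first step that the mass $p$ is still at least $\epsilon$. Everything else is a direct Chernoff computation, and the constants can be tuned to optimize the implicit factor hidden in the $\Omega(\cdot)$.
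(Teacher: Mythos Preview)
Your proof is correct and takes essentially the same approach as the paper: both observe that each sample lands within $\epsilon$ of $x$ with probability at least $\epsilon$, then apply the multiplicative Chernoff bound $\Pr[S\le(1-\eta)\mu]\le e^{-\eta^2\mu/2}$ and solve for the required number of samples. Your treatment is in fact slightly more careful, since you spell out the boundary case $x\notin[\epsilon,1-\epsilon]$ and make the constants explicit, whereas the paper simply asserts the bound $e^{-(1-m/(\epsilon s))^2\epsilon s/2}$ and says ``solving for $s$'' gives the result.
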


\begin{proof}
The probability that a given sample falls within $\epsilon$ of $x$ is at least $\epsilon$.  If we take $s$ samples, then by a standard Chernoff bound we have that the probability of fewer than $m$ samples falling within $\epsilon$ of $x$ is upper bounded by
$$e^{-\left(1-\frac{m}{\epsilon s}\right)^2 \epsilon{s} / 2}.$$
Solving for $s$ when this is $\delta'$ gives us the Lemma.
\end{proof}

\begin{lemma} \label{lem:regress-cluster-alg}
Let $\D$ be the uniform distribution on $[0,1]$.  $n' = O(n(m + \log(n / \delta)) /\epsilon)$ samples of the form $(x,y)$ where $x \sim \D$ and $y \sim D_x$ are sufficient to find, with probability at least $1 - \delta$, a set of $n$ independent samples of the form $(x^*,y^*_1,\ldots,y^*_m)$ where $x^* \sim \D$ and the $y^*_i$s are independent and of the form $y^*_i \sim \D_{x'}$ for $|x' - x^*| \leq \epsilon$.  
\end{lemma}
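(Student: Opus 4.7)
The plan is to construct the desired $n$ tuples one at a time, using for each an independent fresh batch of scatter samples of size just large enough that Lemma \ref{lem:chernoff} gives us $m$ usable points in a single anchor's $\epsilon$-neighborhood with high enough probability to union-bound over all $n$ anchors.

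First I would draw $n$ ``anchor'' samples $x^*_1,\ldots,x^*_n \sim \D$ i.i.d. Then, for each anchor $x^*_i$, I would draw a fresh, independent batch of $s = \Theta((m+\log(n/\delta))/\epsilon)$ pairs $(x_{i,j},y_{i,j})$ where $x_{i,j}\sim\D$ and $y_{i,j}\sim \D_{x_{i,j}}$. By Lemma \ref{lem:chernoff} applied with $\delta' = \delta/n$, for each fixed $i$, with probability at least $1-\delta/n$, at least $m$ of the $x_{i,j}$ fall within $\epsilon$ of $x^*_i$. A union bound over $i=1,\ldots,n$ then gives, with probability at least $1-\delta$, that every anchor has at least $m$ neighbors in its batch; for each $i$, select any such $m$ and set $(y^*_{i,1},\ldots,y^*_{i,m})$ equal to their $y$-values. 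The total sample count is $n + n\cdot s = O(n(m+\log(n/\delta))/\epsilon)$, matching the claim.

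It remains to verify the distributional requirements. Across $i$, the tuples $(x^*_i,y^*_{i,1},\ldots,y^*_{i,m})$ use disjoint batches of fresh draws, so they are mutually independent. Within a tuple, the selection rule for the $m$ indices depends only on the $x_{i,j}$ values (``is $|x_{i,j}-x^*_i|\leq\epsilon$?''), not on the $y_{i,j}$; hence conditioning on the set of selected indices and their $x$-values, the corresponding $y_{i,j}$ remain independent draws from their respective $\D_{x_{i,j}}$, as required. Each such $\D_{x_{i,j}}$ has $|x_{i,j}-x^*_i|\leq\epsilon$, matching the ``$\D_{x'}$ for $|x'-x^*|\leq\epsilon$'' condition.

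The main subtlety, and the reason for drawing fresh batches per anchor rather than one big pool, is preserving independence of the resulting tuples: if we instead drew one common pool and assigned points via a matching procedure, the assignment to one anchor would constrain the options available to another, coupling the tuples and also making within-tuple $y$-independence harder to verify cleanly. Using disjoint fresh batches sidesteps this at the mild cost of a factor of $n$ in sample complexity (already absorbed in the stated bound). A small bookkeeping step I would include is to note that the scheme produces \emph{exactly} $m$ $y$-values per anchor (discarding extras) so that the selection event depends only on the $x_{i,j}$'s and a deterministic tie-break, which is what makes the conditional independence argument go through.
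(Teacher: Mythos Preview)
Your proposal is correct and follows essentially the same approach as the paper: draw $n$ anchors, for each draw a fresh independent batch of size $\Theta((m+\log(n/\delta))/\epsilon)$, apply Lemma~\ref{lem:chernoff} with $\delta'=\delta/n$, and union-bound. Your version is in fact more careful than the paper's, which simply asserts that ``the produced samples trivially fulfill the distributional requirements,'' whereas you explicitly justify both across-tuple independence (disjoint batches) and within-tuple conditional independence of the $y$-values (selection depends only on the $x$-coordinates).
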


\begin{proof}
First we take $m$ samples and use there $x$ values as our $m$ $x^*$s.  For each $x^*$, we take a new set of $n'/m = O((m + \log(n / \delta)) /\epsilon)$ samples $(x_1,y_1),\ldots,(x_{n'/m},y_{n'/m})$.  Let $j_1,\ldots,j_m$ be $m$ distinct indices such that for all $i$, $|x_{j_i} - x^*| \leq \epsilon$.  By Lemma \ref{lem:chernoff} (setting $\delta' = \delta / n$) such a set will exist with probability at least $1 - \delta / n$.  We then construct the sample
$$(x^*,y^*_1,\ldots,y^*_m) = (x^*, y_{j_1},\ldots,y_{j_m}).$$

By a union bound, this algorithm will succeed with probability at least $1-\delta$, and the produced samples trivially fulfill the distributional requirements of the Lemma.
\end{proof}

Now we obtain the desired result.
Note that we can choose $\epsilon$ as small as desired, e.g. $\epsilon = 1/n^2$, with a blowup of $1/\epsilon$ in the sample complexity.
However, a more sophisticated bound would preferably use higher-powered concentration inequalities or a more carefully tailored assumption in order to get a bound holding with higher probability.
\begin{theorem} \label{thm:erm-regression}
  Suppose $\loss$ is $L$-Lipschitz in its first argument and bounded by $c$, $\D$ is uniform on $[0,1]$, and $\{x_i^*, \vec{\o}_i\}_{i=1}^n$ are drawn according to our cluster-points algorithm, taking $n' = O((m+\log(n/\delta))/\epsilon)$ total samples.
  Then with probability at least $1-2\delta - mnK\epsilon$, for all $f \in \F$,
  \[ \risk(f) \leq \ER(f, \{x_i^*,\vec{\o}_i\}_{i=1}^n) ~ + ~ 2L R_n(\F) ~ + ~ c\sqrt{\frac{\log 1/\delta}{2n}}. \]
  Again the probability is over the randomness in $\{x_i^*,\vec{\o}_i\}$.
\end{theorem}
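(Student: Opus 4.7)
The plan is to reduce to Theorem \ref{thm:bartlett} via a coupling argument. By Lemma \ref{lem:regress-cluster-alg}, with probability at least $1-\delta$ the cluster-points algorithm succeeds and produces samples $(x_i^*, \vec{\o}_i)_{i=1}^n$ where the $x_i^*$ are i.i.d.\ from $\D$ and each $\o_{ij}$ is drawn from $\D_{x'_{ij}}$ for some $x'_{ij}$ with $|x'_{ij} - x_i^*| \leq \epsilon$. The key deviation from the setting of Theorem \ref{thm:bartlett} is that the $\o_{ij}$ are not drawn from $\D_{x_i^*}$ exactly, so the first step is to quantify how close this is to the ideal i.i.d.\ setting.

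Next I would use the Lipschitz total-variation assumption. For each $(i,j)$, we have $D_{TV}(\D_{x'_{ij}}, \D_{x_i^*}) \leq K\|x'_{ij} - x_i^*\|_2 \leq K\epsilon$. By the standard coupling characterization of TV distance, for each $(i,j)$ we can construct a maximal coupling between $\o_{ij} \sim \D_{x'_{ij}}$ and an ``ideal'' $\tilde{\o}_{ij} \sim \D_{x_i^*}$ such that $\Pr[\o_{ij} \neq \tilde{\o}_{ij}] \leq K\epsilon$, with the couplings taken independently across $(i,j)$. A union bound over the $mn$ observations yields $\Pr[\o_{ij} = \tilde{\o}_{ij} \text{ for all } (i,j)] \geq 1 - mnK\epsilon$. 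On this ``good'' event, the produced samples $\{x_i^*, \vec{\o}_i\}$ are exactly equal to $\{x_i^*, \tilde{\vec{\o}}_i\}$, which are genuinely i.i.d.\ in the sense required by Theorem \ref{thm:bartlett}.

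Conditioned on the good event, we apply Theorem \ref{thm:bartlett} directly to the ideal samples $\{x_i^*, \tilde{\vec{\o}}_i\}$ to obtain, with probability $1-\delta$ over their draw, the stated bound $\risk(f) \leq \ER(f, \{x_i^*, \tilde{\vec{\o}}_i\}) + 2L R_n(\F) + c\sqrt{\log(1/\delta)/(2n)}$ uniformly over $f \in \F$. Since $\vec{\o}_i = \tilde{\vec{\o}}_i$ on the good event, the empirical risks coincide and the bound also holds with the observed samples $\{x_i^*, \vec{\o}_i\}$ on the left side of the inequality we are interested in. A final union bound over the three failure events (cluster-points failure, coupling failure, Bartlett failure) gives the total failure probability of at most $2\delta + mnK\epsilon$.

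The main subtlety is arguing that the ``ideal'' i.i.d.\ samples $\tilde{\o}_{ij} \sim \D_{x_i^*}$ can be coupled independently across $(i,j)$ with the actually-produced $\o_{ij}$'s in a way that preserves Theorem \ref{thm:bartlett}'s hypothesis. This relies on the fact that the cluster-points algorithm, as described in Lemma \ref{lem:regress-cluster-alg}, draws the $m$ scatter-point observations for different $x_i^*$'s from disjoint fresh batches of samples, so conditioned on the $x_i^*$'s and the choice of indices $j_1,\dots,j_m$, the labels $\o_{ij}$ are conditionally independent draws from their respective $\D_{x'_{ij}}$'s. This lets the per-coordinate couplings be taken jointly independent, which is what is needed for the i.i.d.\ hypothesis of Theorem \ref{thm:bartlett} to apply to the ideal samples.
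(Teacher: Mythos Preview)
Your proposal is correct and follows essentially the same approach as the paper: invoke Lemma~\ref{lem:regress-cluster-alg} for the $1-\delta$ success of clustering, use the $K\epsilon$ total-variation bound on each coordinate to argue that the observed tuples are $mnK\epsilon$-close to ideal i.i.d.\ tuples, and then apply Theorem~\ref{thm:bartlett}. Your coupling formulation (and the care you take with conditional independence across batches) simply makes precise what the paper's proof states tersely as ``by the properties of total variation distance.''
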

\begin{proof}
  With probability $1-\delta$, our ``cluster-points'' algorithm succeeds in finding $\{x_i^*\}_{i=1}^n$ drawn i.i.d. and $\{\vec{\o}_i\}_{i=1}^n$ drawn from $\epsilon$-close points.
  We wish to consider $\ER(f, \{x_i^*,\vec{\o}_i\}_{i=1}^n)$, where each $\vec{\o}_i$ is $Km\epsilon$-close in total variation distance to $\vec{\o}_i^*$, as each member is $K\epsilon$ close.
  So the whole quantity, by the properties of total variation distance, is $mnK\epsilon$-close to $\ER(f, \{x_i,\vec{\o}_i\}_{i=1}^n)$, and we apply Theorem \ref{thm:bartlett}.
\end{proof}

\section{Zero sets of Polynomials over the Real Numbers}
\label{sec:algebra-overview}

Consider a polynomial $f(x_1,\dots,x_n)$ in the set $\mathbb R[x_1,\dots,x_n]$  of polynomials   in $n$ variables with real coefficients.    The \emph{zero set} of   $f(x_1,\dots,x_n)$ is by definition  the set
$$
Z(f(x_1,\dots,x_n)):=\{(\alpha_1,\dots,\alpha_n)\in \mathbb R^n: f(\alpha_1,\dots,\alpha_n)=0\}\subseteq \mathbb R^n.
$$

Recall that a   nonconstant polynomial $f(x_1,\dots,x_n)\in \mathbb R[x_1,\dots,x_n]$ is said to be \emph{irreducible} if  it cannot be written as the product of two polynomials in $\mathbb R[x_1,\dots,x_n]$ of strictly lower degree.   Recall also that a subset $  U\subseteq \mathbb R^n$ is said to be \emph{open in the Euclidean topology} if for every $\alpha=(\alpha_1,\dots,\alpha_n)\in U$, there exists a real number  $\epsilon_\alpha>0$, depending on $\alpha$,  such that the ball of radius $\epsilon_\alpha$ centered at $\alpha$, $B_{\epsilon_\alpha}(\alpha_1,\dots,\alpha_n)$, is contained in $U$:
$$B_{\epsilon_\alpha}(\alpha_1,\dots,\alpha_n):=\left\{(\beta_1,\dots,\beta_n) \in \mathbb R^n: \sqrt{(\beta_1-\alpha_1)^2+\cdots +(\beta_n-\alpha_n)^2}<\epsilon_\alpha \right\}\subseteq U.
$$
With this terminology, we can state the following  theorem:

\begin{theorem}\label{T:E1}
Suppose that $f(x_1,\dots,x_n)\in \mathbb R[x_1,\ldots,x_n]$ is a nonconstant irreducible polynomial, and  $U\subseteq \mathbb R^n$ is an open   subset in the Euclidean topology.  If there is a point $$(\alpha_1,\dots,\alpha_n) \in Z(f(x_1,\dots,x_n))\cap U\subseteq \mathbb R^n$$ such that  
\begin{equation}\label{E:SmNullCond}
\left(\frac{\partial f}{\partial {x_1}}(\alpha_1,\dots,\alpha_n),\dots,\frac{\partial f}{\partial {x_n}}(\alpha_1,\dots,\alpha_n)\right)\ne (0,\dots,0)\in \mathbb R^n,
\end{equation}  
 then there are no nonzero polynomials   of degree less than the degree  of $f(x_1,\ldots,x_n)$  that vanish at every point of the zero set $Z(f(x_1,\dots,x_n))\cap U$.  
\end{theorem}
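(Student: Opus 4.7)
The plan is to derive a contradiction by showing that any nonzero $g\in\mathbb R[x_1,\ldots,x_n]$ vanishing on $Z(f)\cap U$ must be divisible by $f$ in $\mathbb R[x_1,\ldots,x_n]$, which is impossible once $\deg g<\deg f$. First I would work locally: after permuting coordinates, assume $(\partial f/\partial x_n)(\alpha)\neq 0$, and apply the analytic implicit function theorem to obtain a real-analytic function $h(x_1,\ldots,x_{n-1})$ defined on some open $V\subseteq\mathbb R^{n-1}$ containing $(\alpha_1,\ldots,\alpha_{n-1})$, whose graph is contained in $Z(f)\cap U$. Since $g$ vanishes on $Z(f)\cap U$, the composition $y\mapsto g(y,h(y))$ is identically zero on $V$.

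Next I would complexify. Because $(\partial f/\partial x_n)(\alpha)\neq 0$ also when $f$ is viewed in $\mathbb C[x_1,\ldots,x_n]$, the holomorphic implicit function theorem extends $h$ to a holomorphic $H$ on a connected polydisc $V_{\mathbb C}\subseteq\mathbb C^{n-1}$ containing $V$, whose graph lies in the complex hypersurface $V_{\mathbb C}(f)\subseteq\mathbb C^n$. The function $g(y,H(y))$ is then holomorphic on $V_{\mathbb C}$ and vanishes on the totally real subset $V$, so by the identity principle for holomorphic functions of several variables it vanishes identically on $V_{\mathbb C}$. Thus $g$ vanishes on a Euclidean-open subset of the complex variety $V_{\mathbb C}(f)$.

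To conclude, I would argue that $f$ remains irreducible in $\mathbb C[x_1,\ldots,x_n]$: if not, being irreducible over $\mathbb R$ forces a factorization of the shape $f=c\,q\bar q$ with $q\in\mathbb C[x_1,\ldots,x_n]$ nonconstant and $c\in\mathbb R^{*}$, where $\bar q$ is the complex-conjugate polynomial. But then $Z(f)\cap\mathbb R^n$ is cut out by the two independent real equations $\mathrm{Re}(q)=\mathrm{Im}(q)=0$, precluding the existence of a smooth $(n{-}1)$-dimensional real point such as $\alpha$. Hence $V_{\mathbb C}(f)$ is irreducible, and any polynomial vanishing on a Euclidean-open subset must vanish identically, since a proper Zariski-closed subset of an irreducible complex affine variety is nowhere dense in the Euclidean topology. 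Hilbert's Nullstellensatz then gives $g\in I(V_{\mathbb C}(f))=\sqrt{(f)}=(f)$, because $(f)$ is prime, so $f\mid g$ in $\mathbb C[x_1,\ldots,x_n]$ and hence in $\mathbb R[x_1,\ldots,x_n]$, contradicting $0<\deg g<\deg f$.

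The step I expect to be the main obstacle is the passage from vanishing of $g$ on a real-analytic graph to divisibility by $f$ as polynomials; this is exactly where complexification, the holomorphic identity principle, and the Nullstellensatz are essential. The smooth-point hypothesis does its real work here, both in enabling the local graph parametrization and in ruling out the pathological factorization $f=c\,q\bar q$, thereby guaranteeing that $f$ stays irreducible over $\mathbb C$ so that $I(V_{\mathbb C}(f))=(f)$.
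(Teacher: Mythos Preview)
Your argument is correct, and it proceeds by a genuinely different route from the paper's.  The paper derives Theorem~\ref{T:E1} from its Theorem~\ref{T:1G}, a real Nullstellensatz for principal ideals on Euclidean open sets: it shows that the smooth-point hypothesis forces $\dim(Z(f)\cap U)=n-1$ (via \cite[Prop.~3.3.10]{BCR}), and then a Krull-dimension argument on the chain $(f)\subseteq I(Z(f)\cap U)$ with $(f)$ prime gives $(f)=I(Z(f)\cap U)$.  This is intrinsic real algebraic geometry and works over any real closed field.  You instead complexify: the holomorphic implicit function theorem and the identity principle transport the vanishing of $g$ from a real graph to a Euclidean-open patch of $V_{\mathbb C}(f)$, and then the complex Nullstellensatz together with irreducibility of $f$ over $\mathbb C$ gives $f\mid g$.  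Your approach is more self-contained (no BCR machinery) but is tied to $\mathbb R$ through the analytic tools, whereas the paper's version applies uniformly to real closed fields.

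One remark on your Step~4: the phrase ``two independent real equations \dots\ precluding a smooth $(n{-}1)$-dimensional real point'' is the right intuition but is a bit loose as stated, since two real equations can certainly cut out an $(n{-}1)$-dimensional set if their gradients are dependent.  The clean way to close this is to write $q=a+ib$ with $a,b\in\mathbb R[x_1,\dots,x_n]$, so $f=c(a^2+b^2)$; then any real zero $\alpha$ of $f$ satisfies $a(\alpha)=b(\alpha)=0$, and $\nabla f(\alpha)=2c\big(a(\alpha)\nabla a(\alpha)+b(\alpha)\nabla b(\alpha)\big)=0$, directly contradicting \eqref{E:SmNullCond}.  With that tweak, your proof is complete.
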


We expect the theorem is   well known; for instance, the case where  $U=\mathbb R^n$ is a special case of  \cite[Thm.~4.5.1]{BCR}.  
The proof of  \cite[Thm.~4.5.1]{BCR} easily generalizes to  our situation.  
For the convenience of the reader, in Theorem \ref{T:1G} below  we include a generalization of  \cite[Thm.~4.5.1]{BCR} that impiles Theorem \ref{T:E1}.

\begin{remark}[Checking the conditions of Theorem \ref{T:E1}]\label{remark:check-conditions}
There are many techniques for checking that a polynomial $f(x_1,\dots,x_n)\in \mathbb R[x_1,\dots,x_n]$ is irreducible and satisfies the condition  \eqref{E:SmNullCond} for all $(\alpha_1,\dots,\alpha_n)\in Z(f(x_1,\dots,x_n))\cap U$, and therefore satisfies the hypotheses of Theorem \ref{T:E1}.  For $n\ge 2$, we recall the following elementary condition that suffices.    Suppose $f(x_1,\dots,x_n)$ is a nonconstant polynomial of degree $d$.  The homogenization of $f(x_1,\dots,x_n)$ is the degree $d$ homogeneous (all monomials of degree $d$)  polynomial  $F(X_0,X_1,\dots,X_n)\in \mathbb R[X_0,\dots,X_n]$  that is obtained from $f(x_1,\dots,x_n)$ by replacing $x_i$ with $X_i$ for $i=1,\dots,n$, and then multiplying each monomial by a power of $X_0$ until it is of degree $d$.  For instance, if $f(x_1,x_2)=x_1^2+2x_2+3$, then $F(X_0,X_1,X_2)=X_1^2+2X_0X_2+3X_0^2$.      \emph{If the complex zero set 
\begin{equation}\label{E:SmNullCondCC}
\left\{(\alpha_0,\dots,\alpha_n)\in \mathbb C^{n+1}: \frac{\partial F}{\partial {X_0}}(\alpha_0,\dots,\alpha_n)=\cdots=\frac{\partial F}{\partial {X_n}}(\alpha_0,\dots,\alpha_n)=(0,\dots,0)\right\}\subseteq \mathbb C^{n+1}
\end{equation}
is equal to $\{(0,\dots,0)\}$ or $\emptyset$, then $f(x_1,\dots,x_n)$ is irreducible and satisfies \eqref{E:SmNullCond} for all $(\alpha_1,\dots,\alpha_n)\in Z(f(x_1,\dots,x_n))$}.   This is by no means a necessary condition for $f(x_1,\ldots,x_n)$ to satisfy the conditions of Theorem \ref{T:E1}, but it is easy to implement in examples.
  There are a number of other techniques that can  be used, including using computer algebra systems.  
\end{remark}

Using the technique outlined in the remark, and standard results in algebraic geometry, it is elementary to establish the following corollary:

\begin{corollary}\label{C:f_c} Let $n\ge 2$, let $U\subseteq \mathbb R^n$ be a nonempty open subset in the Euclidean topology, let $f(x_1,\dots,x_n)\in \mathbb R[x_1,\dots,x_n]$, and for each $c\in \mathbb R$ define $$f_c(x_1,\dots,x_n):=f(x_1,\dots,x_n)+c.$$
Let $F_{c}(X_0,\cdots,X_n)$ be the homogenization of $f_c(x_1,\dots,x_n)$.  

If for some $c_0\in \mathbb R$ the complex zero set \eqref{E:SmNullCondCC} for  $F_{c_0}(X_0,\dots,X_n)$ is equal to $\{(0,\dots,0)\}\subseteq \mathbb C^{n+1}$ or $\emptyset$, then there is a nonempty open subset $B\subseteq  \mathbb R$ in the Euclidean topology such that for all $c\in B$, there are no nonzero polynomials of degree less than $d$ that vanish at every point of the zero set $Z(f_c(x_1,\dots,x_n))\cap U$.
\end{corollary}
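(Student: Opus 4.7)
The plan is to deduce the corollary from Theorem~\ref{T:E1} by exhibiting an open set $B\subseteq\mathbb{R}$ of parameters $c$ for which both (i) $f_c$ is irreducible with no real singular zeros, and (ii) $Z(f_c)\cap U$ is nonempty. First, I interpret the hypothesis in geometric terms: the partials of $F_{c_0}$ are homogeneous of degree $d-1$, so the requirement that their common complex zero set equals $\{0\}$ or $\emptyset$ is exactly the projective smoothness of the hypersurface $\{F_{c_0}=0\}\subseteq \mathbb{P}^n_{\mathbb{C}}$. Projective smoothness is detected by nonvanishing of the discriminant of $F_c$, which is a polynomial in $c$; by hypothesis it is nonzero at $c_0$, so the set $S\subseteq \mathbb{R}$ of $c$ for which $F_c$ is projectively smooth is the complement of a finite set, in particular Euclidean open and dense.

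Next, I record two consequences of projective smoothness that will be needed for $c\in S$ when $n\ge 2$. First, $F_c$ is irreducible over $\mathbb{C}$: otherwise two components of the hypersurface in $\mathbb{P}^n_{\mathbb{C}}$ would meet in codimension at most two, which is nonempty for $n\ge 2$, and points of intersection would be singular; hence $f_c$ is irreducible over $\mathbb{R}$. Second, every real zero $\alpha\in Z(f_c)$ satisfies \eqref{E:SmNullCond}. For this, $\partial F_c/\partial X_i(1,\alpha)=\partial f_c/\partial x_i(\alpha)$ for $i=1,\ldots,n$, and Euler's relation gives $\partial F_c/\partial X_0(1,\alpha) = -\sum_i \alpha_i\,\partial f_c/\partial x_i(\alpha)$ whenever $f_c(\alpha)=0$, so a vanishing affine gradient would force the full projective gradient to vanish at the nonzero point $(1,\alpha)$, contradicting smoothness.

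The remaining ingredient is to guarantee real zeros of $f_c$ inside $U$. Irreducibility of $f_{c_0}$ of degree $d\ge 1$ forces $f$ itself to be a nonconstant polynomial. Picking any connected open ball $B_0\subseteq U$, the restriction $f|_{B_0}$ is a nonconstant real-analytic function, so $f(B_0)\subseteq \mathbb{R}$ is a connected set not reducing to a single point—i.e.\ an interval of positive length; let $I$ be a nonempty open subinterval of $f(B_0)$. Then for every $c\in -I$ there exists $\alpha\in B_0\subseteq U$ with $f_c(\alpha)=0$, so $Z(f_c)\cap U\neq \emptyset$.

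Finally, set $B=(-I)\cap S$. Being $-I$ minus a finite set, $B$ is a nonempty Euclidean open subset of $\mathbb{R}$. For each $c\in B$, $f_c$ is irreducible and possesses a point of $Z(f_c)\cap U$ at which \eqref{E:SmNullCond} holds, so Theorem~\ref{T:E1} applies directly to yield the conclusion. The main technical points to justify cleanly will be the two algebraic-geometry facts invoked above—openness of projective smoothness in $c$ via nonvanishing of a discriminant polynomial, and irreducibility over $\mathbb{C}$ of smooth projective hypersurfaces of positive dimension—both of which are standard and can be cited or given short intersection-dimension arguments at the level of the appendix.
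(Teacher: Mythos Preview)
Your proposal is correct and matches the approach the paper indicates. The paper does not give a detailed proof of this corollary; it states only that it follows from the technique of Remark~\ref{remark:check-conditions} together with ``standard results in algebraic geometry'' (cf.\ also Remark~\ref{remark:most-polynomials}, which names the discriminant locus explicitly). Your argument is precisely a fleshing-out of that sketch: Remark~\ref{remark:check-conditions} is your step showing that projective smoothness of $F_c$ forces $f_c$ to be irreducible with \eqref{E:SmNullCond} holding at every real zero, while the ``standard'' input is your observation that the discriminant of $F_c$ is a polynomial in $c$ (since only the $X_0^d$ coefficient varies, linearly), so smoothness holds off a finite set of $c$. The one ingredient you supply that the paper leaves completely implicit is the elementary argument that $Z(f_c)\cap U\neq\emptyset$ for an open interval of $c$, via the fact that $f(B_0)$ is a nondegenerate interval; this is needed to invoke Theorem~\ref{T:E1} and is a nice point to make explicit.
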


As a consequence:

\begin{example}\label{E:f_c}
For a given pair of natural numbers $n$ and $d$ with $n\ge 2$, suppose that:
\begin{itemize}
\item For $c\in \mathbb R$, we set $f_c(x_1,\dots,x_n):=x_1^d+\cdots +x_n^d+(1-x_1-\cdots -x_n)^d+c$.
\item $U:=\{(\alpha_1,\dots,\alpha_n) \in \mathbb R^n: \alpha_1,\dots,\alpha_n > 0,  \ \sum_{i=1}^n\alpha_i< 1\}$.  
\end{itemize}
There exists a nonempty open subset $B\subseteq  \mathbb R$ in the Euclidean topology such that for all $c\in B$, there are no nonzero polynomials of degree less than $d$ that vanish at every point of $Z(f_c(x_1,\dots,x_n))\cap U$.   

We can confirm this using the approach in Corollary \ref{C:f_c}:
\begin{eqnarray*}
F_c&=&cX_0^d+X_1^d+\cdots +X_n^d+(X_0-X_1-\cdots -X_n)^d\\
\partial_{X_0}F_c&=&cdX_0^{d-1}+d(X_0-X_1-\cdots-X_n)^{d-1},\\
\partial_{X_1}F_c&=&dX_1^{d-1}-d(X_0-X_1-\cdots-X_n)^{d-1},\\
 & \vdots& \\
\partial_{X_n}F_c&=&dX_n^{d-1}-d(X_0-X_1-\cdots-X_n)^{d-1}.
\end{eqnarray*}
To use Corollary \ref{C:f_c}, we need to consider the complex  zero set \eqref{E:SmNullCondCC}:
\[
\left\{(\alpha_0,\dots,\alpha_n)\in \mathbb C^{n+1}: \partial_{X_0}F_c  (\alpha_0,\dots,\alpha_n)=\cdots=\partial_{X_n}F_c (\alpha_0,\dots,\alpha_n)=(0,\dots,0)\right\}\subseteq \mathbb C^{n+1},
\]
and show that for some $c\in \mathbb R$ it is either empty or equal to $\{(0,\dots,0)\}$.
We consider the  case $c=0$.  Under this assumption, we have 
$$
0=\partial_{X_0}F_c=cdX_0^{d-1}+d(X_0-X_1-\cdots-X_n)^{d-1}\iff (X_0-X_1-\cdots-X_n)=0.
$$
Then, assuming  $X_0-X_1-\cdots-X_n=0$, we have for $i=1,\ldots,n$ that
$$
0=\partial_{X_i}F_c=dX_i^{d-1}-d(X_0-X_1-\cdots-X_n)^{d-1}\iff X_i=0.
$$
With this new information, returning to $\partial_{X_0}F_c$, we see that we also must have $$X_0=0.$$  
In other words, the complex zero set is $\{(0,\dots,0)\}\subseteq \mathbb C^n$, so that our example satisfies the conditions of Corollary \ref{C:f_c}.  
\end{example}

\begin{remark}\label{remark:most-polynomials}
Most polynomials $f(x_1,\dots,x_n)\in \mathbb R[x_1,\dots,x_n]$, $n\ge 2$, satisfy the hypotheses of  Corollary \ref{C:f_c}.  More precisely, 
there is a dense open subset (the complement of linear subspace) of an 
$\binom{n+d}{d}$-dimensional real vector space that parameterizes degree-$d$ polynomials in $n$ variables.  
That subset contains a dense open subset $\Omega$ (the complement of  the discriminant locus; see e.g., \cite{fulton}) such that every $f(x_1,\dots,x_n)\in \Omega$ satisfies the hypotheses of the corollary; i.e., there is  some $c_0\in \mathbb R$ (for instance $c_0=0$) such that the complex zero set \eqref{E:SmNullCondCC} for  $F_{c_0}(X_0,\dots,X_n)$ is equal to $\{(0,\dots,0)\}\subseteq \mathbb C^{n+1}$ or $\emptyset$.
   On the other hand, as described in Example  \ref{E:f_c2} below, it is easy to find  polynomials  $f(x_1,\dots,x_n)\in \mathbb R[x_1,\dots,x_n]$ of degree $d\ge 2$,  and nonempty open subsets $U\subseteq \mathbb R^n$, so that  for every $c\in \mathbb R$
   there exist nonzero polynomials of degree less than $d$ that vanish at every point of the zero set  $Z(f_c(x_1,\dots,x_n))\cap U$.
\end{remark} 
   
\begin{example}\label{E:f_c2}
Consider the polynomial $f(x_1,\dots,x_n)=x_1^2$, and take $U=\mathbb R_{>0}\times \mathbb R^{n-1}$.  Then for every $c\in \mathbb R$ there is a linear polynomial that vanishes  at every point of  $Z(f_c(x_1,\dots,x_n))\cap U$; for $c>0$, we can take any linear polynomial, and for $c\le 0$, we can take $x_1-\sqrt{-c}$.  

  We can construct many more similar examples in the following way.  Let $h(x_1)\in \mathbb R[x_1]$ be a polynomial of degree at least $2$. We have for every $c\in \mathbb R$ that $h(x_1)+c$ factors in $\mathbb R[x_1]$ as a product of linear terms and a product of quadratic terms each having no real root.
  For simplicity, let us assume that for all $c\ne 0$,  the polynomial $h(x_1)+c$ has a root that is not real; e.g., $h(x_1)=x_1^m$ for some natural number $m\ge 3$.  
     Let $g(x_1,\dots,x_n)\in \mathbb R[x_1,\dots,x_n]$ be \emph{any} nonconstant polynomial.   
      Let $\lambda$ be a real root of $h(x_1)$ (if there is one), and let $U$ be the complement of the  zero set of $g(x_1,\dots,x_n)-\lambda$, or simply $\mathbb R^n$ if there is no real root.   Then $f(x_1,\dots,x_n):=h(g(x_1,\dots,x_n))$ has the property that for every $c\in \mathbb R$, there is a nonzero polynomial of degree less than the degree of $f(x_1,\dots,x_n)$ that vanishes at every point of the zero set  $Z(f_c(x_1,\dots,x_n))\cap U$.
\end{example}   

\begin{remark}
Theorem \ref{T:E1} and Corollary \ref{C:f_c} are not interesting in the case $n=1$. For $f(x)\in \mathbb R[x]$ (irreducible or not) there are no nonzero polynomials of degree less than $d$ that vanish at every point of the zero set   $Z(f(x))\cap U$ if and only if \emph{all of the roots of $f(x)$ are real, distinct, and lie in $U$}.
There are standard techniques to check this condition (e.g.,   \cite[pp.12--14]{BCR}).   In  Example \ref{E:f_c} with $n=1$, by inspection one finds that for  $d=1,2$ the condition   holds if and only if $-1<c<1$, and for $d=3,4$ the condition  does not hold for any $c$.   
\end{remark}

\section{The Real Nullstellenstatz for Principal Ideals and Open Sets}
\label{sec:algebra-theorem}

The main goal of this section is to prove the following theorem  generalizing    the well known  real Nullstellenstatz for principal ideals 
(e.g., \cite[Thm.~4.5.1]{BCR}) to allow for Euclidean open sets.

\begin{theorem} \label{T:1G} Let $\mathbb K$ be a real closed field (e.g., $\mathbb K=\mathbb R$).  
 Let $f(x_1,\dots,x_n)\in \mathbb K[x_1,\ldots,x_n]$ be a nonconstant polynomial, and 
 let $U\subseteq \mathbb K^n$ be an open   subset in the Euclidean topology.
 Suppose that 
 \begin{equation}\label{E:ffactor}
f(x_1,\dots,x_n)=f_1(x_1,\dots,x_n)^{m_1}\cdots f_r(x_1,\dots,x_n)^{m_r}
\end{equation}
is a factorization into powers of distinct nonconstant irreducible polynomials.
   The following are equivalent:
\begin{enumerate}

\item  $(f) = I(Z(f)\cap U)$.

\item   $m_1=\cdots=m_r=1$ and for each $i=1,\dots,r$ there is a point $\alpha^{(i)}\in Z(f_i)\cap U$ with $$(\partial_{x_1}f_i(\alpha^{(i)}),\dots,\partial_{x_n}f_i(\alpha^{(i)}))\ne 0\in \mathbb K^n.$$  
For $\mathbb K=\mathbb R$, this  is equivalent to having for each $i$ that  
 $Z(f_i)\cap U$ is a smooth $(n-1)$-dimensional  submanifold of an open neighborhood of  $\alpha^{(i)}$.
 
\item  $m_1=\cdots=m_r=1$ and for each $i=1,\dots,r$  the sign of the polynomial $f_i$ changes on an open ball in $U$ (i.e., for  $i=1,\dots,n$ there is an open ball $B_\epsilon^{(i)}\subseteq U$ and points  $\alpha^{(i)},\beta^{(i)}\in B_\epsilon^{(i)}$ such that  $f_i(\alpha^{(i)})f_i(\beta^{(i)}) < 0$).
 
\item $m_1=\cdots=m_r=1$ and for each $i=1,\dots,r$  the semi-algebraic Krull dimension of the topological space $Z(f_i)\cap U$  (i.e., the Krull dimension of the  ring $\mathbb K[x_1,\dots,x_n]/I(Z(f_i)\cap U)$) satisfies  $$\dim(Z(f_i)\cap U) = n -1.$$
\end{enumerate}
\end{theorem}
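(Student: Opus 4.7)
The plan is to prove the equivalences by establishing (2) $\Leftrightarrow$ (3) $\Leftrightarrow$ (4) through local semi-algebraic analysis on $U$, and then closing the loop by showing (2) $\Rightarrow$ (1) and by constructing explicit counterexamples to (1) when any of (2)--(4) fail. Throughout, I will work over an arbitrary real closed field $\mathbb K$, invoking the Tarski--Seidenberg transfer principle and the implicit function theorem for semi-algebraic functions (as developed in BCR) wherever continuous/smooth arguments over $\mathbb R$ are needed.

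For the equivalence of (2), (3), and (4), I would argue as follows. (2) $\Rightarrow$ (3): if $\nabla f_i(\alpha^{(i)}) \neq 0$ at a point of $Z(f_i) \cap U$, the semi-algebraic implicit function theorem exhibits a ball $B \subseteq U$ around $\alpha^{(i)}$ in which $Z(f_i)$ is the graph of a semi-algebraic function and $f_i$ necessarily changes sign across it. (3) $\Rightarrow$ (2): if $f_i$ changes sign on a ball $B \subseteq U$, then $Z(f_i) \cap B$ is nonempty by continuity; since $f_i$ is irreducible, the singular locus $Z(f_i) \cap Z(\partial_{x_1} f_i, \dots, \partial_{x_n} f_i)$ is a proper algebraic subset of $Z(f_i)$ of dimension at most $n-2$, whereas any topological sign-change boundary must contain a set of dimension $n-1$, forcing a smooth point in $B$. (3) $\Leftrightarrow$ (4) follows from the standard semi-algebraic dimension theory: the zero set of $f_i$ in $U$ has semi-algebraic dimension $n-1$ exactly when it locally separates $U$ into pieces of distinct signs.

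The heart of the argument is (2) $\Rightarrow$ (1). Given $g \in I(Z(f) \cap U)$, I need $f \mid g$. First, the smooth-point hypothesis forces each $f_i$ to be absolutely irreducible: if $f_i = h\bar h$ over $\mathbb C$ with $h$ irreducible, then $Z(f_i) = Z(h) \cap Z(\bar h) \cap \mathbb K^n$ has real dimension at most $n-2$, ruling out a point with nonzero real gradient. By the implicit function theorem, $Z(f_i) \cap U$ contains a semi-algebraic smooth $(n-1)$-submanifold $M_i$ through $\alpha^{(i)}$. Since $f_i$ is absolutely irreducible, the complex variety $V_i = Z_{\mathbb C}(f_i)$ is irreducible of complex dimension $n-1$, and $M_i$ is a real analytic submanifold of $V_i$ of matching real dimension; standard Zariski density (adapted from BCR 4.5.1, or via analytic continuation and the transfer principle) shows $M_i$ is Zariski dense in $V_i$. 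Since $g$ vanishes on $M_i$, it vanishes on $V_i$, so the complex Nullstellensatz gives $f_i \mid g$ in $\mathbb C[x]$ and hence in $\mathbb K[x]$. As the $f_i$ are pairwise coprime and each $m_i = 1$, we conclude $f = \prod_i f_i \mid g$.

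For (1) $\Rightarrow$ (2), I argue contrapositively. If some $m_i \geq 2$, then $g := f / f_i \in I(Z(f)) \subseteq I(Z(f) \cap U)$ has degree strictly less than $\deg f$, so $g \notin (f)$, violating (1). If instead every $m_i = 1$ but condition (4) fails for some $i$ (equivalently, (2) or (3) fails), then $\dim(Z(f_i) \cap U) \leq n-2$, so $Z(f_i) \cap U$ lies in a proper algebraic subset cut out by some polynomial $p$ not divisible by $f_i$; the product $g := p \cdot (f/f_i)$ vanishes on $Z(f) \cap U$ yet $f_i \nmid g$, again violating (1). The main obstacle in executing this plan is the Zariski density step in (2) $\Rightarrow$ (1)---asserting that a smooth semi-algebraic $(n-1)$-submanifold of an absolutely irreducible complex variety of dimension $n-1$ is Zariski dense over the complex numbers---which is the technical content that allows us to localize the classical real Nullstellensatz to an arbitrary Euclidean open set $U$.
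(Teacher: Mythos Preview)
Your proposal is correct, but the route you take for the key implication (2)~$\Rightarrow$~(1) is genuinely different from the paper's and somewhat more circuitous. The paper never passes to the algebraic closure: for each irreducible $f_i$, it observes that $(f_i)$ is a prime ideal of $\mathbb K[x_1,\dots,x_n]$ with $\dim \mathbb K[x_1,\dots,x_n]/(f_i)=n-1$, and that the smooth-point hypothesis (via BCR Prop.~3.3.10) forces $\dim \mathbb K[x_1,\dots,x_n]/I(Z(f_i)\cap U)=n-1$ as well; since $(f_i)\subseteq I(Z(f_i)\cap U)$ and $(f_i)$ is prime of the same Krull dimension, equality follows immediately. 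Your approach instead first proves that each $f_i$ is \emph{absolutely} irreducible (which is true, and your argument for it is fine, though the cleaner reason is simply that $f_i=h\bar h$ forces $\nabla f_i$ to vanish at every real zero), then invokes Zariski density of the smooth real patch in the complex hypersurface and the complex Nullstellensatz. This works, but notice that the Zariski-density step itself rests on the same semi-algebraic dimension fact ($\dim \mathbb K[x]/I(M_i)\ge n-1$), and once you have that, the primality of $(f_i)$ over $\mathbb K$ already finishes the argument without ever mentioning $\mathbb C$. So the detour through absolute irreducibility and the complex Nullstellensatz buys nothing here; the paper's purely real Krull-dimension argument is both shorter and more elementary. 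Your treatments of (2)~$\Leftrightarrow$~(3)~$\Leftrightarrow$~(4) and of (1)~$\Rightarrow$~(2) are in the same spirit as the paper's, with minor variations (e.g.\ you argue (3)~$\Rightarrow$~(2) via the singular locus, whereas the paper goes through dimension and BCR Lemma~4.5.2).
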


We expect this result  is known to the experts (the case where $f$ is irreducible and   $U=\mathbb R^n$ is \cite[Thm.~4.5.1]{BCR}), but  for lack of a  reference  we provide a proof  in \sect \ref{S:T1Gpf}.  See \sect \ref{S:notation} for an explanation of the notation.

\begin{remark}
The case $n=1$  is elementary and has the following simple interpretation:  \emph{we have  $(f(x)) = I(Z(f(x))\cap U)$ if and only if all of the roots of $f(x)$ in an algebraic closure $\overline {\mathbb K}$ are  distinct, and lie in $U\subseteq \mathbb K$.}    There are standard techniques to check this condition (e.g.,   \cite[pp.12--14]{BCR}).
\end{remark}

\begin{remark}
If $f(x_1,\dots,x_n)$ is given as in \eqref{E:ffactor}, then the radical of the ideal $(f)$ is the ideal  $\sqrt{(f)}=(f_1\cdots f_r)$.  Thus Theorem \ref{T:1G} also gives conditions   for when there is an equality $\sqrt{(f)}=I(Z(f)\cap U)$.  
\end{remark}

\subsection{Notation and conventions}\label{S:notation}
Let $K$ be a field.  Given an ideal $I\subseteq K[x_1,\dots,x_n]$ we will be interested in both the closed subscheme
$$
V(I)\subseteq \mathbb A^n_K,
$$
as well as 
 the zero set
$$
V(I)(\operatorname{Spec} K)\simeq Z_K(I):=\{\alpha\in K^n: f(\alpha)=0,\text{ for all } f\in I\}\subseteq K^n. 
$$
If the field is clear from the context, we will write $Z(I)=Z_K(I)$.   
For a subset $S\subseteq K^n$, we denote as usual the ideal of polynomials vanishing on $S$ as
$$
I(S):=\{g(x_1,\dots,x_n)\in   K[x_1,\dots,x_n] : g(s)=0  \ \text{ for all } \ s\in S\}.
$$

We refer the reader to \cite[Def.~1.1.9, Def.~1.2.1]{BCR} for a review of the definition of a real closed field.  In particular, such a field $\mathbb K$ is of characteristic $0$ and is an ordered field; the Euclidean topology on $\mathbb K^n$ then has a basis given by the open balls 
$$
B_\epsilon(\alpha):=\{\beta\in \mathbb K^n: \sum_{i=1}^n(\beta_i-\alpha_i)^2<\epsilon^2\}
$$
for all $\alpha \in \mathbb K^n$ and all $\epsilon\in \mathbb K$ with $\epsilon >0$.  

\subsection{The principal Nullstellensatz}  For an ideal $I\subseteq K[x_1,\dots,x_n]$, there is a natural inclusion
\begin{equation}\label{E:NSSInc}
\sqrt I\subseteq I(Z(I)).
\end{equation}
Hilbert's Nullstellensatz asserts that over an algebraically closed field $\overline K=K$, this inclusion   is an equality.  
Focusing on principal ideals, this reads 
\begin{equation}\label{E:NSSInc2}
 \ \ \ \  \sqrt{(f)}=I(Z(f)), \ \ \ \ \  (K=\overline K);
\end{equation}
in other words $(f) =I(Z(f))$ whenever $f$ is reduced and $K=\overline K$ is algebraically closed.  

Over nonalgebraically closed fields \eqref{E:NSSInc2} clearly fails; i.e., one may have
$$\sqrt{(f)}\subsetneq I(Z(f)).$$
 For instance, trivially, one has in $\mathbb Q[x]$ that $\sqrt {(x^2+1)}=(x^2+1)\subsetneq \mathbb Q[x]=I(\emptyset)=I(Z(x^2+1))$.   
The following example is a little more interesting:

\begin{example}\label{E:2}  Consider $f(x,y)=x^2+y^2-x^3\in \mathbb R[x,y]$, and the zero set $Z(f)\subseteq \mathbb R^2$.  
   It is a cubic plane curve  with an isolated point at $(0,0)\in \mathbb R^2$. In particular, if we take $U=B_\epsilon(0,0)$ to be a small ball around $(0,0)$ in $\mathbb R^2$, then $\sqrt{ (x^2+y^2-x^3)}= (x^2+y^2-x^3) \ne (x,y)= I(Z(x^2+y^2-x^3)\cap U)$.     On the other hand, it is true that $(x^2+y^2-x^3)=I(Z(x^2+y^2-x^3))$.
\end{example}

\subsection{The connection with dimension}

\begin{proposition} \label{P:K}
 Let $f(x_1,\dots,x_n)\in K[x_1,\ldots,x_n]$ be a nonconstant irreducible polynomial, and 
 let $U\subseteq   K^n$ be \emph{any} subset.
   The following are equivalent:
\begin{enumerate}

\item  $(f) = I(Z(f)\cap U)$.

\item The semi-algebraic Krull dimension of the topological space $Z(f)\cap U$  (i.e., the Krull dimension of the ring  $K[x_1,\dots,x_n]/I(Z(f)\cap U)$) satisfies  $$\dim(Z(f)\cap U) = n -1.$$
\end{enumerate}
\end{proposition}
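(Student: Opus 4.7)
The plan is to leverage two standard facts from commutative algebra: since $f$ is irreducible in the UFD $K[x_1,\dots,x_n]$, the ideal $(f)$ is prime, and the quotient $R := K[x_1,\dots,x_n]/(f)$ is a finitely generated integral domain of Krull dimension $n-1$ (by Krull's principal ideal theorem, applied to a non-unit non-zero element in a regular ring of dimension $n$). The proposition's definition identifies the ``semi-algebraic Krull dimension'' of $Z(f)\cap U$ with the ordinary Krull dimension of $K[x_1,\dots,x_n]/I(Z(f)\cap U)$, so it suffices to work entirely in the commutative-algebra setting.

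\textbf{Direction (1) $\Rightarrow$ (2).} If $(f) = I(Z(f)\cap U)$, then $K[x_1,\dots,x_n]/I(Z(f)\cap U) = R$, which has Krull dimension $n-1$ by the fact noted above.

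\textbf{Direction (2) $\Rightarrow$ (1).} We always have the chain of inclusions
\[
(f) \;\subseteq\; I(Z(f)) \;\subseteq\; I(Z(f)\cap U),
\]
where the first inclusion uses that every $g \in (f)$ vanishes on $Z(f)$, and the second uses that $Z(f)\cap U \subseteq Z(f)$. Suppose for contradiction that $(f) \subsetneq I(Z(f)\cap U)$. Then the image $J$ of $I(Z(f)\cap U)$ in $R$ is a nonzero ideal in the Noetherian integral domain $R$. Choose any nonzero $a \in J$; since $R$ is a domain, $a$ is a non-zerodivisor, so Krull's principal ideal theorem gives
\[
\dim(R/J) \;\le\; \dim(R/(a)) \;\le\; \dim R - 1 \;=\; n-2.
\]
But $R/J = K[x_1,\dots,x_n]/I(Z(f)\cap U)$, so its dimension is at most $n-2$, contradicting (2). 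Hence the inclusion must be an equality, giving (1).

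The main obstacle to watch is ensuring the notion of ``semi-algebraic Krull dimension'' referenced in the statement matches the ring-theoretic Krull dimension in the generality used here (arbitrary $U$, not necessarily semi-algebraic). The parenthetical in the proposition pins this down as the Krull dimension of $K[x_1,\dots,x_n]/I(Z(f)\cap U)$, so no further translation is needed; everything reduces to the dimension-drop argument above, which is a one-line application of PIT in a finitely generated domain over a field.
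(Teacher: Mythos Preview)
Your proof is correct and follows essentially the same approach as the paper's. Both directions rely on the same ingredients: $(f)$ is prime since $f$ is irreducible in a UFD, $\dim K[x_1,\dots,x_n]/(f)=n-1$, and the containment $(f)\subseteq I(Z(f)\cap U)$; for $(2)\Rightarrow(1)$ the paper argues directly from the definition of Krull dimension (a strict containment would let one prepend the prime $(f)$ to any maximal chain above $I(Z(f)\cap U)$), whereas you phrase the same dimension drop via Krull's principal ideal theorem applied to a nonzero $a\in J$---these are two formulations of the same one-step height/coheight argument.
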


\begin{proof}
(1) $\implies$ (2).   By assumption we have $(f)= I(Z(f)\cap U)$.
Now  the Krull dimension of $K[x_1,\dots,x_n]$ is $n$ (e.g., \cite[Exe.~11.7]{AM69}).  Consequently, since $f$ is neither a zero divisor nor a unit, we have that the Krull dimension of $K[x_1,\dots,x_n]/(f)$ is $(n-1)$ (e.g., \cite[Cor.~11.7]{AM69}; using that $f$ is irreducible, this is even easier).   Note that this direction does not require that $f$ be irreducible.

(2) $\implies$ (1). 
We have inclusions  
\begin{equation}\label{E:PK}
 (f)\subseteq I(Z(f)\cap U)\subseteq K[x_1,\ldots,x_n].
\end{equation}
 As above, since $f$ is neither a zero divisor nor a unit, we have that the Krull dimension of the ring $K[x_1,\dots,x_n]/(f)$ is $(n-1)$.  
By assumption, the Krull dimension of $K[x_1,\dots,x_n]/I(Z(f)\cap U)$  is also $(n-1)$.  
Now since $(f)$ is prime (finally using that $f$ is  irreducible), and has the same Krull dimension as the ideal $I(Z(f)\cap U)$, it follows from the containment \eqref{E:PK} and  the definition of Krull dimension that the two ideals are equal.
\end{proof}

\subsection{The connection with smoothness}

We say a zero set $Z(I)\subseteq K^n$ is smooth at a point $\alpha\in Z(I)$ if the associated scheme $V(I)\subseteq \mathbb A^n_K$ is smooth at the point $(x_1-\alpha_1,\dots,x_n-\alpha_n)\in V(I)$.    We will  also simply say that $V(I)$ is smooth at $\alpha$.  Recall that if $I=(f)$ is principal, and $\alpha\in Z(f)$, then $V(f)$ is smooth at $(x_1-\alpha_1,\dots,x_n-\alpha_n)$ if and only if 
$(\partial_{x_1}f(\alpha ),\dots,\partial_{x_n}f(\alpha ))\ne 0\in K^n$.  

\begin{lemma} \label{L:NullSmooth}  Suppose $K$ is perfect.  Let $f(x_1,\dots,x_n)\in K[x_1,\ldots,x_n]$ be a nonconstant  polynomial, and 
 let $U\subseteq   K^n$ be \emph{any} subset.
  Then:
\begin{enumerate}

\item  $(f) = I(Z(f)\cap U)$,

\end{enumerate}
implies
\begin{enumerate}
 
\item[(2)] 
   There is a point $\alpha^{(0)}\in Z(f)\cap U$ with $$(\partial_{x_1}f(\alpha^{(0)}),\dots,\partial_{x_n}f(\alpha^{(0)}))\ne 0\in K^n.$$  
  In other words, there is a point in $U$ at which $V(f)$ is a smooth scheme.

\end{enumerate}
\end{lemma}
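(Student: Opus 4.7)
The plan is to argue by contrapositive: assume every point of $Z(f)\cap U$ is a singular point of $V(f)$, meaning $\partial_{x_i} f$ vanishes on $Z(f)\cap U$ for every $i$, and deduce that $(f)\ne I(Z(f)\cap U)$. Under the assumption (1), the hypothesis forces each partial derivative $\partial_{x_i}f$ to lie in $I(Z(f)\cap U) = (f)$. But $\deg \partial_{x_i} f < \deg f$, so the only way for $\partial_{x_i} f$ to be divisible by $f$ is $\partial_{x_i} f = 0$ identically, for every $i$.

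If $\mathrm{char}(K) = 0$, this already forces $f$ to be a constant, contradicting the hypothesis that $f$ is nonconstant, and we are done. So the interesting case is $\mathrm{char}(K) = p > 0$. Here $\partial_{x_i}f = 0$ for all $i$ means that every monomial appearing in $f$ involves each $x_i$ only to a power divisible by $p$; equivalently, $f$ is a polynomial in $x_1^p,\dots,x_n^p$. This is where perfectness enters: since $K$ is perfect, every coefficient of $f$ is a $p$-th power in $K$, and using the Frobenius identity $(a+b)^p = a^p + b^p$ we can write $f = g^p$ for some $g\in K[x_1,\dots,x_n]$ with $\deg g = (\deg f)/p < \deg f$.

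Now I would exploit that $Z(g^p) = Z(g)$ as subsets of $K^n$, so $Z(f)\cap U = Z(g)\cap U$, and therefore $g \in I(Z(g)\cap U) = I(Z(f)\cap U) = (f)$ by the assumption (1). Hence $f \mid g$ in $K[x_1,\dots,x_n]$, which together with $\deg g < \deg f$ forces $g = 0$, and so $f = 0$, contradicting again that $f$ is nonconstant. This completes the contrapositive. The main subtlety, and the only place where the hypothesis that $K$ is perfect is really used, is the step producing $g$ with $f = g^p$; without perfectness one could have $f$ a polynomial in $x_1^p,\dots,x_n^p$ with coefficients that are not themselves $p$-th powers (for example $f(x) = tx^p \in \mathbb{F}_p(t)[x]$), blocking the contradiction.
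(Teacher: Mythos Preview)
Your proposal is correct and follows essentially the same approach as the paper: argue by contrapositive, observe that the partials $\partial_{x_i}f$ lie in $I(Z(f)\cap U)$ but have degree strictly less than $\deg f$, and in the positive-characteristic case use perfectness to write $f=g^p$ with $g\in I(Z(f)\cap U)\setminus(f)$. The only difference is cosmetic: you assume (1) throughout and derive a contradiction (forcing all partials to vanish identically, then $g=0$), whereas the paper phrases the same degree argument as directly exhibiting an element of $I(Z(f)\cap U)\setminus (f)$ without first assuming (1).
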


\begin{proof}
  We will show the contrapositive.  Suppose that (2) fails.  
This means that $\partial_{x_1}f, \dots, \partial_{x_n}f\in I(Z(f)\cap U)$.
But since $f$ is nonconstant and $K$ is perfect,  either    there is an  $i$ such that $\partial_{x_i}f$ is nonzero, or  $\operatorname{char}(K)=p>0$ and there exists a polynomial $g\in K[x_1,\ldots,x_n]$ such that $f=g^p$ (e.g., \cite[Ch.~9 Ex.~10, p.524]{CLO}).  In the first case, 
since  $\partial_{x_i}f$ is nonzero of degree less than the degree of $f$,  it cannot be a multiple of $f$, and therefore is not in $(f)$.  Thus  $(f) \subsetneq I(Z(f)\cap U)$, and  (1) fails. 
 In the second case, where $f=g^p$, we have   $g\in  I(Z(f)\cap U)$, while $g\notin (f)$, again for degree reasons,  so that    (1) also fails in this case.
\end{proof}

The following example shows that the converse to Lemma \ref{L:NullSmooth} need not hold.

\begin{example}
Let $K=\mathbb Q$ and let $f(x_1,x_2)=x_1^3+x_2^3-1$.  Then $Z(f)\subseteq \mathbb Q^2$ is a finite set of points, and in particular one can show that  $(f)\subsetneq I(Z(f))$.   On the other hand, at the point say $(1,0)\in Z(f)$, one has $(\partial_{x_1}f(1,0), \partial_{x_2}f(1,0))=(3,0)\ne 0\in \mathbb Q^2$.  
\end{example}

Nevertheless, a converse to Lemma \ref{L:NullSmooth} does hold over the real and complex numbers.  This is essentially because the implicit function theorem asserts that condition (2) implies that the zero set is   an $(n-1)$-dimensional manifold in a neighborhood of the given point.  In fact, one can also establish the converse over real closed fields:

\begin{lemma} \label{L:SmoothNull}  Suppose $K=\mathbb K$ is real closed or equal to $\mathbb C$.  
 Let $f(x_1,\dots,x_n)\in \mathbb K[x_1,\ldots,x_n]$ be a nonconstant irreducible  polynomial, and 
 let $U\subseteq   \mathbb K^n$ be an open subset in the Euclidean topology.
  Then:
\begin{enumerate}

\item  $(f) = I(Z(f)\cap U)$,

\end{enumerate}
is implied by 
\begin{enumerate}
 
\item[(2)] 
   There is a point $\alpha^{(0)}\in Z(f)\cap U$ with $$(\partial_{x_1}f(\alpha^{(0)}),\dots,\partial_{x_n}f(\alpha^{(0)}))\ne 0\in \mathbb K^n.$$  
  In other words, there is a point in $U$ at which $V(f)$ is a smooth scheme.

\end{enumerate}
\end{lemma}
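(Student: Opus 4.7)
The plan is to reduce Lemma \ref{L:SmoothNull} directly to Proposition \ref{P:K}, which already establishes (for irreducible $f$) that $(f) = I(Z(f)\cap U)$ holds if and only if the Krull dimension of $\mathbb K[x_1,\dots,x_n]/I(Z(f)\cap U)$ equals $n-1$. Thus the only thing to verify is that hypothesis (2) forces this dimension equality. The upper bound $\dim(\mathbb K[x_1,\dots,x_n]/I(Z(f)\cap U)) \leq n-1$ is automatic: the trivial inclusion $(f) \subseteq I(Z(f)\cap U)$ realizes the quotient as a quotient of the integral domain $\mathbb K[x_1,\dots,x_n]/(f)$, whose Krull dimension is $n-1$ (since $(f)$ is a height-one prime by irreducibility of $f$), and any further nontrivial quotient of a finitely generated domain strictly decreases the dimension.

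For the matching lower bound I would use the implicit function theorem near $\alpha^{(0)}$. After permuting coordinates, assume $\partial_{x_n} f(\alpha^{(0)}) \neq 0$. In the complex case the classical holomorphic implicit function theorem, and in the real closed case the Nash / semi-algebraic implicit function theorem (as in \cite[Cor.~2.9.8]{BCR}), produces open Euclidean balls $\alpha^{(0)} \in B \subseteq U$ and $(\alpha_1^{(0)},\dots,\alpha_{n-1}^{(0)}) \in B' \subseteq \mathbb K^{n-1}$ together with a smooth function $h: B' \to \mathbb K$ whose graph is exactly $Z(f) \cap B$. I would then show the composition $\mathbb K[x_1,\dots,x_{n-1}] \hookrightarrow \mathbb K[x_1,\dots,x_n] \twoheadrightarrow \mathbb K[x_1,\dots,x_n]/I(Z(f)\cap U)$ is injective: if a nonzero polynomial $P(x_1,\dots,x_{n-1})$ were in $I(Z(f)\cap U)$, then pulling back along the graph parametrization would force $P$ to vanish identically on the nonempty Euclidean-open set $B' \subseteq \mathbb K^{n-1}$, which is impossible for a nonzero polynomial over $\mathbb C$ or a real closed field. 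This injectivity exhibits $n-1$ algebraically independent elements in the quotient, so $\dim(\mathbb K[x_1,\dots,x_n]/I(Z(f)\cap U)) \geq n-1$, and Proposition \ref{P:K} concludes the proof.

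The main obstacle is marshalling the correct form of the implicit function theorem in the real closed setting, where one cannot invoke the analytic or $C^\infty$ version directly and must instead use the Nash/semi-algebraic variant. A secondary subtlety is that the real algebraic set $Z(f)$ may well have other components or singular strata disjoint from $\alpha^{(0)}$, so it is important that the argument only uses the single local smooth chart to witness dimension $n-1$ and then lets Proposition \ref{P:K} do the global work of converting the dimension equality into the desired ideal-theoretic equality; this is also why it is essential that $f$ be irreducible, so that $(f)$ is prime of height one and the dimension comparison is tight.
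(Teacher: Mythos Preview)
Your proof is correct and follows the same overall architecture as the paper's: both reduce to Proposition~\ref{P:K} by establishing that $\dim\bigl(\mathbb K[x_1,\dots,x_n]/I(Z(f)\cap U)\bigr)=n-1$, with the upper bound coming trivially from the inclusion $(f)\subseteq I(Z(f)\cap U)$.

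The difference lies in how the lower bound is obtained. The paper invokes \cite[Prop.~3.3.10]{BCR} as a black box: that result directly says that if a real algebraic set has a point where the defining polynomial has nonvanishing gradient, then the Zariski closure has dimension $n-1$, and since $I(Z(f)\cap U)=I(\overline{Z(f)\cap U}^{\mathrm{Zar}})$ one is done in one line. You instead unpack this by hand: the semi-algebraic implicit function theorem \cite[Cor.~2.9.8]{BCR} gives a local graph description of $Z(f)\cap U$ over an open ball $B'\subseteq\mathbb K^{n-1}$, and then the observation that no nonzero polynomial vanishes on a Euclidean-open subset of $\mathbb K^{n-1}$ shows $\mathbb K[x_1,\dots,x_{n-1}]\hookrightarrow \mathbb K[x_1,\dots,x_n]/I(Z(f)\cap U)$. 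Your route is more elementary and self-contained, essentially reproving the relevant special case of \cite[Prop.~3.3.10]{BCR}; the paper's route is shorter but leans on a stronger off-the-shelf result. One small point worth tightening: the passage from ``$n-1$ algebraically independent elements in the quotient'' to ``Krull dimension $\ge n-1$'' is immediate when the quotient is a domain, but here $I(Z(f)\cap U)$ is a priori only radical; the step still goes through (some minimal prime must miss the image of $\mathbb K[x_1,\dots,x_{n-1}]$, and the corresponding component then has transcendence degree $\ge n-1$), but it would be cleaner to say so explicitly.
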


\begin{proof}
Consider the case $K=\mathbb K$ is real closed.    
Let  $\overline{(Z(f)\cap U)}^{\text{Zar}}\subseteq \mathbb K^n$ be the closure in  the Zariski topology.   Now using  condition (2), and (iii) $\implies$ (ii) of  \cite[Prop.~3.3.10]{BCR}, we have that $\dim \mathbb K[x_1,\dots,x_n]/I(\overline{(Z(f)\cap U)}^{\text{Zar}})=n-1$.    (We are applying  \cite[Prop.~3.3.10]{BCR} with $V=\overline{(Z(f)\cap U)}^{\text{Zar}}$ and $P_1=f$.)
Now we observe that $I(Z(f)\cap U)=I(\overline{(Z(f)\cap U)}^{\text{Zar}})$, and   conclude that $\dim (Z(f)\cap U)=n-1$.  
 Note that so far we did not use that $f$ was irreducible, as this is not required in \cite[Prop.~3.3.10]{BCR}.  
To conclude (1), we use Proposition \ref{P:K}, and the assumption that $f$ is irreducible.  
The case where $K=\mathbb C$ is standard, and can be proven in a similar way. 
\end{proof}

\subsection{The connection with the sign of the polynomial}

\begin{lemma} \label{L:Sign}  Suppose $K=\mathbb K$ is real closed.  
 Let $f(x_1,\dots,x_n)\in \mathbb K[x_1,\ldots,x_n]$ be a nonconstant irreducible  polynomial, and 
 let $U\subseteq   \mathbb K^n$ be an open subset in the Euclidean topology.
  Then the following are equivalent:
\begin{enumerate}

\item  $(f) = I(Z(f)\cap U)$.

\item  The sign of the polynomial $f$ changes on an open ball in $U$ (i.e., there is an open ball $B_\epsilon\subseteq U$ such that  $f(\alpha)f(\beta) < 0$ for some $\alpha,\beta\in B_\epsilon$).

\end{enumerate}
\end{lemma}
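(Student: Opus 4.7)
The plan is to use the equivalence as a bridge between the earlier Lemmas \ref{L:NullSmooth} and \ref{L:SmoothNull}, translating the algebraic condition $(f) = I(Z(f) \cap U)$ and the geometric condition ``sign change on an open ball'' through the intermediary of ``existence of a smooth point of $V(f)$ in $U$.''

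For (1) $\Rightarrow$ (2), I would apply Lemma \ref{L:NullSmooth} (valid since $\mathbb{K}$ is perfect) to obtain a point $\alpha^{(0)} \in Z(f) \cap U$ with some partial $\partial_{x_i} f(\alpha^{(0)}) \neq 0$. Since $U$ is Euclidean open, an open ball $B_{\epsilon_0}(\alpha^{(0)}) \subseteq U$ exists. The key fact is that the exact Taylor expansion along the line $t \mapsto \alpha^{(0)} + t e_i$ reads $f(\alpha^{(0)} + t e_i) = t \, \partial_{x_i} f(\alpha^{(0)}) + t^2 r(t)$ for a polynomial $r(t)$. Bounding $|r(t)|$ on a small interval (using that $\mathbb{K}$ is ordered and $r$ is a polynomial), the linear term dominates for sufficiently small $|t|$, so $f(\alpha^{(0)} + t e_i)$ and $f(\alpha^{(0)} - t e_i)$ have opposite signs. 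Taking $\epsilon \le \epsilon_0$ small enough places both points inside $B_\epsilon(\alpha^{(0)}) \subseteq U$, giving the required sign change.

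For (2) $\Rightarrow$ (1), the goal is to produce a smooth point of $V(f)$ in $U$ and then invoke Lemma \ref{L:SmoothNull} (using irreducibility). Assume $f$ changes sign on $B_\epsilon \subseteq U$. By the semi-algebraic cell decomposition theorem (e.g.\ \cite[Thm.~2.3.6]{BCR}), $B_\epsilon$ decomposes into finitely many $C^\infty$ cells compatible with $f$ and $\partial_{x_1}f, \ldots, \partial_{x_n}f$, with the sign of each polynomial constant on each cell. Since $f$ takes both positive and negative values, there are open top-dimensional cells $C^+$ and $C^-$ with $f>0$ and $f<0$ respectively. Because a semi-algebraic subset of dimension $\le n-2$ cannot separate an open ball into distinct connected components, some cell $C \subseteq Z(f) \cap B_\epsilon$ of dimension exactly $n-1$ must lie in the common frontier of a positive and a negative cell.

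I claim $\nabla f$ does not vanish identically along $C$. Indeed, in local coordinates $(y, z)$ where $C = \{z = 0\}$ and $z$ is the normal direction, vanishing of $\nabla f$ everywhere on $C$ combined with $f|_C = 0$ forces $f(y,z) = z^2 h(y,z)$ near $C$. But then the sign of $f$ on each side of $C$ agrees with the sign of $h(y,0)$, contradicting the fact that $C$ separates a positive and negative cell. Hence there is some $\gamma \in C \subseteq Z(f) \cap U$ with $\nabla f(\gamma) \ne 0$, and Lemma \ref{L:SmoothNull} yields $(f) = I(Z(f) \cap U)$. The main obstacle is the cell-decomposition argument in the reverse direction; the delicate point is ruling out the possibility that the separating locus consists entirely of critical zeros, which is precisely where the normal-form expansion $f = z^2 h$ and the sign consideration are essential.
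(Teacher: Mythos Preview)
Your argument for $(1)\Rightarrow(2)$ is essentially the paper's: both apply Lemma~\ref{L:NullSmooth} to get a point $\alpha^{(0)}\in Z(f)\cap U$ with some $\partial_{x_i}f(\alpha^{(0)})\neq 0$, and then look at the one-variable restriction along the $x_i$-axis through $\alpha^{(0)}$ to see the sign change.

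For $(2)\Rightarrow(1)$ your route differs from the paper's, and your specific argument has a gap. The paper does \emph{not} pass through a smooth point and Lemma~\ref{L:SmoothNull}; instead it invokes \cite[Lem.~4.5.2]{BCR}: if two disjoint nonempty open semi-algebraic subsets sit inside a ball $B_\epsilon$, the complement has dimension $\ge n-1$. Applied with $U_1=\{f>0\}\cap B_\epsilon$ and $U_2=\{f<0\}\cap B_\epsilon$, this gives $\dim(Z(f)\cap B_\epsilon)\ge n-1$, hence $\dim(Z(f)\cap U)=n-1$, and Proposition~\ref{P:K} (the dimension criterion for irreducible $f$) immediately yields $(f)=I(Z(f)\cap U)$. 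No smooth point is ever produced.

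The gap in your approach is the step ``$f(y,z)=z^2 h(y,z)$, so the sign of $f$ on each side of $C$ agrees with the sign of $h(y,0)$.'' This is only valid where $h(y,0)\neq 0$. If $h(y,0)=0$, nothing prevents $f$ from vanishing to \emph{odd} order $\ge 3$ in the normal direction (locally like $z^3$), in which case $f$ still changes sign across $C$ and there is no contradiction. Your argument never rules this out, and it does not use irreducibility of $f$ at this point, so it cannot. The conclusion you want is nonetheless true: since $f$ is irreducible and some $\partial_{x_i}f$ is nonzero of lower degree, one has $\dim Z(f,\partial_{x_i}f)\le n-2$, so an $(n-1)$-dimensional cell $C\subseteq Z(f)$ cannot lie entirely in the singular locus. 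Invoking that fact (or, more simply, following the paper and using Proposition~\ref{P:K} directly) closes the gap.
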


\begin{proof}
(1) $\implies$ (2).  Assuming (1), then from Lemma \ref{L:NullSmooth}, there is a point $\alpha^{(0)}\in Z(f)\cap U$ with $(\partial_{x_1}f(\alpha^{(0)}),\dots,\partial_{x_n}f(\alpha^{(0)}))\ne 0\in \mathbb K^n$.
In other words,  there is an $i$ such that $\partial_{x_i}f(\alpha^{(0)})\ne 0$.   Then consider the polynomial in one variable 
$$
\phi(x_i):=f(\alpha_1^{(0)},\dots,x_i,\dots, \alpha_n^{(0)}).
$$
We have  $\phi(\alpha_i^{(0)})=0$.  But since   $\phi'(\alpha_i^{(0)})=\partial_{x_i}f(\alpha_i^{(0)})$ is non-zero, the function $\phi(x_i)$ is monotone in a  real interval around $\alpha_i^{(0)}$, and so it changes sign  \cite[Cor.~1.2.7]{BCR}.  Therefore $f$ changes sign.  (Note that we did not use that $f$ was irreducible.)

(2) $\implies$ (1).     \cite[Lem.~4.5.2]{BCR} states the following:  \emph{Let $B_\epsilon\subseteq \mathbb K^n$ be an open ball (including the case where $B_\epsilon = \mathbb K^n$) and let $U_1$ and $U_2$ be 
two disjoint nonempty semi-algebraic  open subsets of $B_\epsilon$. Then we have $\dim(B_\epsilon-(U_1\cup U_2)) \ge n- 1$}.  Now apply this in our situation, with 
$$U_1=\{\alpha\in B_\epsilon: f(\alpha)>0\} \ \text{ and }
 \ U_2=\{\alpha\in B_\epsilon: f(\alpha)<0\},$$
 so that $B_\epsilon-(U_1\cup U_2)=Z(f)\cap B_\epsilon$.
Then 
$$
n-1=\dim Z(f)\ge \dim(Z(f)\cap U) \ge \dim (Z(f)\cap B_\epsilon)\ge  n -1.
$$
As mentioned above, we have the equality  $\dim Z(f)=n-1$ on the left since  $f$ is  neither a zero divisor nor a unit.
 Note that so far we did not use that $f$ was irreducible. To conclude (1), we use Proposition \ref{P:K}, and the assumption that $f$ is irreducible.
\end{proof} 

\subsection{Proof of Theorem \ref{T:1G}}\label{S:T1Gpf}

\begin{proof}[Proof of Theorem \ref{T:1G}]

  We have now proved the theorem under the hypothesis that $f$ is irreducible (Proposition \ref{P:K}, Lemma \ref{L:NullSmooth}, Lemma \ref{L:SmoothNull}, Lemma \ref{L:Sign}).  We now reduce to this case.

First, it is clear that (2) $\iff$ (3) $\iff$ (4), from the irreducible case.  
Also, it is clear that if (1) holds (i.e., $(f)=I(Z(f)\cap U)$), we must have that $m_1=m_2=\cdots =m_r=1$.
Indeed, if say $m_1>1$, then $f_1f_2^{m_2}\cdots f_r^{m_r}\in I(Z(f)\cap U)$, but for degree reasons $f_1f_2^{m_2}\cdots f_r^{m_r}$  is not a multiple of $f=f_1^{m_1}\cdots f_r^{m_r}$ and thus (1) fails.    
So from here on, we assume $m_1=m_2=\cdots =m_r=1$.

(1) $\implies$ (2).   Suppose that (2) fails.  Then there is some  $i,j$ so that $ \partial_{x_j}f_i\in I(Z(f_i)\cap U)$ and is nonzero.   Therefore   $f_1\cdots \partial_{x_j}f_i \cdots f_r\in  I(Z(f)\cap U))$ and is nonzero.  But for degree reasons, it is not a multiple of $f=f_1\cdots f_i\cdots f_r$ and thus (1) fails.  

(2) $\implies$ (1).  
This follows from the fact that 
\begin{align*}
\bigcap_{i=1}^r (f_i)& = (f_1\cdots f_r)& (\mathbb K[x_1,\dots,x_n] \text{ is a UFD})\\
&=(f)\\
&\subseteq I(Z(f)\cap U))\\
&=I\left (\bigcup_{i=1}^r  \left(Z(f_i)\cap U\right)\right)\\
&=\bigcap_{i=1}^r  I(Z(f_i)\cap U),
\end{align*}
since, assuming  (2) and the special case of Theorem \ref{T:1G} for irreducible polynomials,   then for all $i$, we have $(f_i)=I(Z(f_i)\cap U)$, forcing the containment above to be an  equality.  \end{proof}

\end{document}